\newcommand{\dsgd}{vanilla DecenSGD}
\newcommand{\algo}{\textsc{Matcha}}
\newcommand{\graph}{\mathcal{G}}
\newcommand{\eset}{\mathcal{E}}
\newcommand{\vset}{\mathcal{V}}
\newcommand{\nworkers}{m}
\newcommand{\sg}{g}
\newcommand{\Sg}{\mathbf{G}}
\newcommand{\tg}{\nabla F}
\newcommand{\Tg}{\nabla \mathbf{F}}
\newcommand{\lr}{\eta}
\newcommand{\lip}{L}
\newcommand{\obj}{F}
\newcommand{\vbnd}{\sigma^2}
\newcommand{\x}{\mathbf{x}} 
\newcommand{\avgx}{\overline{\mathbf{x}}}
\newcommand{\X}{\mathbf{X}}
\newcommand{\mixmat}{\mathbf{W}} 
\newcommand{\lap}{\mathbf{L}} 
\newcommand{\I}{\mathbf{I}} 
\newcommand{\J}{\mathbf{J}} 
\newcommand{\maxd}{\Delta}
\newcommand{\nmatching}{M}
\newcommand{\actrv}{\mathsf{B}}
\crefname{equation}{}{}
\Crefname{equation}{}{}
\crefname{thm}{theorem}{theorems}
\Crefname{thm}{Theorem}{Theorems}
\crefname{clm}{claim}{claims}
\Crefname{clm}{Claim}{Claims}
\Crefname{coro}{Corollary}{Corollaries}
\Crefname{lem}{Lemma}{Lemmas}
\Crefname{sec}{Section}{Sections}
\crefname{app}{appendix}{appendices}
\Crefname{app}{Appendix}{Appendices}
\Crefname{part}{Part}{Parts}
\crefname{prop}{proposition}{propositions}
\Crefname{prop}{Proposition}{Propositions}
\Crefname{propty}{Property}{Properties}
\crefname{figure}{fig.}{figures}
\Crefname{figure}{Figure}{Figures}
\crefname{defn}{definition}{definitions}
\Crefname{defn}{Definition}{Definitions}
\crefname{fact}{fact}{facts}
\Crefname{fact}{Fact}{Facts}
\crefname{appendix}{appendix}{appendices}
\Crefname{appendix}{Appendix}{Appendices}
\crefname{algo}{algorithm}{algorithms}
\Crefname{algo}{Algorithm}{Algorithms}
\crefname{algorithm}{algorithm}{algorithms}
\Crefname{algorithm}{Algorithm}{Algorithms}
\crefname{conj}{conjecture}{conjectures}
\Crefname{conj}{Conjecture}{Conjectures}
\crefname{obs}{observation}{observations}
\Crefname{obs}{Observation}{Observations}
\crefname{assump}{assumption}{assumptions}
\Crefname{assump}{Assumption}{Assumptions}
\crefname{rem}{remark}{remarks}
\Crefname{rem}{Remark}{Remarks}
\title{\textsc{Matcha}: Speeding Up Decentralized SGD via Matching Decomposition Sampling}
\author{
  Jianyu Wang\thanks{Correspondence to \texttt{jianyuw1@andrew.cmu.edu}} \\
  Carnegie Mellon University
   \And
 Anit Kumar Sahu \\
  Bosch Center for Artificial Intelligence
  \And
  Zhouyi Yang \\
  Carnegie Mellon University
  \And
  Gauri Joshi \\
  Carnegie Mellon University
  \And 
  Soummya Kar \\
  Carnegie Mellon University
}
\begin{document}
\maketitle

\begin{abstract}
This paper studies the problem of error-runtime trade-off, typically encountered in decentralized training based on stochastic gradient descent (SGD) using a given network. While a denser (sparser) network topology results in faster (slower) error convergence in terms of iterations, it incurs more (less) communication time/delay per iteration. 
In this paper, we propose {\algo}, an algorithm that can achieve a win-win in this error-runtime trade-off for any arbitrary network topology. The main idea of {\algo} is to parallelize inter-node communication by decomposing the topology into matchings. To preserve fast error convergence speed, it identifies and communicates more frequently over critical links, and saves communication time by using other links less frequently. Experiments on a suite of datasets and deep neural networks validate the theoretical analyses and demonstrate that {\algo} takes up to $5 \times$ less time than vanilla decentralized SGD to reach the same training loss.
\end{abstract}


\section{Introduction}
Due to the massive size of training datasets used in state-of-the-art machine learning systems, distributing the data and the computation over a network of worker nodes, i.e., data parallelism has attracted a lot of attention in recent years \citep{nedic2018network,mcmahan2016communication}. In this paper, we consider a decentralized setting without central coordinators (i.e., parameter servers) where nodes can only exchange parameters or gradients with their neighbors. This scenario is common and useful when performing training in sensor networks, multi-agent systems, as well as federated learning on edge devices.

\textbf{Error-Runtime Trade-off in Decentralized SGD.}
Previous works in distributed optimization have extensively studied the error convergence of decentralized SGD in terms of iterations or communication rounds \citep{nedic2009distributed,duchi2012dual,yuan2016convergence,zeng2016nonconvex,towfic2016excess,jakovetic2018convergence,scaman2018optimal}, mostly for (strongly)~convex loss functions. Recent works have extended the analyses to smooth and non-convex loss functions and subsequently applied it to distributed deep learning \citep{lian2017can,jiang2017collaborative,wang2018cooperative,assran2018stochastic}. \emph{However, all of the aforementioned works only focus on iterations complexity, i.e., the number of iterations required to achieve a target error. They do not explicitly consider or demonstrate how the topology affects the training runtime, that is, wall-clock time required to complete each iteration}. 
Densely-connected networks, when used appropriately, give faster error convergence. However, they incur a higher communication delay per iteration, which typically increases with the maximal node degree. Thus, in order to achieve the fastest error-versus-wallclock time convergence, it is imperative to jointly analyze the iteration complexity as well as the runtime per iteration. Only a few previous works explore the second factor from a theoretical perspective, see \citep{dutta2018slow, gupta2016model} --- these works consider the centralized parameter server model. 
\begin{figure}[!ht]
    \centering
    \begin{subfigure}{.18\textwidth}
    \centering
    \includegraphics[width=\textwidth]{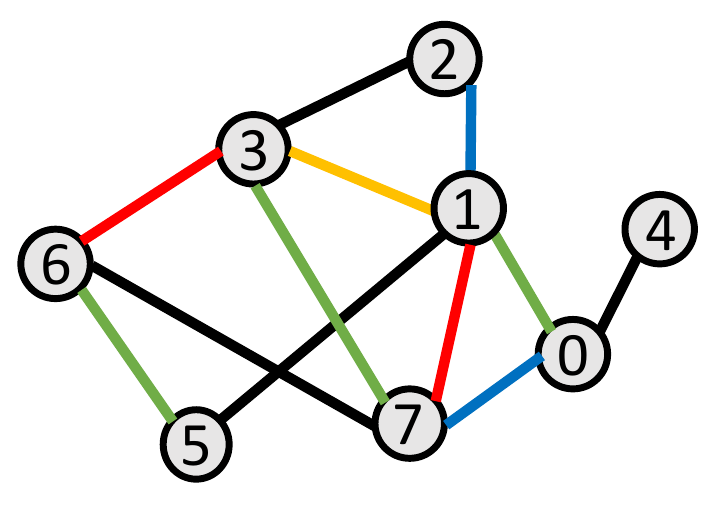}
    \caption{Base topology.}
    \label{fig:base_topo}
    \end{subfigure}%
    ~
    \begin{subfigure}{.25\textwidth}
    \centering
    \includegraphics[width=\textwidth]{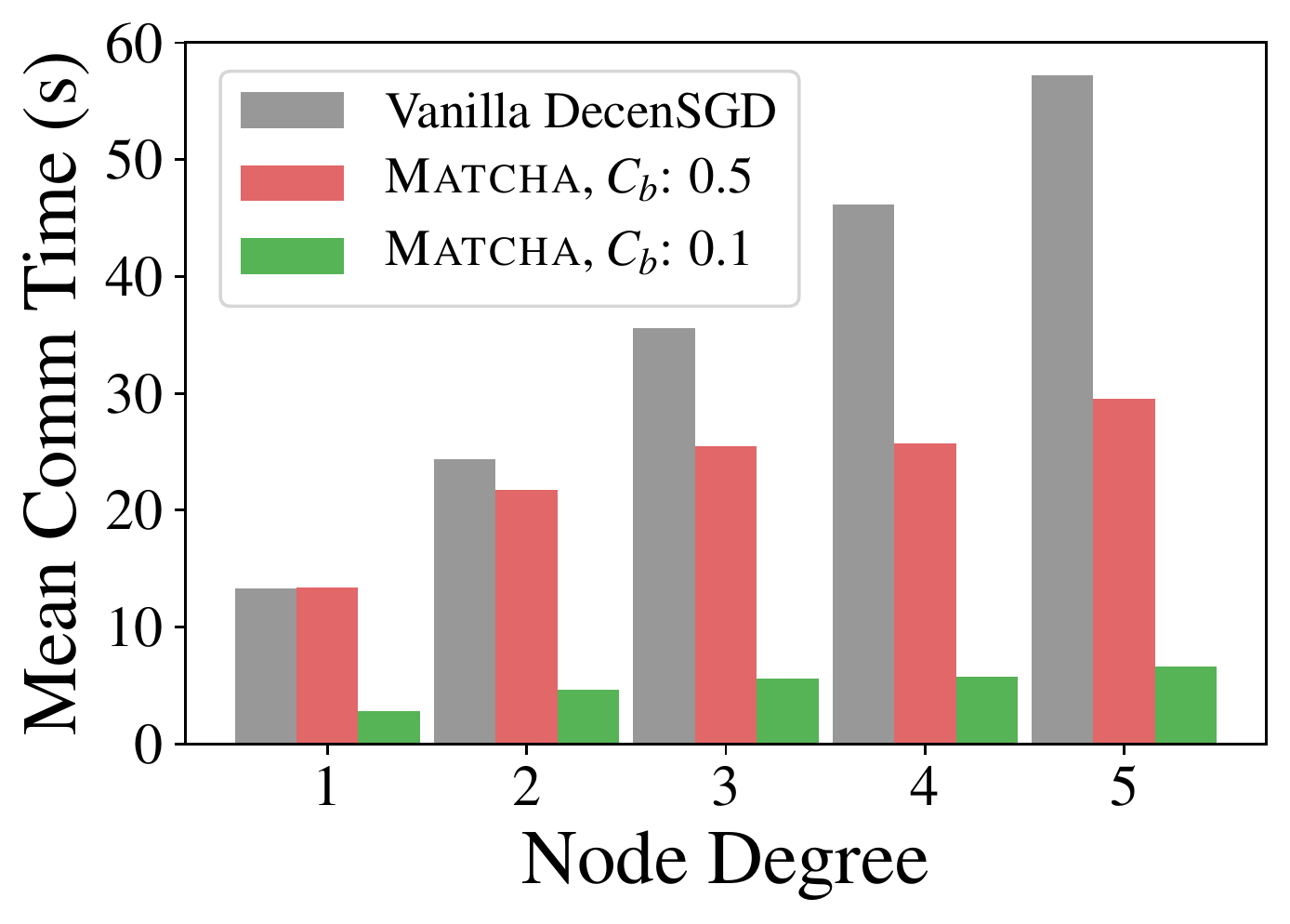}
    \caption{Comm. time per epoch.}
    \end{subfigure}%
    ~
    \begin{subfigure}{.25\textwidth}
    \centering
    \includegraphics[width=\textwidth]{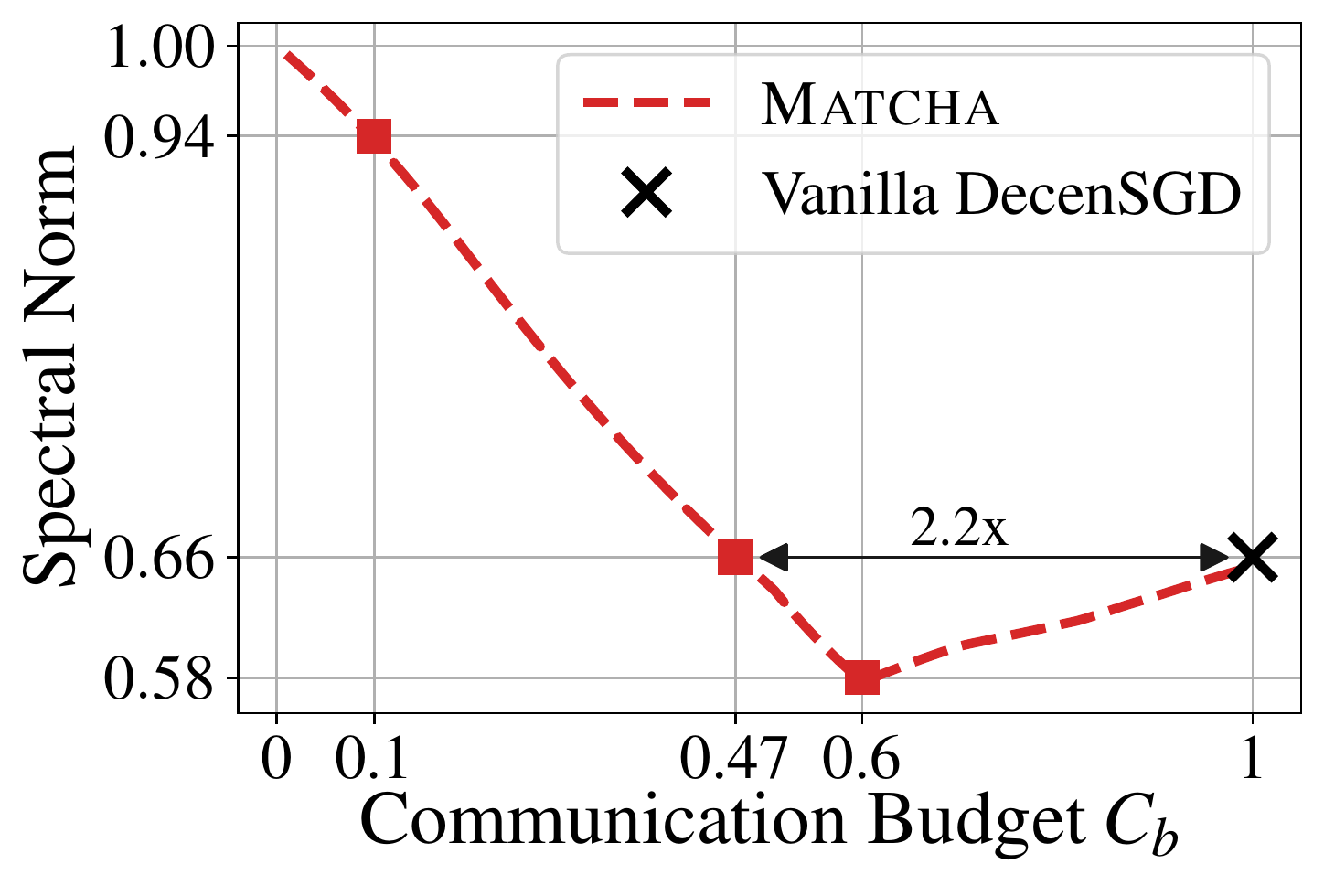}
    \caption{Spectral norm.}
    \label{fig:g1_rho}
    \end{subfigure}%
    ~
    \begin{subfigure}{.25\textwidth}
    \centering
    \includegraphics[width=\textwidth]{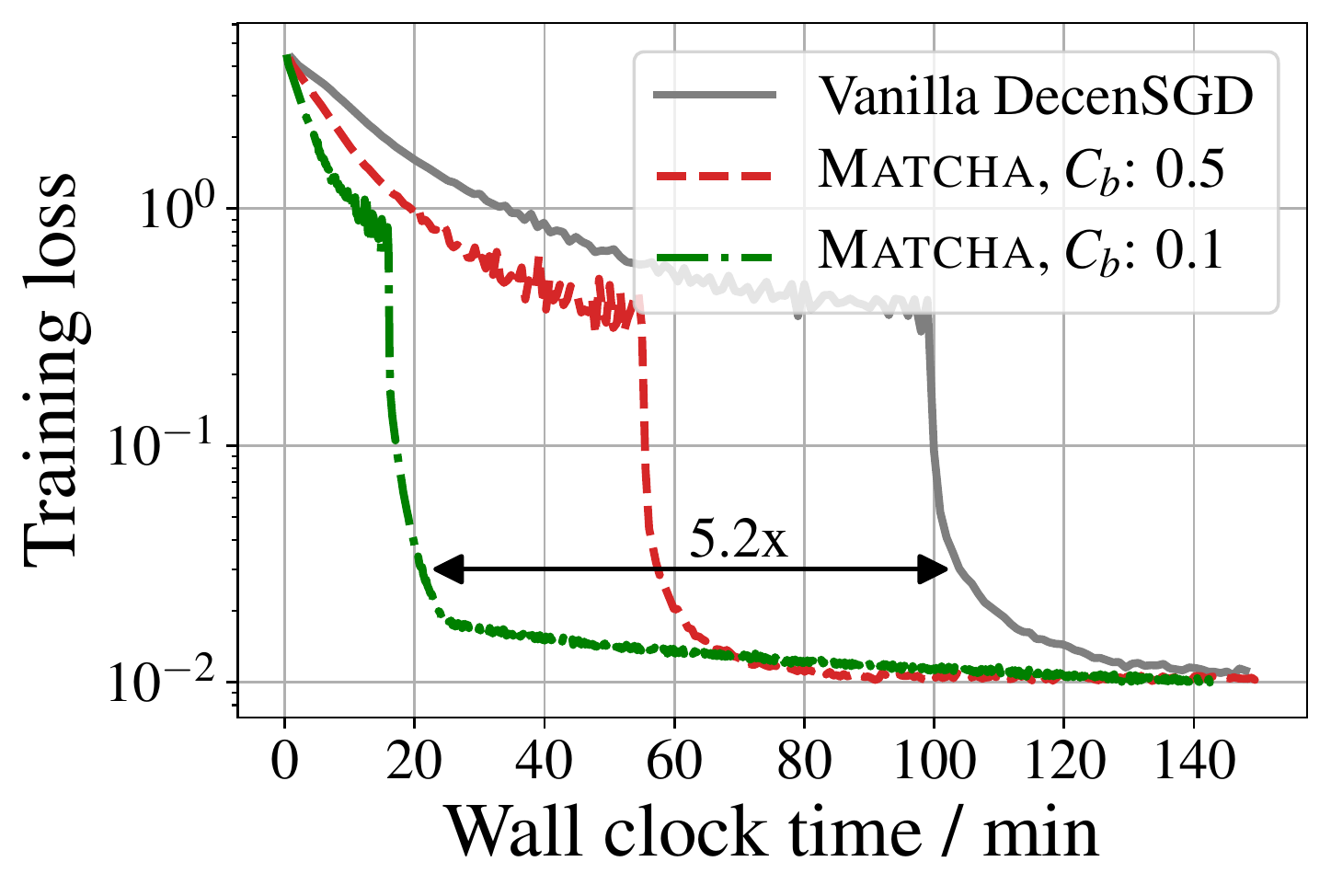}
    \caption{Training curves.}
    \end{subfigure}%
    ~
    \caption{Comparison of vanilla decentralized SGD (DecenSGD) and \textsc{Matcha}. (a) An example base topology generated using Erd\H{o}s-R\'enyi model. (b) {\algo} reduces the communication time non-uniformly. Nodes with higher degrees (for instance node $1$ with degree $5$) tend to have more redundant links. (c) Lower spectral norm yields better convergence rate. Communication budget $C_b$ represents the average frequency of communication over the links in the network. (d) Loss-versus-time curves when training WideResNet on CIFAR-100. } 
    \label{fig:comm_reduc}
    \vspace{-5pt}
\end{figure}

\textbf{Main Contributions.}
To the best of our knowledge, this is the first work that attempts to strike the best error-runtime trade-off in decentralized SGD by carefully \emph{tuning the frequency of inter-node communication}. We propose {\algo}, a decentralized SGD method based on matching decomposition sampling, and demonstrate the effectiveness of {\algo} both theoretically and empirically. 
The key ideas and main contributions of this paper are as follows.

\begin{enumerate}

\item \textbf{Saving Communication Time by Using Disjoint Links.} The communication delay of the model synchronization step in decentralized SGD is typically proportional to the maximal node degree \cite{tsianos2012communication}. To reduce this communication delay without hurting convergence, we propose a simple, yet powerful idea of decomposing the graph into matchings. Each matching is a set of disjoint links that communicate in parallel, as illustrated by the colored links in \Cref{fig:comm_reduc}(a). The probability of activating each matching is optimized so as to maximize the algebraic connectivity of the expected topology (captured by the second smallest eigenvalue $\lambda_2$ of the graph Laplacian matrix). This results in more frequent communication over connectivity-critical links (ensuring fast error-versus-iterations convergence) and less frequent over other links (saving inter-node communication time). 


\item \textbf{Flexible Communication Budget.} {\algo} allows the system designer to set a flexible communication budget $C_b$, which represents the average frequency of communication over the links in the network. When $C_b = 1$, {\algo} reduces to vanilla decentralized SGD studied in \cite{lian2017can}. When we set $C_b < 1$, {\algo} carefully reduces the communication frequency of each link, depending upon its importance in maintaining the overall connectivity of the graph. For example, observe in \Cref{fig:comm_reduc}(b) that by setting $C_b=0.1$, {\algo} achieves a $1/0.1 = 10 \times$ reduction in expected communication time per iteration. Note that the communication reduction is much larger for higher-degree nodes than for low-degree nodes, as shown in \Cref{fig:comm_reduc}(b). This judicious asymmetry in the communication reduction helps {\algo} to preserve fast error-versus-iterations convergence.  


\item \textbf{Same or Faster Error Convergence than Vanilla Decentralized SGD.} In \Cref{sec:analysis} we present a convergence analysis of {\algo} for non-convex objectives and illustrate the dependence of the error on $\rho$, the spectral norm of the mixing matrix (defined formally later). 
This analysis shows that for a suitable communication budget, {\algo} achieves the same or smaller $\rho$ as vanilla decentralized SGD --- a smaller $\rho$ implies faster error convergence. For example, observe in \Cref{fig:comm_reduc}(c) that {\algo} has the same spectral norm as vanilla decentralized SGD (DecenSGD) with a $2.2 \times$ less communication budget per iteration, and if we set $C_b = 0.6$ then the spectral norm is even lower. \emph{In this case, contrary to intuition, {\algo} not only reduces the per-iteration communication delay but also gives faster error-versus-iterations convergence.} 

\item \textbf{Experimental Results on Error-versus-wallclock Time Convergence.} In \Cref{sec:exps} we evaluate the performance of {\algo} on a suite of deep learning tasks, including image classification on CIFAR-10/100 and language modeling on Penn Treebank, and for several base graph topologies including Erd\H{o}s-R\'enyi and geometric graphs. The empirical results consistently corroborate the theoretical analyses and show that {\algo} can get up to $5.2 \times$ reduction in wall-clock time (computation plus communication time) to achieve the same training accuracy as vanilla decentralized SGD, as illustrated in \Cref{fig:comm_reduc}(d). Moreover, {\algo} achieves test accuracy that is comparable or better than vanilla decentralized SGD. 

\item \textbf{Extendable to Other Subgraphs and Computations.} While we currently decompose the topology into matchings, our approach can be extended to other sub-graphs such as edges or cliques. Furthermore, going beyond decentralized SGD, the core idea of {\algo} is extendable to any distributed computation or consensus algorithm that requires frequent synchronization between neighboring nodes.
\end{enumerate}

%


\textbf{Connection to Gradient Compression and Quantization Techniques.} By reducing the frequency of inter-node communication, {\algo} effectively performs more local SGD updates at each node between model synchronization steps, i.e., consensus steps. Thus, {\algo} belongs to the class of local-update SGD methods recently studied in \cite{stich2018local,wang2018cooperative, Wang2018Adaptive}. An orthogonal way of reducing inter-node communication is to compress or quantize inter-node model updates \citep{tang2018communication,koloskova2019decentralized}. These gradient compression techniques reduce the amount of data that is transmitted per round, whereas local update methods such as {\algo} reduce the frequency of communication. Besides, compression techniques involve encoding and decoding data at each iteration, which incurs additional overhead. {\algo} does not incur such additional overhead at runtime -- the sequence of subgraphs is pre-determined before the training starts. {\algo} can be easily combined with these complementary compression techniques to give a further reduction in the overall communication time. 


\section{Problem Formulation and Preliminaries}
Consider a network of $\nworkers$ worker nodes. The communication links connecting the nodes are represented by an arbitrary possibly sparse undirected connected graph $\graph$ with vertex set $\vset = \{1, 2, \dots, \nworkers\}$ and edge set $\eset \subseteq \vset \times \vset$. Each node $i$ can only communicate (i.e., exchange model parameters or gradients) with its neighbors, that is, it can communicate with node $j$ only if $(i,j)\in \eset$. 

Each worker node $i$ only has access to its own local data distribution $\mathcal{D}_i$. Our objective is to use this network of $m$ nodes to train a model using the joint dataset. In other words, we seek to minimize the objective function $F(\x)$, which is defined as follows:
\begin{align}\label{eqn:obj}
\begin{split}
    &F(\x) \triangleq \frac{1}{\nworkers}\sum_{i=1}^\nworkers \obj_i(\x) = \frac{1}{\nworkers}\sum_{i=1}^\nworkers \Exs_{\mathbf{s} \sim \mathcal{D}_i}\brackets{\ell(\x;\mathbf{s})}
\end{split}
\end{align}
where $\x$ denotes the model parameters (for instance, the weights and biases of a neural network), $\obj_i(\x)$ is the local objective function, $\mathbf{s}$ denotes a single data sample, and $\ell(\x;\mathbf{s})$ is the loss function for sample $\mathbf{s}$, defined by the learning model.


\textbf{Decentralized SGD (DecenSGD).} 
Decentralized SGD can be tracked back to the seminal work of \citep{tsitsiklis1986distributed}. It is a natural yet effective way to optimize the empirical risk \Cref{eqn:obj} in the considered decentralized setting. The algorithm simultaneously incorporates the obtained information from the neighborhood~(consensus) and the local gradient information as follows\footnote{One can also use another update rule: $\x_{i}^{(k+1)} = \sum_{j=1}^{\nworkers} W_{ji}\x_j^{(k)} - \sg(\x_i^{(k)};\xi_i^{(k)})$. All insights and conclusions in this paper will remain the same.}:
\begin{align}\label{eqn:udpate_rule}
    \x_{i}^{(k+1)} = \underbrace{\sum_{j=1}^{\nworkers} W_{ij}}_{\text{consensus step}}\underbrace{\brackets{\x_j^{(k)} - \sg(\x_j^{(k)};\xi_j^{(k)})}}_{\text{local gradient step}}
\end{align}
where $\xi_j^{(k)}$ denotes a mini-batch sampled 
from local data distribution $\mathcal{D}_j$ at iteration $k$, $\sg(\x;\xi)$ denotes the stochastic gradient, and $W_{ij}$ is the $(i,j)$-th element of the mixing matrix $\mixmat \in \mathbb{R}^{\nworkers \times \nworkers}$. In particular, $W_{ij}\neq 0$ only if node $i$ and node $j$ are connected, i.e., $(i,j) \in \eset$. Setting the mixing matrix $\mixmat$ to be symmetric and doubly stochastic is one way to ensure that the nodes reach consensus in terms of converging to the same stationary point. For instance, if node $1$ is only connected to nodes $2$ and $3$, then the first row of $\mixmat$ can be $[1-2\alpha,\alpha,\alpha,0,\dots,0]$, where $\alpha$ is constant.

\textbf{Communication Time Model.} 
In decentralized SGD methods, each node communicates with all of its neighbors, and the delay in performing this local synchronization typically increases with the node degree. Thus, the node with the highest degree in the graph becomes the bottleneck and the communication time per iteration can be modeled as:
\begin{align}
    \text{Comm. Time per Iteration} = t(\maxd(\graph)),
\end{align}
where $\maxd$ is the maximal degree of the graph and $t(\cdot)$ is a monotonically increasing function. For brevity of presentation, we will focus on a linear-scaling rule (i.e., $t(\Delta)= \Delta$) in this paper, as considered in previous works \citep{tsianos2012communication,chow2016expander, lian2017asynchronous} and also observed in experimental result \Cref{fig:comm_reduc}(b). But {\algo} can be easily extended to other delay-scaling rule $t(\cdot)$. Without loss of generality, we can assume the communication (sending and receiving model parameters) over one link costs $1$ unit of time. Thus, the communication per iteration takes at least $t(\maxd)$ units of time.




\textbf{Preliminaries from Graph Theory.}
The communication graph $\graph(\vset,\eset)$ can be abstracted as an adjacency matrix $\mathbf{A} \in \mathbb{R}^{\nworkers \times \nworkers}$. In particular, $A_{ij}=1$ if $(i,j)\in \eset$; $A_{ij}=0$ otherwise. The graph Laplacian $\lap$ is defined as $\lap = \text{diag}(d_1,\dots,d_\nworkers) - \mathbf{A}$, where $d_i$ denotes the $i$-th node's degree. When $\graph$ is a connected graph, the second smallest eigenvalue $\lambda_2$ of the graph Laplacian is strictly greater than $0$ and referred to as \emph{algebraic connectivity} in \cite{bollobas2013modern}. A larger value of $\lambda_2$ implies a denser graph. Moreover, we will use the notion of \emph{matching},  defined as follows: a matching in $\graph$ is a subgraph of $\graph$, in which each vertex is incident with at most one edge.

\section{{\algo}: Proposed Matching Decomposition Sampling Strategy}
Following the intuition that it is beneficial to \emph{communicate over critical links more frequently and less over other links}, the algorithm consists of three key steps as follows, all of which can be done before training starts. A brief illustration is also shown in \Cref{fig:illustration}.
\begin{figure}[!htbp]
    \centering
    \includegraphics[width=.75\textwidth]{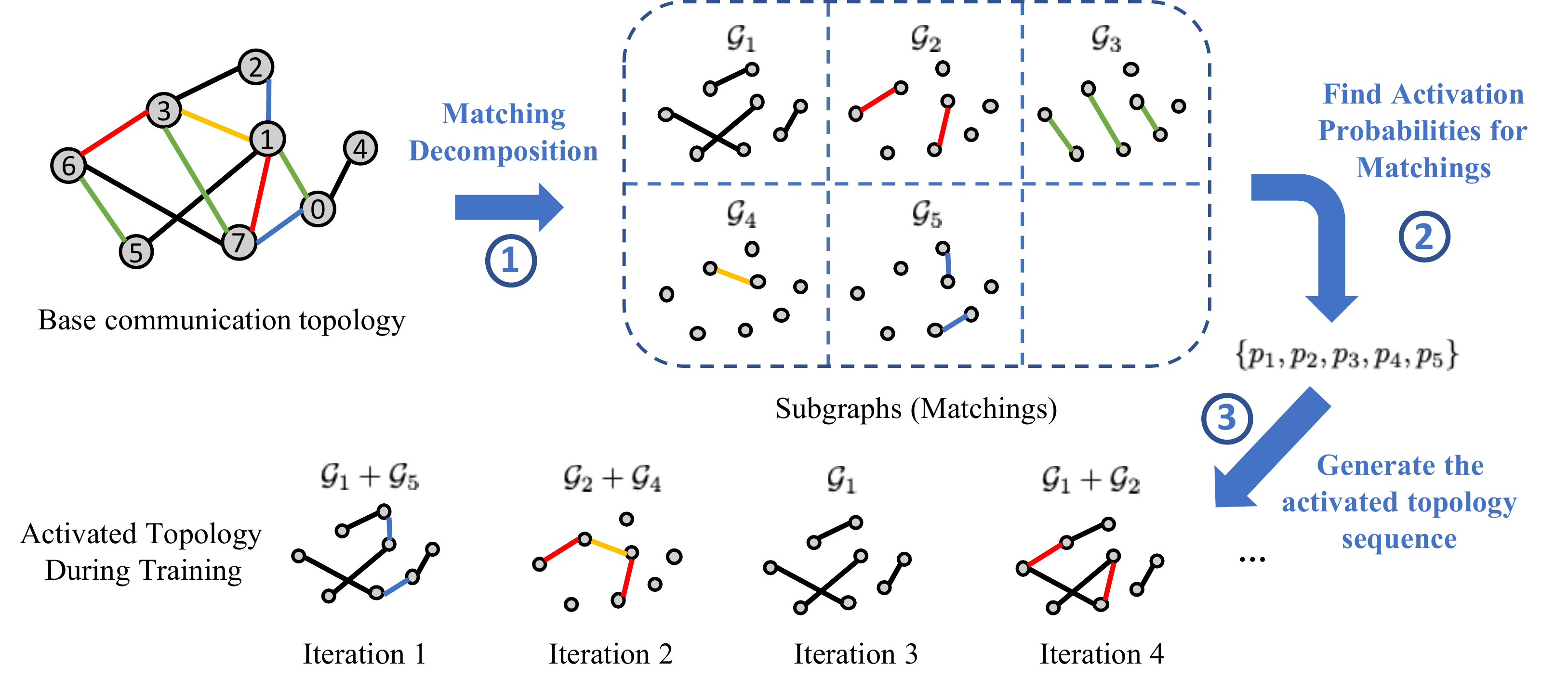}
    \caption{Illustration of {\algo}. Given the base communication graph, we decompose it into disjoint subgraphs (in particular, matchings, in order to allow parallel communications). Then, at each communication round, we carefully sample a subset of these matchings to construct a sparse subgraph of the base topology. Worker nodes are synchronized only through the activated topology.}
    \label{fig:illustration}
\end{figure}

\textbf{Step 1: Matching Decomposition.}
First, we decompose the base communication graph into total $\nmatching$ disjoint matchings, i.e., $\graph(\vset,\eset) = \bigcup_{j=1}^\nmatching \graph_j(\vset,\eset_j)$ and $\eset_i\bigcap\eset_j = \varnothing,\forall i \neq j$, as illustrated in \Cref{fig:illustration}. We can use any matching decomposition algorithm to find the $\nmatching$ disjoint matchings. 
For example, the polynomial-time edge coloring algorithm in \cite{misra1992constructive}, provably guarantees that the number of matchings $\nmatching$ equals to either $\maxd(\graph)$ or $\maxd(\graph)+1$, where $\maxd(\graph)$ is the maximal node degree.

For the communication time model where the delay is a monotonically increasing function of the node degree, using matchings is a natural solution because they minimize the effective node degree while maximizing the number of parallel information exchanges between nodes. Since all the nodes in a matching have degree one, the inter-node links are disjoint and these links can operate in parallel. Thus, according to our communication delay model, the communication time for one matching is exactly $t(1) = 1$ unit. 

Observe that communicating sequentially over all the matchings is also a simple and efficient way to implement the consensus step in decentralized training. 
Thus, the communication time per iteration will be equal to the number of matchings $\nmatching$.
%

\textbf{Step 2: Computing Matching Activation Probabilities.}
In order to control the total communication time per iteration, we assign an independent Bernoulli random variable $\actrv_j$, which is $1$ with probability $p_j$ and $0$ otherwise, to each matching $\forall j \in \{1,\dots,\nmatching\}$. The links in matching $j$ will be used for information exchange between the corresponding worker nodes only when the realization of $\actrv_j$ is $1$. As a result, the communication time per iteration can be written as
\begin{align}
    \textrm{Expected Comm. Time} = \Exs\brackets{\sum_{j=1}^\nmatching \actrv_j} = \sum_{j=1}^\nmatching p_j. \label{eqn:mct}
\end{align}
where $p_j$ is the \emph{activation probability} of the $j^{th}$ matching. When all $p_j$'s equal to $1$, all the $M$ matchings are activated in every iteration. Thus, when $p_j = 1$ for all $j \in \{1, \dots, \nmatching\}$, the algorithm reduces to {\dsgd} and takes $\nmatching$ units of time to finish one consensus step.
%
%
%
To reduce the expected communication time per iteration given by \eqref{eqn:mct}, we define \emph{communication budget} $C_b>0$, and impose the constraint $\sum_{j=1}^{\nmatching} p_j \leq C_b M$. 
For example, $C_b=0.1$ means that the expected communication time per iteration of {\algo} be $10\%$ of the time per iteration of {\dsgd}). 

As mentioned before, the key idea of {\algo} is to give more importance to critical links. This is achieved by choosing a set of activation probabilities that maximize the connectivity of the expected graph given a communication time constraint. That is, we solve the optimization problem: 
\begin{align}\label{eqn:opt_p}
\begin{split}
    \max_{p_1,\dots,p_\nmatching} \quad & \lambda_2\parenth{\textstyle{\sum_{j=1}^{\nmatching}} p_j \lap_j} \\
    \text{subject to} \quad & \textstyle{\sum_{j=1}^\nmatching} p_j \leq C_b \cdot \nmatching, \\
    & \ 0 \leq p_j \leq 1, \ \forall j \in \{1,2,\dots,\nmatching\},
\end{split}
\end{align}
where $\lap_j$ denotes the Laplacian matrix of the $j$-th subgraph and $\sum_{j=1}^{\nmatching} p_j \lap_j$ can be considered as the Laplacian of the expected graph. Moreover, recall that $\lambda_2$ represents the algebraic connectivity and is a concave function \citep{kar2008sensor,bollobas2013modern}. Thus, it follows that \eqref{eqn:opt_p} is a convex problem and can be solved efficiently. Typically, a larger value of $\lambda_2$ implies a better-connected graph.

\textbf{Step 3: Generating Random Topology Sequence.}
Given the activation probabilities obtained by solving \eqref{eqn:opt_p}, in each iteration, we generate an independent Bernoulli random variable $B_j$ for each matching. Thus, in the $k$-th iteration, the activated topology $\graph^{(k)} = \bigcup_{j=1}^\nmatching \actrv_j^{(k)} \graph_j$, which is sparse or even disconnected. Consequently, one can also obtain the corresponding Laplacian matrix sequence: $\lap^{(k)} = \sum_{j=1}^\nmatching \actrv_j^{(k)}\lap_j, \forall k\in\{1,2,\dots\}$. \emph{All of these information can be obtained and assigned apriori to worker nodes before starting the training procedure}.

\textbf{Extension to Other Design Choices.}
Our proposed {\algo} framework of activating different subgraphs in each iteration is very general -- it can be extended to various other delay models, graph decomposition methods and algorithms involving decentralized averaging. For example, instead of using a linear-delay scaling model, one can assume the communication time is a general increasing function $t(\Delta(\graph))$. In this case, we only need to change the constraint in \Cref{eqn:opt_a} to $t(\sum_{j=1}^\nmatching p_j) \leq C_b\cdot t(M)$. Instead of activating all matchings independently, one can choose to activate only one matching at each iteration. Instead of assuming all links cost same amount of time, one can model the communication time for each link as a random variable and modify the formula \Cref{eqn:mct} accordingly. Finally, instead of decomposing the base topology into matchings, each subgraph can be a single edge or a clique in the base graph. 

\textbf{Why not use Periodic Decentralized SGD?} A naive way to reduce the communication time is to perform infrequent synchronization, also called \emph{Periodic DecenSGD (P-DecenSGD)}, and studied in previous works \citep{tsianos2012communication,wang2018cooperative}. In P-DecenSGD, each node makes several local model updates before synchronizing with its neighbors, that is, all links in the base topology are activated together ($\actrv_1=\cdots=\actrv_\nmatching=1$) after every few iterations. In this case, $C_b$ equals to the communication frequency of the whole base graph. A drawback of this method is that it treats all links equally. By varying the communication frequency across links depending on how they affect the algebraic connectivity of the expected topology, {\algo} can give the same communication reduction as P-DecenSGD but with better error convergence guarantees. In \Cref{sec:analysis,sec:exps}, we will use P-DecenSGD as a benchmark for comparison.


\section{Further Optimizing {\algo} For Decentralized SGD}
In order to make the best use of the sequence of activated subgraphs generated by {\algo} when running decentralized SGD with a time-varying topology, we need to optimize the proportions in which the local models are averaged together in the consensus step of Eqn.~\Cref{eqn:udpate_rule}. A common practice is to use an equal weight mixing matrix as \citep{xiao2004fast,kar2008sensor,duchi2012dual}:
\begin{align}
    \mixmat^{(k)} = \I - \alpha \lap^{(k)} = \I - \alpha \sum_{j=1}^\nmatching \actrv_j^{(k)}\lap_j, \label{eqn:basic_w}
\end{align}
where $\lap^{(k)}=\sum_{j=1}^\nmatching \actrv_j^{(k)}\lap_j$ denotes the graph Laplacian at the $k$-th iteration. Each matrix $\mixmat^{(k)}$ is symmetric and doubly stochastic by construction, and the parameter $\alpha$ represents the weight of neighbor's information in the consensus step. 
To guarantee the convergence of decentralized SGD to a stationary point of smooth and non-convex losses (formally stated in \Cref{thm:basic}), the \emph{spectral norm} $\rho = \opnorm{\Exs[\mixmat^{(k)\top}\mixmat^{(k)}]-\frac{1}{\nworkers}\one\one\tp}$ must be less than $1$. 
In general, a smaller value of $\rho$ leads to a smaller optimization error bound. 

In the following theorem, we show that for {\algo} with arbitrary communication budget $C_b > 0$, one can always find a value of $\alpha$ for which the resulting spectral norm $\rho < 1$, which in turn guarantees the convergence of {\algo} to a stationary point.

\begin{thm}\label{thm:rho}
Let $\{\lap^{(k)}\}$ denote the sequence of Laplacian matrices generated by \textsc{Matcha} with arbitrary communication budget $C_b > 0$ for a connected base graph $\graph$. If the mixing matrix is defined as $\mixmat^{(k)} = \I - \alpha \lap^{(k)}$, then there exists a range of $\alpha$ such that $\rho = \opnorm{\Exs[\mixmat^{(k)\top}\mixmat^{(k)}]-\frac{1}{\nworkers}\one\one\tp} < 1$, which guarantees the convergence of {\algo} to a stationary point. 

The value of $\alpha$ can be obtained by solving the following semi-definite programming problem:
\begin{align}
\label{eqn:opt_a}
\begin{split}
    \min_{\rho,\alpha,\beta} \quad& \rho, \\
    \text{subject to} \quad& \alpha^2 - \beta \leq 0, \\
                    & \I - 2\alpha\overline{\lap} + \beta[\overline{\lap}^2+ 2\widetilde{\lap}] - \frac{1}{\nworkers}\one\one\tp \preceq \rho\I
\end{split}
\end{align}
where $\beta$ is an auxiliary variable, $\overline{\lap} = \sum_{j=1}^\nmatching p_j\lap_j$ and $\widetilde{\lap}=\sum_{j=1}^\nmatching p_j(1-p_j)\lap_j$.
\end{thm}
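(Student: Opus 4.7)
My plan is to expand $\Exs[\mixmat^{(k)\top}\mixmat^{(k)}]$ in closed form using independence of the Bernoulli activations $\actrv_j^{(k)}$ together with the special algebraic structure of matching Laplacians, then verify existence of a suitable $\alpha$ by a small-$\alpha$ perturbation argument, and finally show that the stated SDP is a tight convex reformulation of the problem of choosing the best such $\alpha$.

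First I would observe that $\mixmat^{(k)}$ is symmetric, so $\mixmat^{(k)\top}\mixmat^{(k)} = (\I - \alpha \lap^{(k)})^2 = \I - 2\alpha\lap^{(k)} + \alpha^2(\lap^{(k)})^2$. Taking expectation and using $\Exs[\actrv_j^{(k)}] = p_j$ together with independence across $j$ gives $\Exs[\lap^{(k)}] = \overline{\lap}$ and $\Exs[(\lap^{(k)})^2] = \sum_j p_j \lap_j^2 + \sum_{i\ne j} p_ip_j \lap_i\lap_j$. The crucial structural fact is that each $\lap_j$ is the Laplacian of a matching, hence block-diagonal with $2\times 2$ edge blocks $\bigl(\begin{smallmatrix} 1 & -1 \\ -1 & 1 \end{smallmatrix}\bigr)$ and zero blocks for isolated vertices; each such block squares to twice itself and the edge-blocks have disjoint supports, so $\lap_j^2 = 2\lap_j$. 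Substituting, regrouping $\sum_{i,j} p_ip_j\lap_i\lap_j = \overline{\lap}^2$, and collecting the diagonal correction then yields $\Exs[(\lap^{(k)})^2] = \overline{\lap}^2 + 2\widetilde{\lap}$, and hence
\begin{align*}
  \Exs\bigl[\mixmat^{(k)\top}\mixmat^{(k)}\bigr] - \tfrac{1}{\nworkers}\one\one^\top
  = \I - 2\alpha\overline{\lap} + \alpha^2\bigl(\overline{\lap}^2 + 2\widetilde{\lap}\bigr) - \tfrac{1}{\nworkers}\one\one^\top .
\end{align*}

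Next I would characterize $\rho$ as the operator norm of this matrix. All matching Laplacians share $\one$ as a null vector, hence so do $\overline{\lap}$ and $\widetilde{\lap}$; on the $\one$-direction the whole expression vanishes, so $\rho$ equals the largest eigenvalue on $\one^\perp$. Moreover $\Exs[\mixmat^{(k)\top}\mixmat^{(k)}]$ is PSD as the expectation of a square, so on $\one^\perp$ the matrix is PSD, which lets me replace a two-sided spectral-norm bound by the one-sided LMI $\preceq \rho\I$ appearing in the statement. To exhibit an $\alpha$ with $\rho < 1$, I would use that $\overline{\lap}|_{\one^\perp}$ is positive definite: since the base graph is connected and $C_b > 0$, the feasible choice $p_j = C_b$ for all $j$ shows $\lambda_2(\overline{\lap}) \geq C_b\,\lambda_2(\lap) > 0$, and this persists at any optimizer of the Step~2 program. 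Combined with a uniform PSD upper bound on the quadratic correction, this gives $\rho \leq 1 - 2\alpha\,\lambda_2(\overline{\lap}) + \alpha^2\,\opnorm{\overline{\lap}^2 + 2\widetilde{\lap}}$, which is strictly less than $1$ for every sufficiently small $\alpha > 0$, yielding the required non-empty range.

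Finally I would derive the SDP. Minimizing $\rho$ subject to the exact LMI is jointly non-convex in $(\alpha,\rho)$ because $\alpha^2$ multiplies the PSD matrix $\overline{\lap}^2 + 2\widetilde{\lap}$; introducing an auxiliary scalar $\beta$ with $\beta \geq \alpha^2$ linearizes this coefficient and produces the LMI in the statement, while $\alpha^2 - \beta \leq 0$ is itself a convex (second-order cone) constraint. Since $\overline{\lap}^2 + 2\widetilde{\lap} \succeq 0$, any $\beta > \alpha^2$ only inflates the LHS of the LMI, so at the optimum the constraint tightens to $\beta = \alpha^2$ and the relaxation is exact; combined with the existence argument the SDP optimum is strictly below $1$, which is the hypothesis needed to invoke \Cref{thm:basic} for convergence to a stationary point. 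I expect the main obstacle to be spotting and justifying the matching-specific identity $\lap_j^2 = 2\lap_j$, since it is the only reason the second moment collapses into the compact form $\overline{\lap}^2 + 2\widetilde{\lap}$ that matches the SDP; the rest of the argument (PSD arithmetic, small-$\alpha$ perturbation, and SDP relaxation) is then routine.
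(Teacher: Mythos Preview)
Your proposal is correct and follows essentially the same three-part structure as the paper's proof: compute the closed form $\Exs[(\lap^{(k)})^2] = \overline{\lap}^2 + 2\widetilde{\lap}$ via the matching identity $\lap_j^2 = 2\lap_j$, establish $\lambda_2(\overline{\lap}) > 0$ by comparison with the uniform feasible choice $p_j \equiv C_b$, and argue exactness of the $\beta$-relaxation from $\overline{\lap}^2 + 2\widetilde{\lap} \succeq 0$. The only cosmetic difference is in the existence step: the paper splits via the triangle inequality and explicitly minimizes the quadratic $h_\lambda(\alpha) = (1-\alpha\lambda)^2 + 2\alpha^2\opnorm{\widetilde{\lap}}$ to produce a concrete interval $(0,\,2\alpha^*)$ for $\alpha$, whereas your direct Rayleigh-quotient bound $\rho \le 1 - 2\alpha\lambda_2(\overline{\lap}) + \alpha^2\opnorm{\overline{\lap}^2 + 2\widetilde{\lap}}$ (which in fact reappears in the paper's Appendix~B.2) gives the same conclusion for sufficiently small $\alpha$ with less work.
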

Similar to the optimization problem \Cref{eqn:opt_p}, the semi-definite programming problem \Cref{eqn:opt_a} needs to be solved only once at the beginning of training, and this additional computation time is negligible compared to the total training time.

Note that the activation probabilities in {\algo} implicitly influence the spectral norm as well. Ideally, one should jointly optimize $p_i$'s and $\alpha$ via a formulation like \Cref{eqn:opt_a}. However, the resulting optimization problem is non-convex and cannot be solved efficiently. Therefore, in {\algo}, we separately optimize the $p_i$'s and the parameter $\alpha$. Optimizing $p_i$'s via \Cref{eqn:opt_p} can be thought of as minimizing an upper bound of the spectral norm $\rho$. A more detailed discussion on this is given in Appendix B.

\section{Error Convergence Analysis}\label{sec:analysis}
In this section, we study how the optimization error convergence of {\algo} is affected by the choice of communication budget $C_b$, and how it compares to the convergence of {\dsgd}. In order to facilitate the analysis, we define the averaged iterate as $\avgx^{(k)} = \frac{1}{\nworkers}\sum_{i=1}^\nworkers \x_i^{(k)}$ and the lower bound of the objective function as $F_\text{inf}$. Since, we focus on general non-convex loss functions, the quantity of interest is the averaged gradient norm: $\frac{1}{K}\sum_{k=1}^K \Exs[\Arrowvert\nabla \obj(\avgx^{(k)})\Arrowvert^2]$. When it approaches zero, the algorithm converges to a stationary point. The analysis is centered around the following common assumptions.

\begin{assump}\label{assump:lip}
Each local objective function $F_i(\x)$ is differentiable and its gradient is $L$-Lipschitz: $\vecnorm{\tg_i(\x) -\tg_i(\mathbf{y})} \leq \lip \vecnorm{\x - \mathbf{y}}, \forall i \in \{1,2,\dots,\nworkers\}$.
\end{assump}
\begin{assump}\label{assump:unbiased}
Stochastic gradients at each worker node are unbiased estimates of the true gradient of the local objectives: $\Exs[\sg(\x_i^{(k)};\xi_i^{(k)})|\mathcal{F}^{(k)}] = \tg_i(\x_i^{(k)}), \forall i \in \{1,2,\dots,\nworkers\}$, where $\mathcal{F}^{(k)}$ denotes the 
sigma algebra generated by noise in the stochastic gradients and the graph activation probabilities until iteration $k$.
\end{assump}
\begin{assump}\label{assump:var}
The variance of stochastic gradients at each worker node is uniformly bounded: $\Exs[\Arrowvert\sg(\x_i^{(k)};\xi_i^{(k)}) - \tg_i(\x_i^{(k)})\Arrowvert^2 | \mathcal{F}^{(k)}] \leq \vbnd, \forall i \in \{1,2,\dots,\nworkers\}$.
\end{assump}
\begin{assump}\label{assump:diviation}
The deviation of local objectives' gradients are bounded by a non-negative constant: $\frac{1}{\nworkers}\sum_{i=1}^\nworkers \vecnorm{\tg_i(\x)-\tg(\x)}^2 \leq \zeta^2$.
\end{assump}

We first provide a non-asymptotic convergence guarantee for {\algo}, the proof of which is provided in Appendix C.
\begin{thm}[\textbf{Non-asymptotic Convergence of {\algo}}]
\label{thm:basic}
Suppose that all local models are initialized with $\avgx^{(1)}$ and $\{\mixmat^{(k)}\}_{k=1}^K$ is an i.i.d.\ mixing matrix sequence generated by {\algo}. Under Assumptions 1 to 4, and if the learning rate satisfies $\lr\lip \leq \min\{1, (\sqrt{\rho^{-1}}-1)/4\}$, where $\rho$ is the spectral norm (i.e., largest singular value) of matrix $\Exs[\mixmat^{(k)\top}\mixmat^{(k)}]-\frac{1}{\nworkers}\one\one\tp$, then after $K$ iterations,
\begin{align}
    \frac{1}{K}\sum_{k=1}^K &\Exs[\Arrowvert\nabla \obj(\avgx^{(k)})\Arrowvert^2 \leq \parenth{\frac{2[F(\avgx^{(1)})-F_{\text{inf}}]}{\lr K} + \frac{\lr\lip\vbnd}{\nworkers}}\frac{1}{1-2D}+\frac{2\lr^2\lip^2\rho}{1-\sqrt{\rho}}\parenth{\frac{\vbnd}{1+\sqrt{\rho}} + \frac{3\zeta^2}{1-\sqrt{\rho}}}\frac{1}{1-2D}
\end{align}
where $D=6\lr^2\lip^2\rho/(1-\sqrt{\rho})^2 < \frac{1}{2}$. In particular, it is guaranteed that $\rho < 1$ for arbitrary communication budget $C_b$.
\end{thm}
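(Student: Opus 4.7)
The plan is to adapt the standard decentralized-SGD template to Matcha's time-varying, randomly activated mixing matrices. Stacking the worker iterates as rows of $\mathbf{X}^{(k)} \in \mathbb{R}^{m \times d}$ lets the update be written as $\mathbf{X}^{(k+1)} = \mathbf{W}^{(k)}(\mathbf{X}^{(k)} - \eta \mathbf{G}^{(k)})$. Because every $\mathbf{W}^{(k)}$ is doubly stochastic, left-multiplying by $\mathbf{J} = \tfrac{1}{m}\mathbf{1}\mathbf{1}^\top$ shows that the row-average obeys $\bar{\mathbf{x}}^{(k+1)} = \bar{\mathbf{x}}^{(k)} - \tfrac{\eta}{m}\sum_{i=1}^{m} g(\mathbf{x}_i^{(k)};\xi_i^{(k)})$, i.e.\ it evolves as centralized SGD but with gradients evaluated at the individual $\mathbf{x}_i^{(k)}$ rather than at $\bar{\mathbf{x}}^{(k)}$. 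This mismatch is precisely what the analysis must control.

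The first step is to apply $L$-smoothness to $F(\bar{\mathbf{x}}^{(k+1)})$, take conditional expectation given $\mathcal{F}^{(k)}$, and use Assumptions~2 and 3 together with the discrepancy bound $\bigl\|\tfrac{1}{m}\sum_i \nabla F_i(\mathbf{x}_i^{(k)}) - \nabla F(\bar{\mathbf{x}}^{(k)})\bigr\|^2 \le \tfrac{L^2}{m}\sum_i \|\mathbf{x}_i^{(k)} - \bar{\mathbf{x}}^{(k)}\|^2$ coming from Assumption~1. This yields a descent inequality of the form
\[
\mathbb{E}[F(\bar{\mathbf{x}}^{(k+1)})] \le \mathbb{E}[F(\bar{\mathbf{x}}^{(k)})] - \tfrac{\eta}{2}\mathbb{E}\|\nabla F(\bar{\mathbf{x}}^{(k)})\|^2 + \tfrac{\eta L^2}{2m}\,U^{(k)} + \tfrac{\eta^2 L \sigma^2}{2m},
\]
where $U^{(k)} := \mathbb{E}\|(\mathbf{I}-\mathbf{J})\mathbf{X}^{(k)}\|_F^2$ aggregates the consensus error across workers.

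The second step is the consensus-error recursion. The key identity is $(\mathbf{I}-\mathbf{J})\mathbf{X}^{(k+1)} = (\mathbf{W}^{(k)}-\mathbf{J})(\mathbf{X}^{(k)} - \eta\mathbf{G}^{(k)})$ (using $\mathbf{J}\mathbf{W}^{(k)} = \mathbf{J}$), combined with $(\mathbf{W}^{(k)}-\mathbf{J})^\top(\mathbf{W}^{(k)}-\mathbf{J}) = (\mathbf{W}^{(k)})^\top\mathbf{W}^{(k)} - \mathbf{J}$. Taking conditional expectation over the graph randomness, which is independent of $\mathcal{F}^{(k)}$, produces $\mathbb{E}[\|(\mathbf{I}-\mathbf{J})\mathbf{X}^{(k+1)}\|_F^2 \mid \mathcal{F}^{(k)}] \le \rho\,\|(\mathbf{I}-\mathbf{J})(\mathbf{X}^{(k)} - \eta\mathbf{G}^{(k)})\|_F^2$. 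I split $\mathbf{G}^{(k)}$ into its signal $\nabla\mathbf{F}(\mathbf{X}^{(k)})$ and zero-mean noise parts: the noise contributes a clean $\rho\eta^2 m\sigma^2$, while the signal is combined with $(\mathbf{I}-\mathbf{J})\mathbf{X}^{(k)}$ via a Young-type inequality whose parameter is chosen so the contraction rate is exactly $\sqrt{\rho}$ (this choice is the source of the factor $\sqrt{\rho^{-1}}-1$ in the learning-rate condition). Bounding the signal with Assumptions~1 and 4 via $\sum_i \|\nabla F_i(\mathbf{x}_i^{(k)})\|^2 \le 3L^2 U^{(k)} + 3m\zeta^2 + 3m\,\mathbb{E}\|\nabla F(\bar{\mathbf{x}}^{(k)})\|^2$ gives the linear recursion
\[
U^{(k+1)} \le \sqrt{\rho}\,U^{(k)} + \tfrac{\rho\eta^2}{1-\sqrt{\rho}}\bigl(3L^2 U^{(k)} + 3m\zeta^2 + 3m\,\mathbb{E}\|\nabla F(\bar{\mathbf{x}}^{(k)})\|^2\bigr) + \rho\eta^2 m\sigma^2.
\]

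The final step is to unroll this recursion from $U^{(1)}=0$ (all workers start at $\bar{\mathbf{x}}^{(1)}$), which produces a second factor of $1/(1-\sqrt{\rho})$ for the $\zeta^2$ and gradient-norm terms, and a separate geometric sum for the noise with ratio $\rho$ yielding $1/(1-\rho) = 1/[(1-\sqrt{\rho})(1+\sqrt{\rho})]$ — this is where the $1/(1+\sqrt{\rho})$ attached to $\sigma^2$ in the statement originates. Substituting the resulting bound on $\tfrac{1}{K}\sum_k U^{(k)}$ back into the summed descent inequality, telescoping using $F \ge F_{\text{inf}}$, and collecting constants gives the stated bound. The main obstacle is bookkeeping: the recursion feeds $\mathbb{E}\|\nabla F(\bar{\mathbf{x}}^{(k)})\|^2$ terms back into the descent inequality, and the learning-rate assumption $\eta L \le (\sqrt{\rho^{-1}}-1)/4$ is exactly calibrated to make $D = 6\eta^2 L^2 \rho/(1-\sqrt{\rho})^2 < \tfrac{1}{2}$, which keeps the coefficient of $\mathbb{E}\|\nabla F(\bar{\mathbf{x}}^{(k)})\|^2$ strictly positive after this reabsorption and produces the $1/(1-2D)$ prefactor. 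The final clause, that $\rho<1$ for any $C_b>0$, is inherited directly from \Cref{thm:rho} and the feasibility of the SDP \Cref{eqn:opt_a}, so no new argument is needed there.
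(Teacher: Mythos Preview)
Your proposal is correct and will prove the theorem, but it takes a genuinely different route from the paper on the consensus-error step. The paper does \emph{not} set up a one-step recursion for $U^{(k)}$. Instead it fully unrolls $(\I-\J)\X^{(k)}$ back to $k=1$, writing it as $-\eta\sum_{q=1}^{k-1}\Sg^{(q)}\bigl(\prod_{l=q}^{k-1}\mixmat^{(l)}-\J\bigr)$, and then invokes a dedicated product-contraction lemma showing $\Exs\bigl[\|\mathbf{B}(\prod_{l=1}^{n}\mixmat^{(l)}-\J)\|_F^2\bigr]\le\rho^{n}\|\mathbf{B}\|_F^2$. The noise terms, being independent across $q$, decouple and each is damped by $\rho^{k-q}$, which is exactly why the geometric ratio for $\sigma^2$ is $\rho$ (giving the $1/(1+\sqrt\rho)$ in the statement). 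The signal cross-terms are handled by Young's inequality with the step-dependent weight $\epsilon=\rho^{(p-q)/2}$, which is where the $\sqrt\rho/(1-\sqrt\rho)$ factor appears. An initial Cauchy--Schwarz split into noise and signal costs a factor of $2$, which is why the paper's constants are $2$ and $6$ rather than your $1$ and $3$.

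Your per-step recursion is cleaner and avoids the product lemma, but be aware of one inconsistency in your write-up: once you fold the noise injection $\rho\eta^2 m\sigma^2$ into a recursion whose contraction rate is $\sqrt\rho$, unrolling gives the noise a geometric ratio of $\sqrt\rho$, not $\rho$; the ``separate geometric sum with ratio $\rho$'' you describe is only available if you track the noise history separately (which is effectively the paper's full-unrolling argument). This is harmless for proving the theorem as stated, since $\rho\eta^2\sigma^2/(1-\sqrt\rho)\le 2\rho\eta^2\sigma^2/(1-\rho)$, so your bound is still at least as tight as what is claimed. Your constants will not match the statement exactly (you would get $1/(1-D)$-type prefactors with $D/2$ in place of $D$), but they are uniformly smaller, so the stated inequality follows. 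If you want the \emph{exact} form with $\sigma^2/(1+\sqrt\rho)$ and the prefactor $1/(1-2D)$, you will need the paper's complete unrolling and product lemma rather than the one-step recursion.
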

\textbf{Consistency with Previous Results.} Note that if $\mixmat^{(k)}=\one\one\tp/\nworkers$, then $\rho=0$ and the error bound in \Cref{thm:basic} reduces to that for fully synchronous SGD derived in \citep{bottou2016optimization}. When $\mixmat^{(k)}$ is fixed across iterations, then \Cref{thm:basic} reduces to the case of {\dsgd} and recovers the bound in \citep{lian2017can,wang2018cooperative}. \Cref{thm:basic} also reveals that the only difference in the optimization error upper bound between {\algo} and {\dsgd} is the value of spectral norm $\rho$. A smaller value of $\rho$ yields a lower optimization error bound.

\textbf{Dependence on Communication Budget $C_b$.} 
While it is difficult to get the analytical form of $\rho$ in terms of the communication budget $C_b$, in \Cref{fig:sim}, we present some numerical results obtained by solving the optimization problems \Cref{eqn:opt_p,eqn:opt_a}. Recall that a lower spectral norm $\rho$ means better error-convergence in terms of iterations. Observe that \textsc{Matcha} takes $2-3\times$ less communication time while preserving the same spectral norm as {\dsgd}. By setting a proper communication budget (for instance $C_b \approx 0.5$ in \Cref{fig:g2}), {\algo} can have even lower spectral norm than vanilla DecenSGD. Besides, to achieve the same spectral norm, {\algo} always requires much less communication budget than periodic DecenSGD. These theoretical findings are corroborated by extensive experiments in \Cref{sec:exps}. 

Furthermore, if the learning rate is configured properly, {\algo} can achieve a linear speedup in terms of number of worker nodes, matching the same rate as {\dsgd} and fully synchronous SGD.
\begin{corollary}[\textbf{Linear speedup}]\label{corollary:basic}
Under the same conditions as \Cref{thm:basic}, if the learning rate is set as $\lr = \sqrt{\frac{\nworkers}{K}}$, then after total $K$ iterations, we have
\begin{align}
    \frac{1}{K}\sum_{k=1}^K\Exs\brackets{\vecnorm{\tg(\avgx^{(k)})}^2}
    &=\mathcal{O}\parenth{\frac{1}{\sqrt{\nworkers K}}} +\mathcal{O}\parenth{ \frac{\nworkers}{K}} \label{eqn:rate}
\end{align}
where all the other constants are subsumed in $\mathcal{O}$.
\end{corollary}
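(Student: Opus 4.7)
The plan is to obtain the corollary by directly substituting $\lr = \sqrt{\nworkers/K}$ into the non-asymptotic bound of \Cref{thm:basic} and tracking the order of each resulting term. Since \Cref{thm:basic} already guarantees $\rho < 1$ for any communication budget $C_b > 0$, the quantities $1/(1-\sqrt{\rho})$, $1/(1+\sqrt{\rho})$, and $\rho$ itself are absolute constants that will be absorbed into $\mathcal{O}(\cdot)$. The only non-trivial bookkeeping is checking that the step-size is admissible and that the $1/(1-2D)$ prefactor stays bounded.

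First I would verify the hypothesis $\lr\lip \leq \min\{1,(\sqrt{\rho^{-1}}-1)/4\}$ of \Cref{thm:basic}. With $\lr = \sqrt{\nworkers/K}$ this becomes $\lip\sqrt{\nworkers/K} \leq \min\{1,(\sqrt{\rho^{-1}}-1)/4\}$, which is satisfied for all sufficiently large $K$ (depending on $\lip$, $\nworkers$, and $\rho$); this tail condition is implicit in the big-$\mathcal{O}$ statement and is standard. I would then compute
\begin{align*}
D = \frac{6\lr^2\lip^2\rho}{(1-\sqrt{\rho})^2} = \frac{6\nworkers\lip^2\rho}{K(1-\sqrt{\rho})^2} = \mathcal{O}\!\parenth{\tfrac{\nworkers}{K}},
\end{align*}
so that $D \to 0$ and $1/(1-2D) = 1 + \mathcal{O}(\nworkers/K)$ is uniformly bounded (and in particular less than, say, $2$) for large enough $K$.

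Next I would substitute $\lr = \sqrt{\nworkers/K}$ into the three surviving contributions of the bound of \Cref{thm:basic}. The initialization term gives $2[F(\avgx^{(1)})-F_{\text{inf}}]/(\lr K) = \mathcal{O}(1/\sqrt{\nworkers K})$; the stochastic-gradient term gives $\lr\lip\vbnd/\nworkers = \lip\vbnd/\sqrt{\nworkers K} = \mathcal{O}(1/\sqrt{\nworkers K})$; and the consensus-error term is
\begin{align*}
\frac{2\lr^2\lip^2\rho}{1-\sqrt{\rho}}\parenth{\frac{\vbnd}{1+\sqrt{\rho}}+\frac{3\zeta^2}{1-\sqrt{\rho}}} = \mathcal{O}\!\parenth{\lr^2} = \mathcal{O}\!\parenth{\tfrac{\nworkers}{K}},
\end{align*}
since $\rho$, $\vbnd$, $\zeta^2$, $\lip$ are all constants. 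Multiplying through by the bounded factor $1/(1-2D)$ preserves the orders, and summing gives the claimed $\mathcal{O}(1/\sqrt{\nworkers K}) + \mathcal{O}(\nworkers/K)$ rate.

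There is no real obstacle here beyond careful bookkeeping: the algebra is a direct plug-in once \Cref{thm:basic} is in hand. The only subtlety worth calling out explicitly is that the rate statement is valid only for $K$ large enough that both the step-size constraint $\lr\lip \leq \min\{1,(\sqrt{\rho^{-1}}-1)/4\}$ and the condition $D<1/2$ hold; this is the standard interpretation of the asymptotic bound, and the smallest such $K$ depends on $\nworkers$, $\lip$, and the spectral norm $\rho$ produced by the matching decomposition sampling of {\algo}.
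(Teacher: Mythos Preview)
Your proposal is correct and follows essentially the same substitution argument as the paper: plug $\lr=\sqrt{\nworkers/K}$ into the bound of \Cref{thm:basic}, observe that the first two terms are each $\mathcal{O}(1/\sqrt{\nworkers K})$ and the consensus term is $\mathcal{O}(\nworkers/K)$, and absorb the bounded prefactor $1/(1-2D)$. The only cosmetic difference is that the paper bounds $1/(1-2D)\leq 4$ directly from the step-size hypothesis $\lr\lip\leq(\sqrt{\rho^{-1}}-1)/4$ (which already forces $D\leq 3/8$), so your separate ``$D<1/2$ for large $K$'' check is in fact redundant once the step-size constraint is in force.
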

It is worth noting that when the total iteration $K$ is sufficiently large ($K \geq m^3$), the convergence of {\algo} will be dominated by the first term $1/\sqrt{mK}$.
\begin{rem}
While \Cref{thm:basic} and \Cref{corollary:basic} focus on the convergence guarantee for {\algo}, the results can be general and hold for any arbitrary $\{\mixmat^{(k)}\}$ sequence as long as $\rho = \opnorm{\Exs[\mixmat^{(k)\top}\mixmat^{(k)}]-\frac{1}{\nworkers}\one\one\tp} < 1$.
\end{rem}

\section{Experimental Results}\label{sec:exps}
We evaluate the performance of {\algo} in multiple deep learning tasks: (1) image classification on CIFAR-10 and CIFAR-100 \citep{krizhevsky2009learning}; (2) Language modeling on Penn Treebank corpus (PTB) dataset \citep{marcus1993building}. All training datasets are evenly partitioned over a network of workers. All algorithms are trained for sufficiently long time until convergence or overfitting. The learning rate is fine-tuned for {\dsgd} and then used for all other algorithms, since we treat {\algo} as an in-place replacement of {\dsgd}. Note that this will in fact disadvantage {\algo} in the comparison with {\dsgd}. A detailed description of the training configurations is provided in Appendix A.1.

\begin{figure}
    \centering
    \begin{subfigure}{.3\textwidth}
    \centering
    \includegraphics[width=\textwidth]{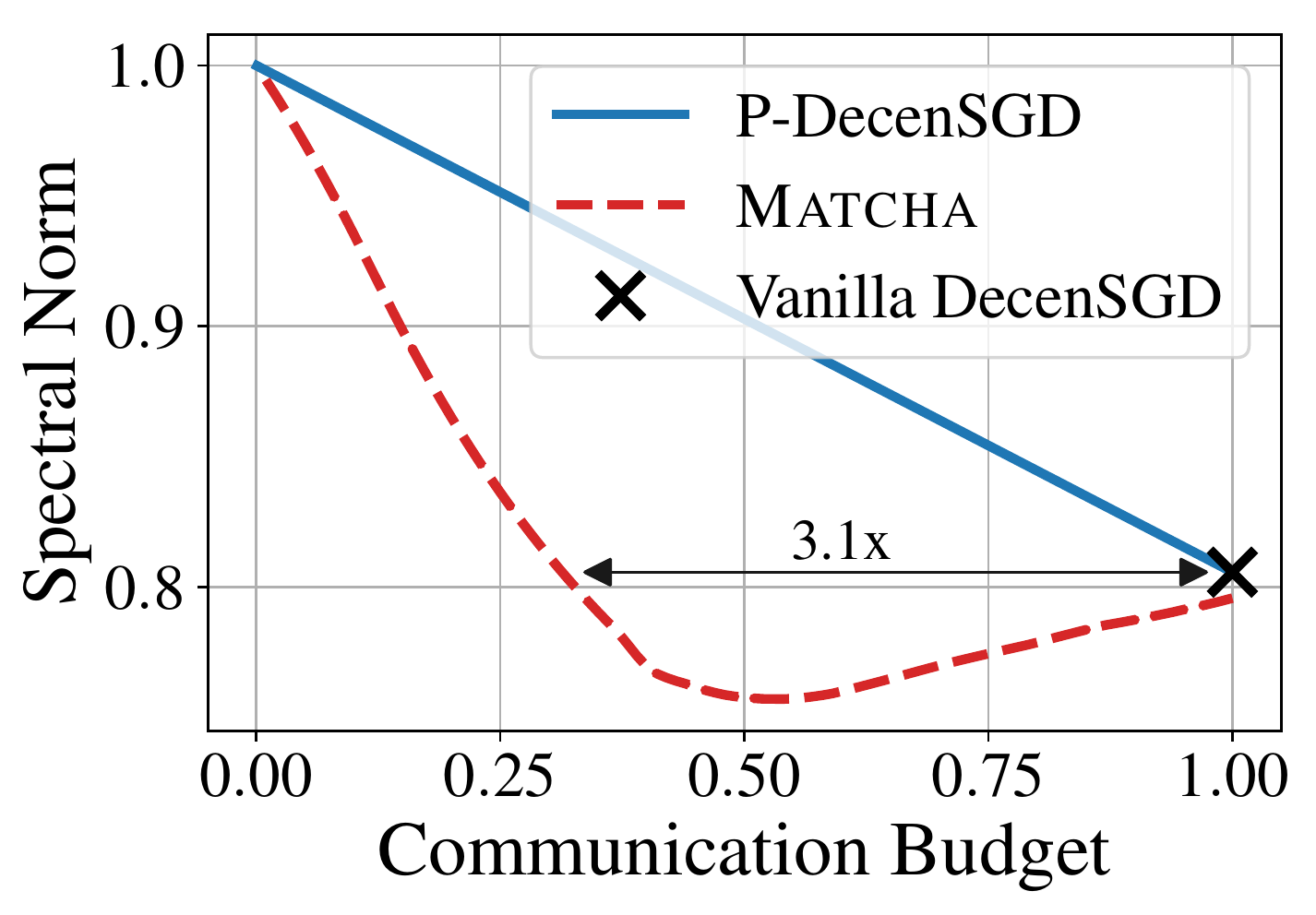}
    \includegraphics[width=.8\textwidth]{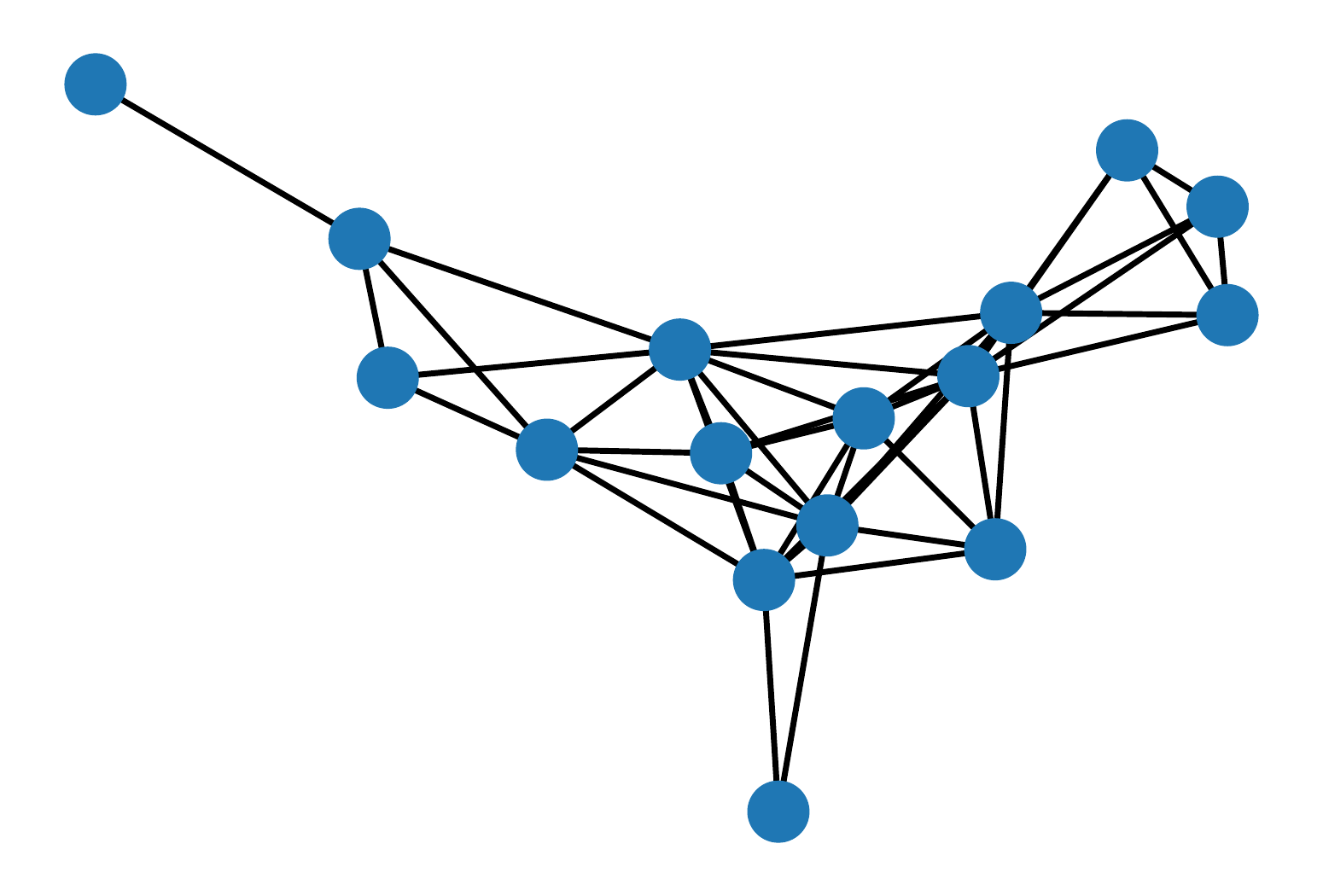}
    \caption{Geometric graph.}
    \label{fig:g2}
    \end{subfigure}%
    ~
    \begin{subfigure}{.3\textwidth}
    \centering
    \includegraphics[width=\textwidth]{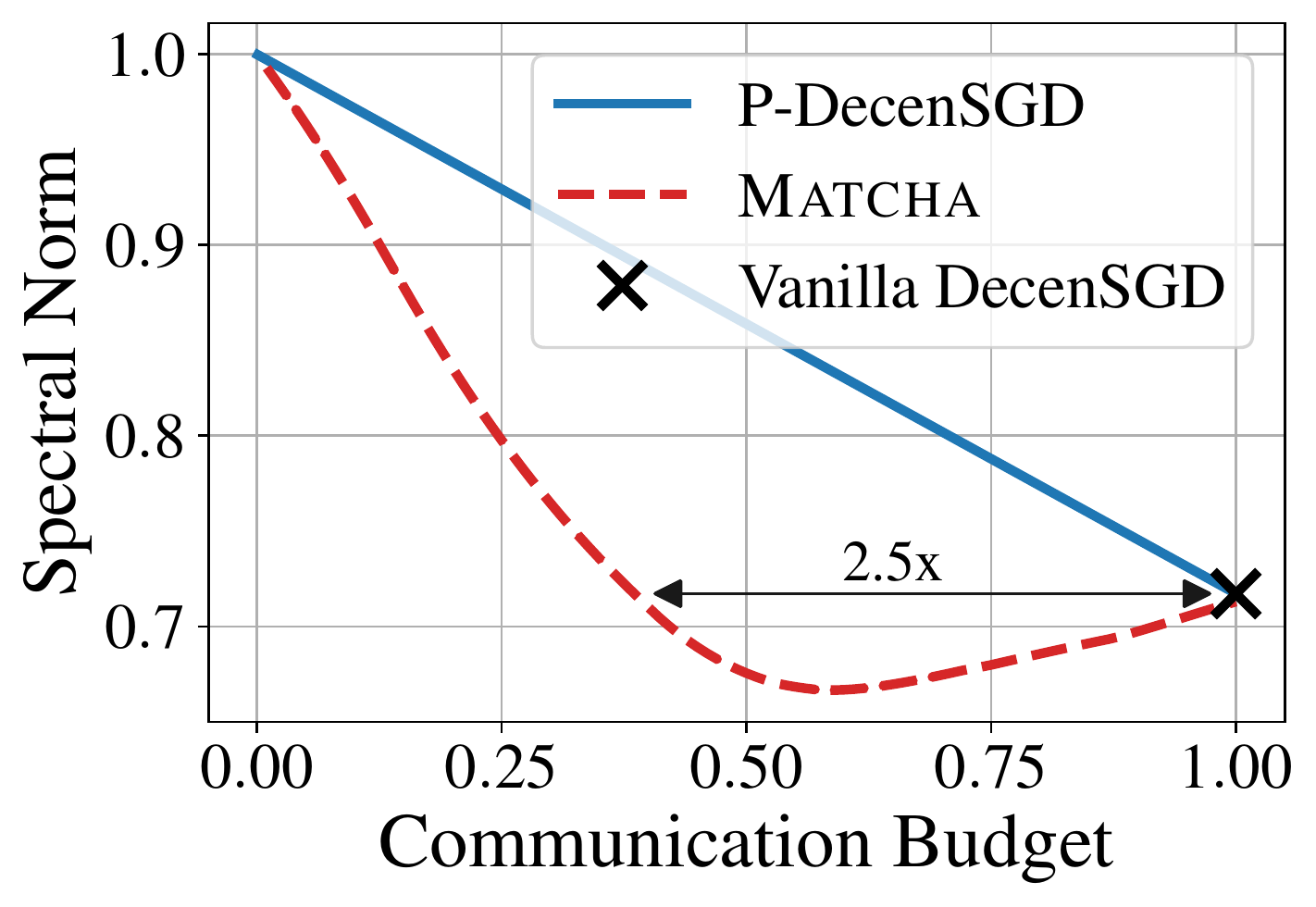}
    \includegraphics[width=.8\textwidth]{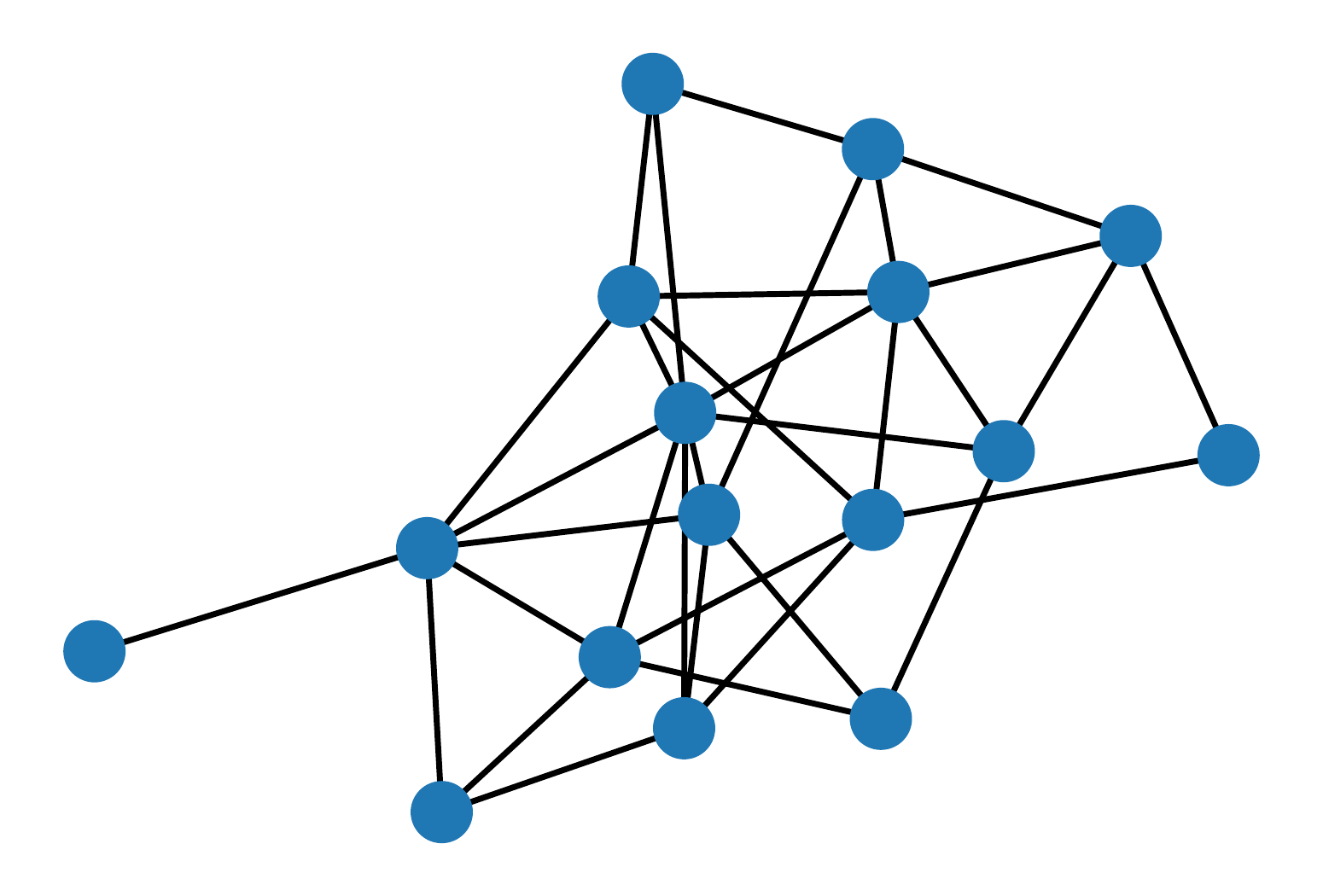}
    \caption{Erd\H{o}s-R\'enyi's graph.}
    \label{fig:g3}
    \end{subfigure}%
    \caption{Examples on how the spectral norm $\rho$ varies over communication budget in \textsc{Matcha}. In both (a) and (b), there are $16$ worker nodes. {\algo} typically costs $2-3\times$ less communication time than {\dsgd} (black crosses) while maintaining the exactly same or even lower value of $\rho$ (i.e., same or better error upper bound).}
    \label{fig:sim}
    \vspace{-6pt}
\end{figure}

\begin{figure*}[!hbt]
    \centering
    \begin{subfigure}{.3\textwidth}
    \centering
    \includegraphics[width=\textwidth]{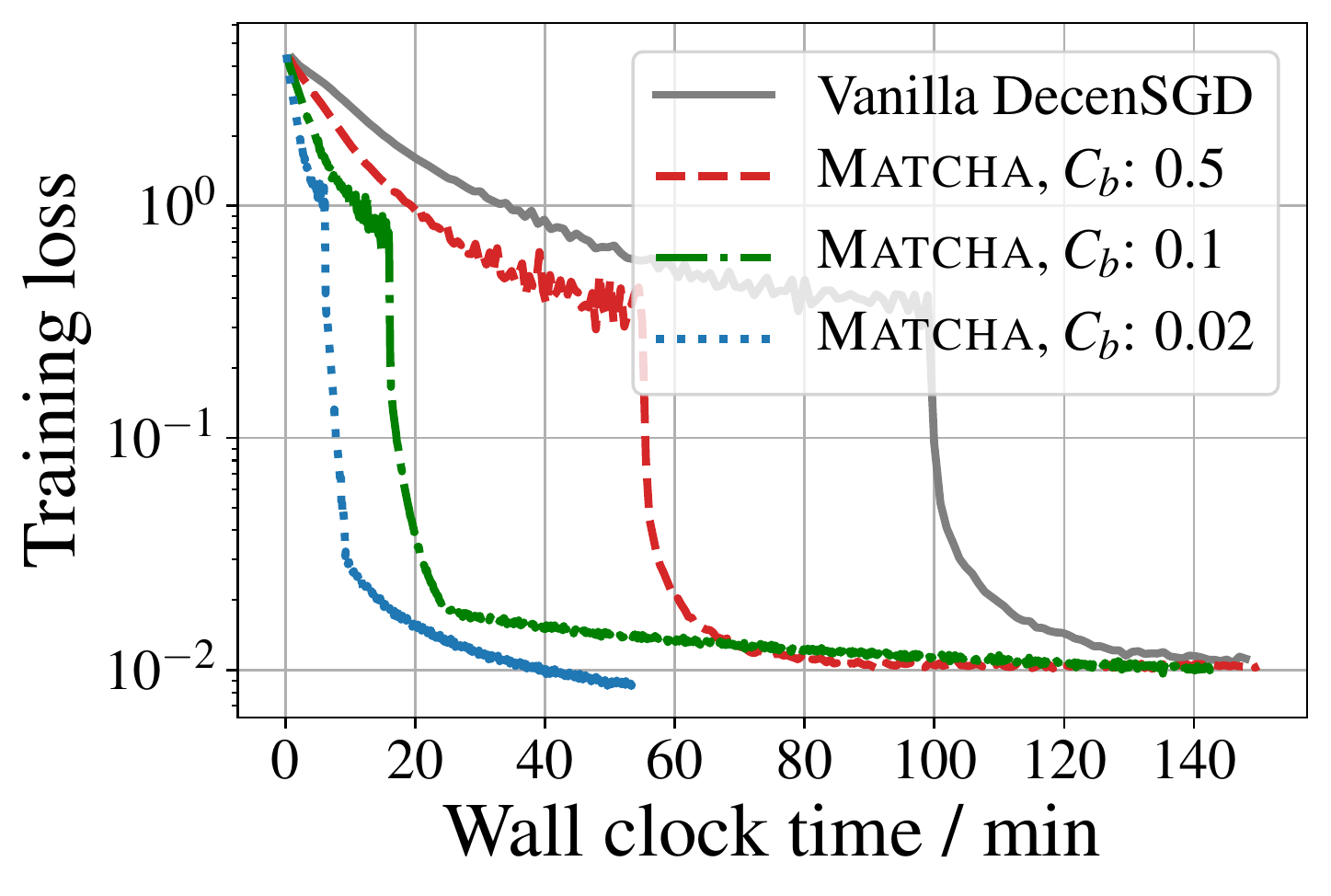}
    \caption{WideResNet on CIFAR-100.}
    \label{fig:time_comp_c100}
    \end{subfigure}%
    ~
    \begin{subfigure}{.3\textwidth}
    \centering
    \includegraphics[width=\textwidth]{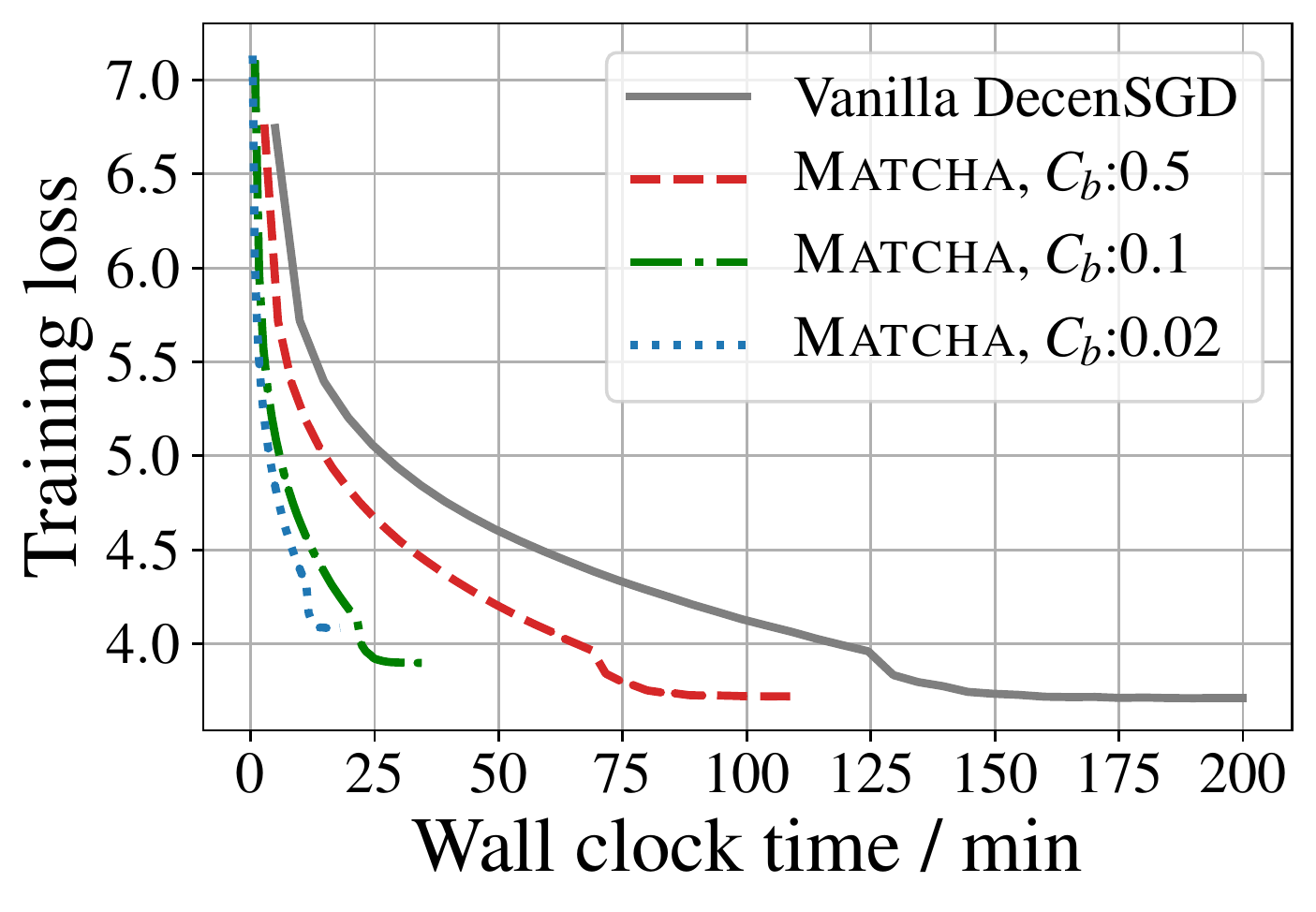}
    \caption{LSTM on Penn Treebank.}
    \label{fig:time_comp_ptb}
    \end{subfigure}%
    ~
    \begin{subfigure}{.3\textwidth}
    \centering
    \includegraphics[width=\textwidth]{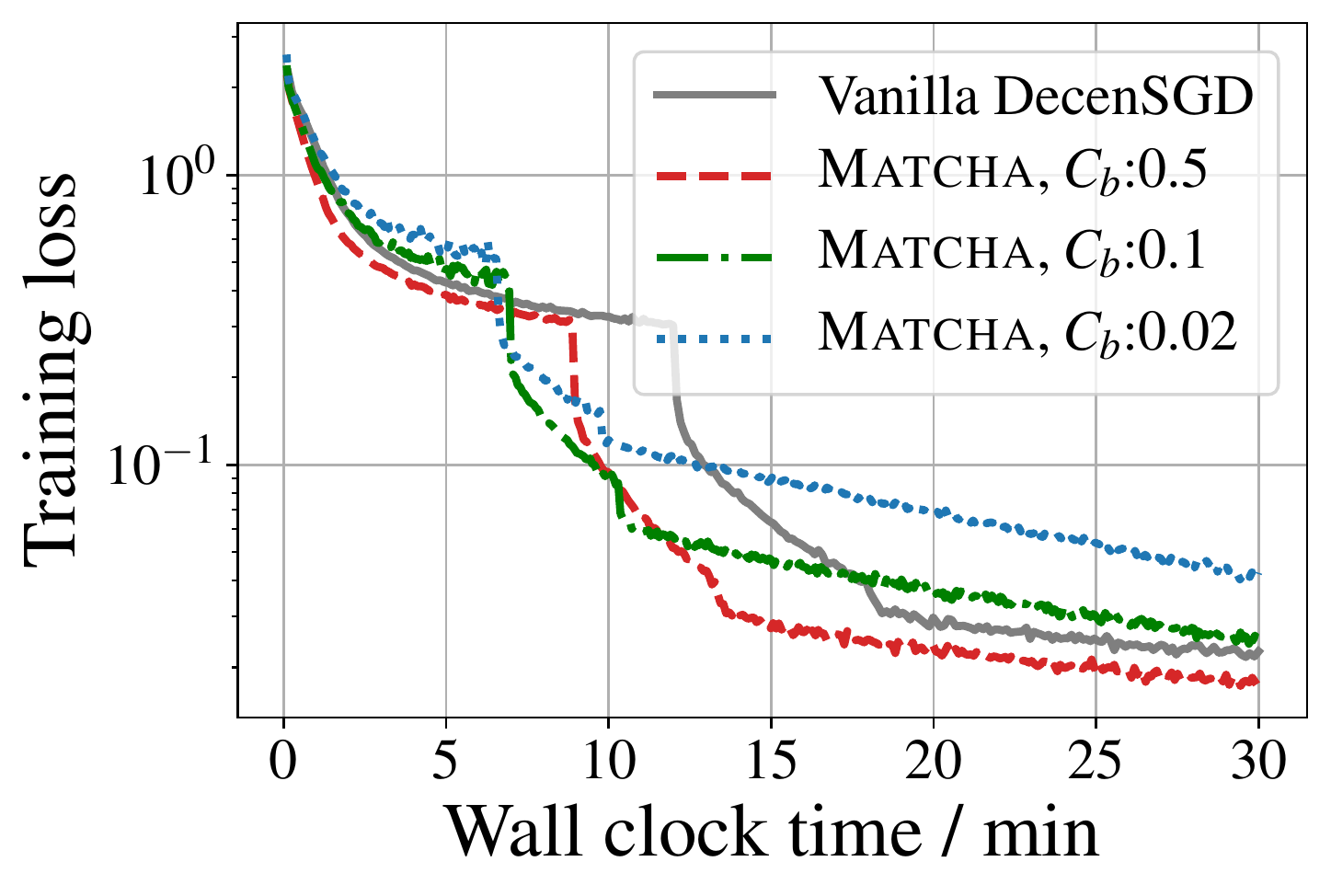}
    \caption{ResNet on CIFAR-10.}
    \label{fig:time_comp_c10}
    \end{subfigure}%
    
    \bigskip
    \begin{subfigure}{.3\textwidth}
    \centering
    \includegraphics[width=\textwidth]{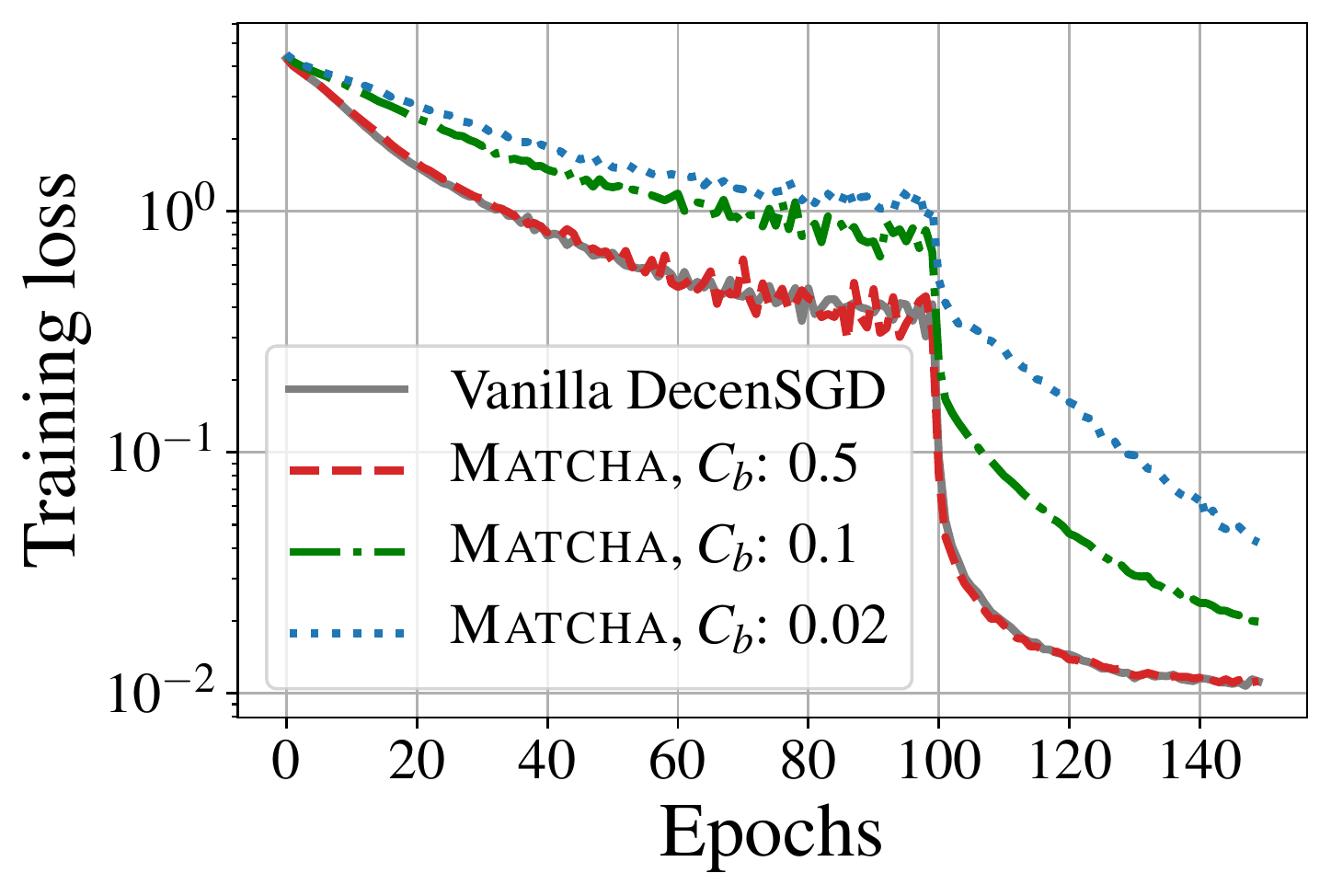}
    \caption{WideResNet on CIFAR-100.}
    \label{fig:epoch_comp_c100}
    \end{subfigure}%
    ~
    \begin{subfigure}{.3\textwidth}
    \centering
    \includegraphics[width=\textwidth]{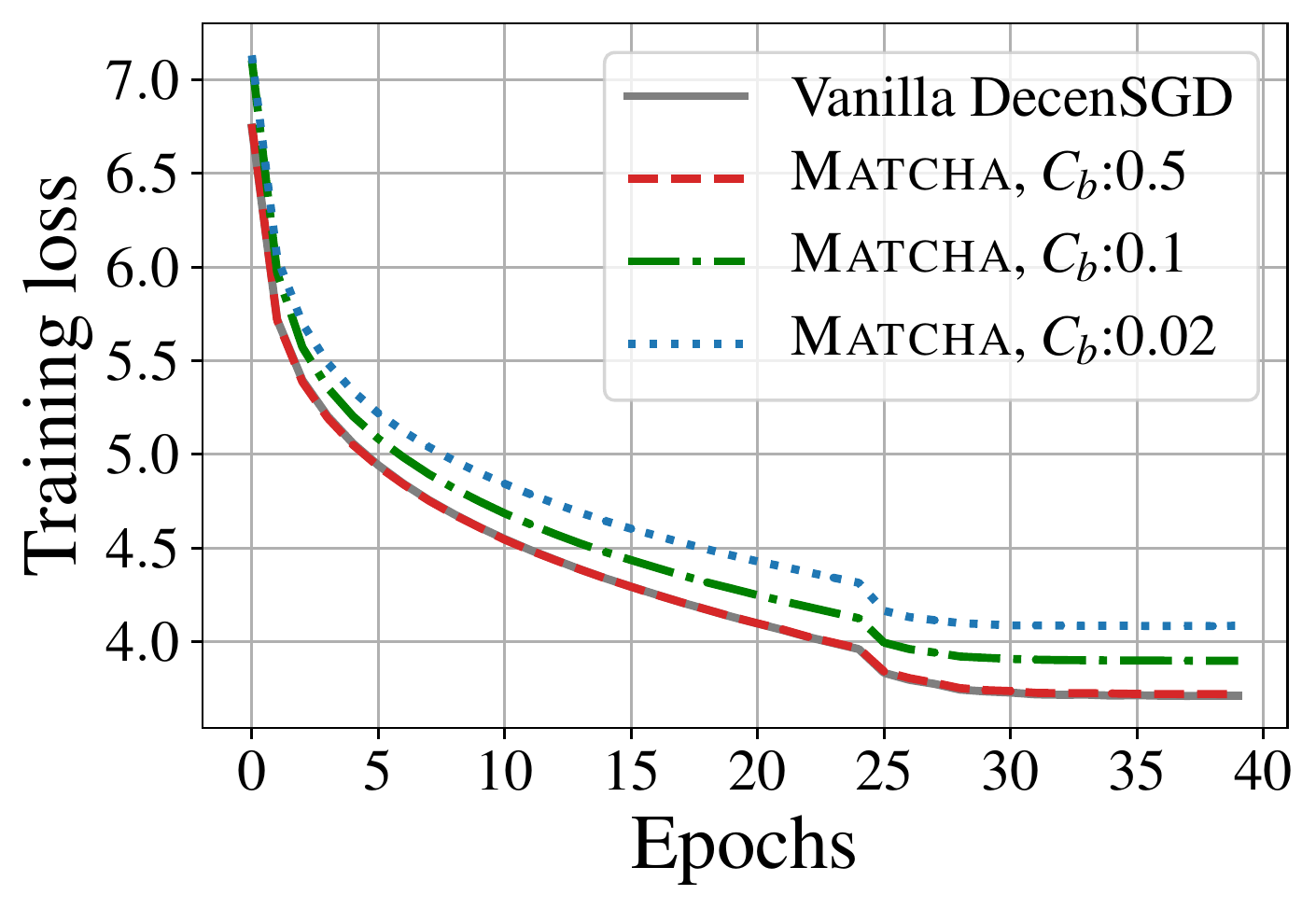}
    \caption{LSTM on Penn Treebank.}
    \label{fig:epoch_comp_ptb}
    \end{subfigure}%
    ~
    \begin{subfigure}{.3\textwidth}
    \centering
    \includegraphics[width=\textwidth]{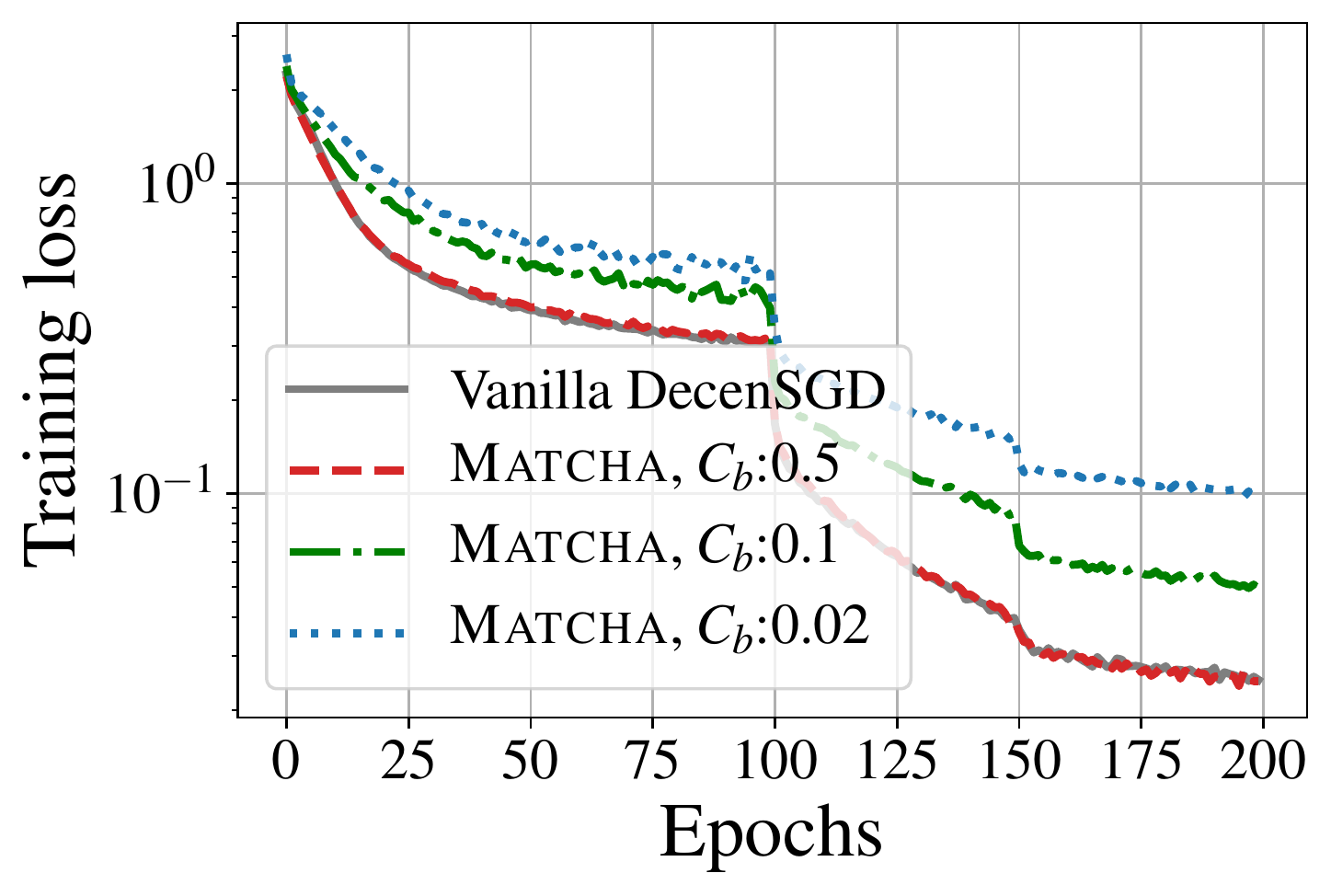}
    \caption{ResNet on CIFAR-10.}
    \label{fig:epoch_comp_c10}
    \end{subfigure}%
    \caption{Varying communication budgets $C_b$ in {\algo}. The base communication topology is \Cref{fig:base_topo} (an Erd\H{o}s-R\'enyi graph with $8$ nodes). As predicted by the theoretical result in \Cref{fig:g1_rho}, when the communication budget is $0.5$, {\algo} has nearly the same loss-versus-iteration curves as {\dsgd} but requires only half communication time per iteration. The corresponding test accuracy curves are presented in Appendix A.}
    \label{fig:cb}
    \vspace{-6pt}
\end{figure*}
\begin{figure*}[!htb]
    \centering
    \begin{subfigure}{.3\textwidth}
    \centering
    \includegraphics[width=\textwidth]{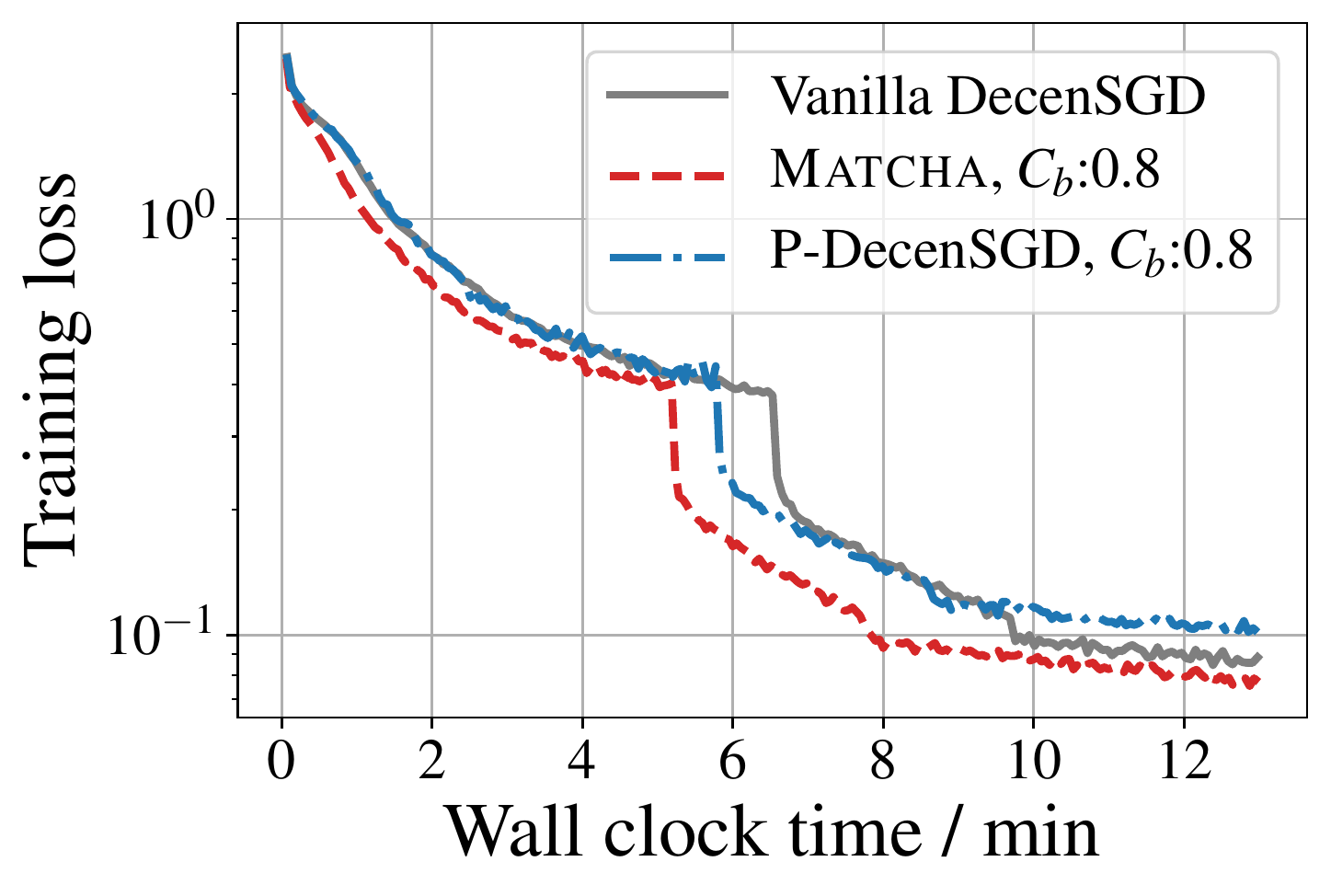}
    \caption{Maximal degree is $5$.}
    \label{fig:res_n16_sparse}
    \end{subfigure}%
    ~
    \begin{subfigure}{.3\textwidth}
    \centering
    \includegraphics[width=\textwidth]{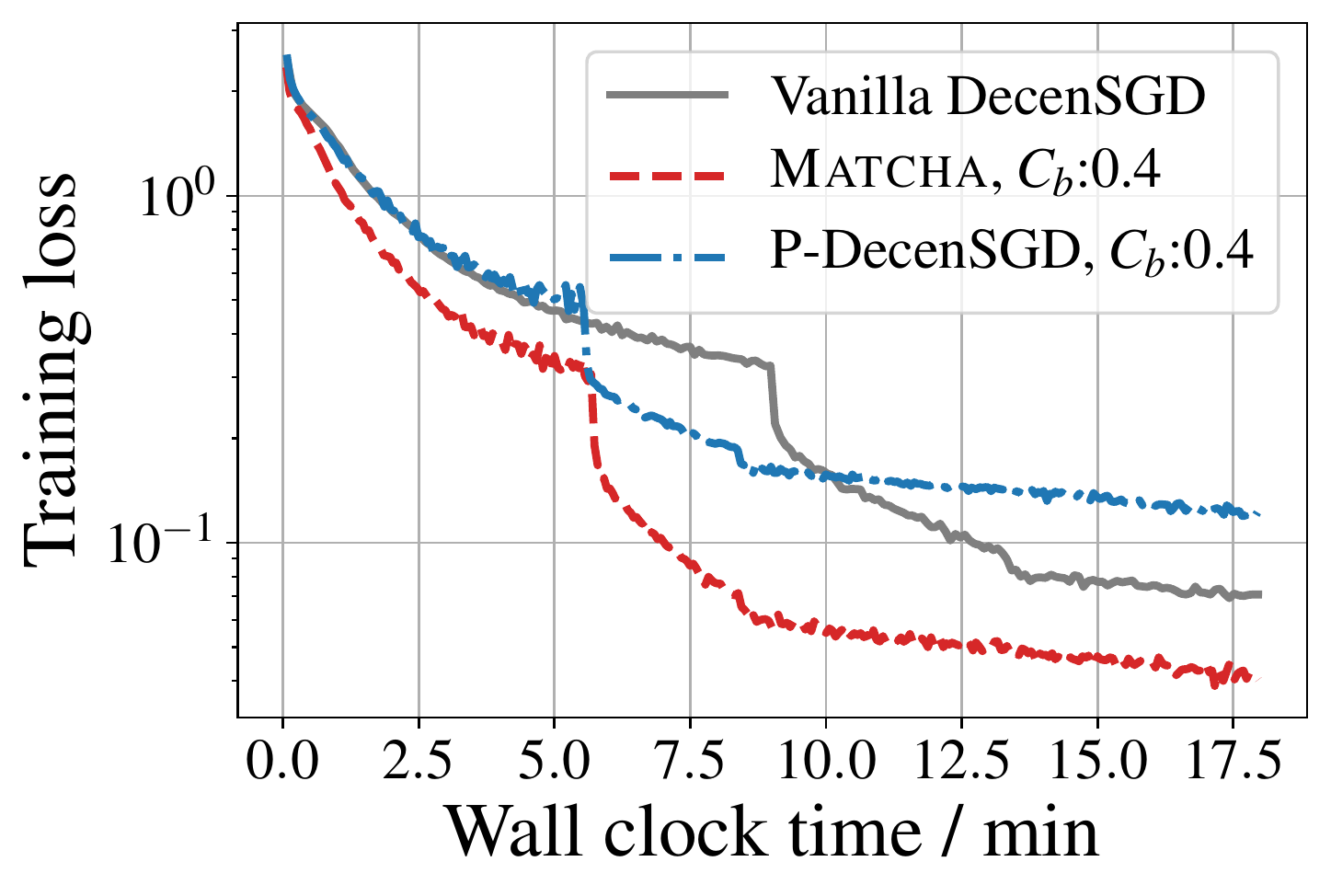}
    \caption{Maximal degree is $10$.}
    \label{fig:res_n16_mild}
    \end{subfigure}%
    ~
    \begin{subfigure}{.3\textwidth}
    \centering
    \includegraphics[width=\textwidth]{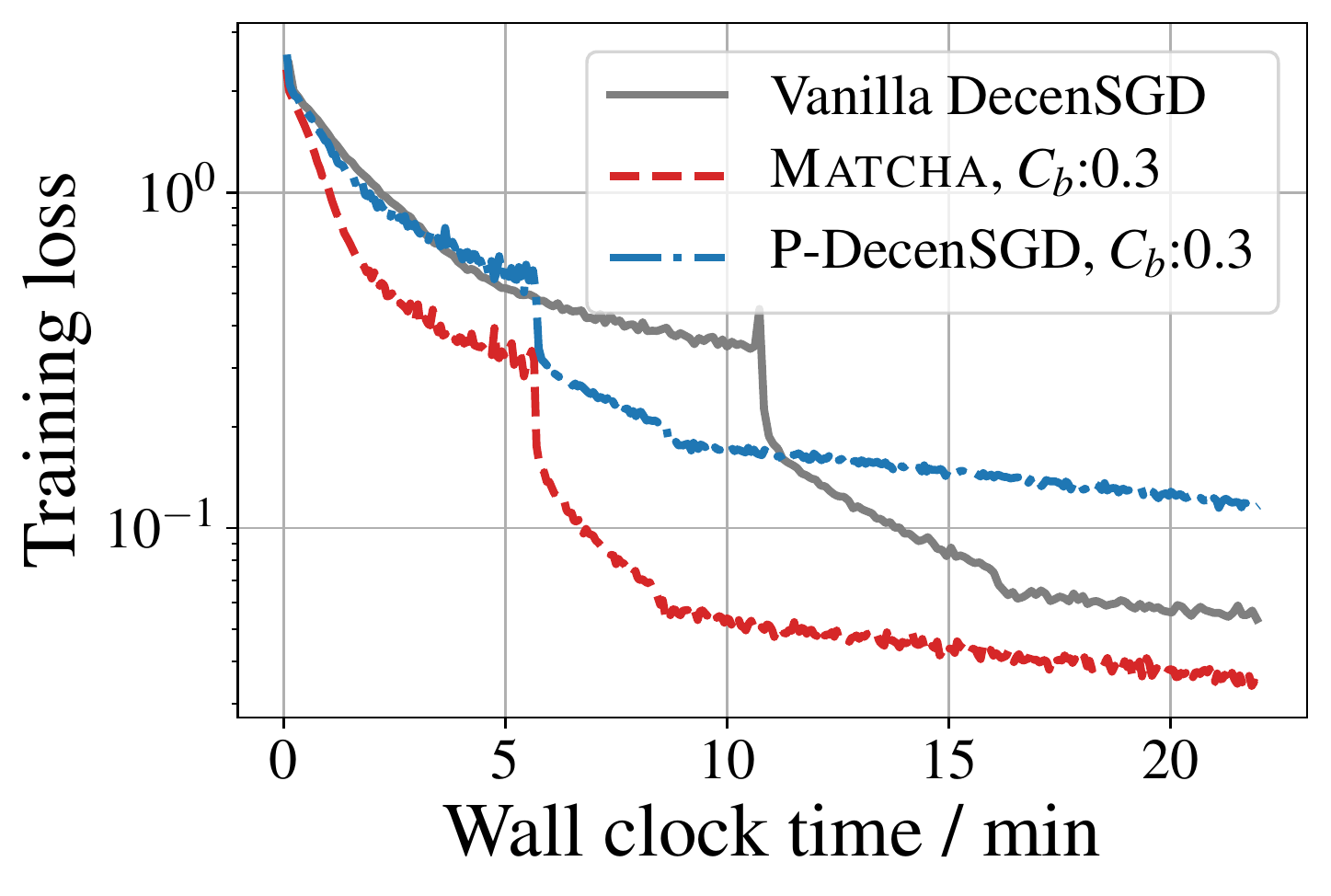}
    \caption{Maximal degree is $13$.}
    \label{fig:res_n16_dense}
    \end{subfigure}%
    
    \bigskip
    \begin{subfigure}{.3\textwidth}
    \centering
    \includegraphics[width=\textwidth]{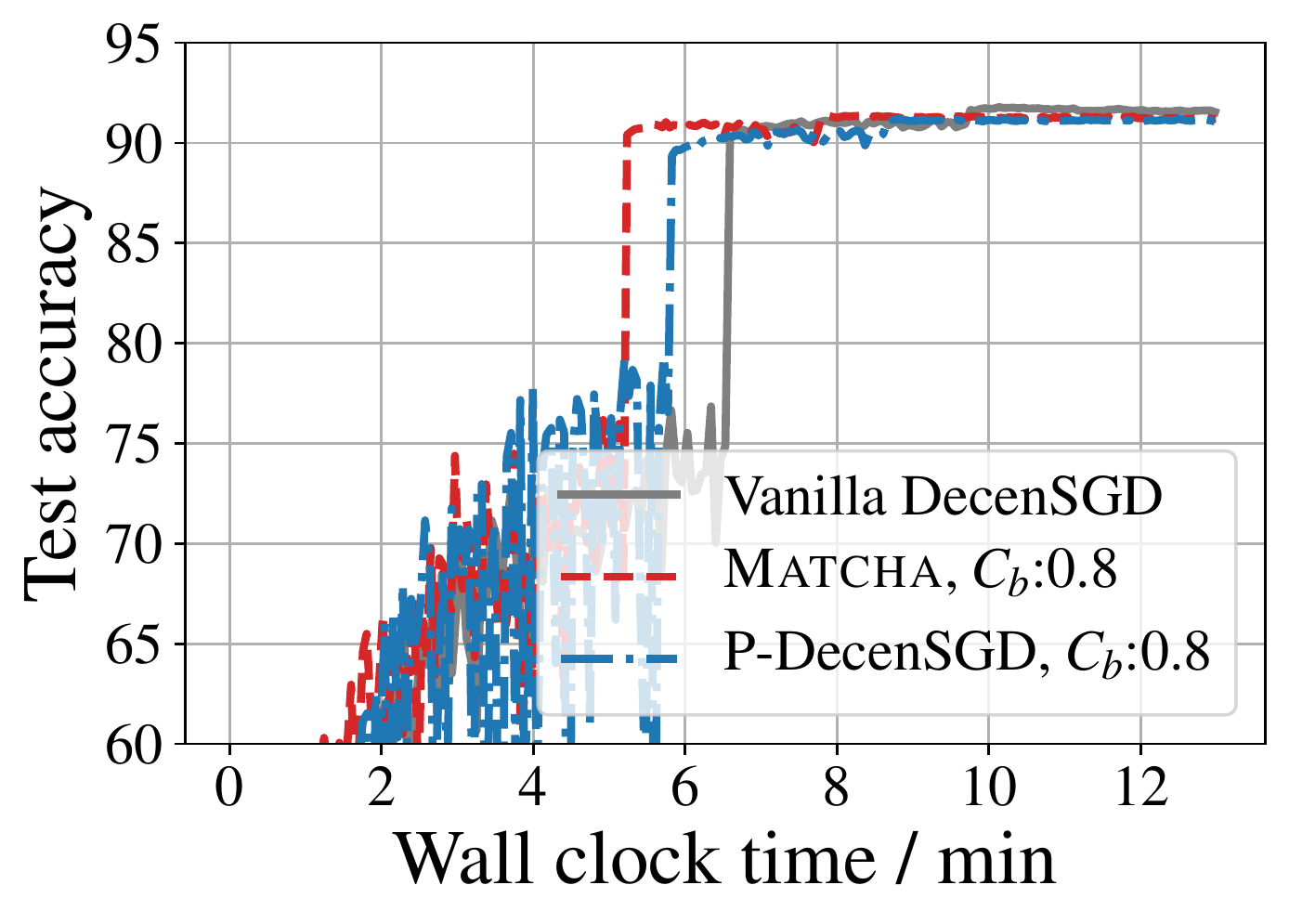}
    \caption{Maximal degree is $5$.}
    \label{fig:acc_sparse}
    \end{subfigure}%
    ~
    \begin{subfigure}{.3\textwidth}
    \centering
    \includegraphics[width=\textwidth]{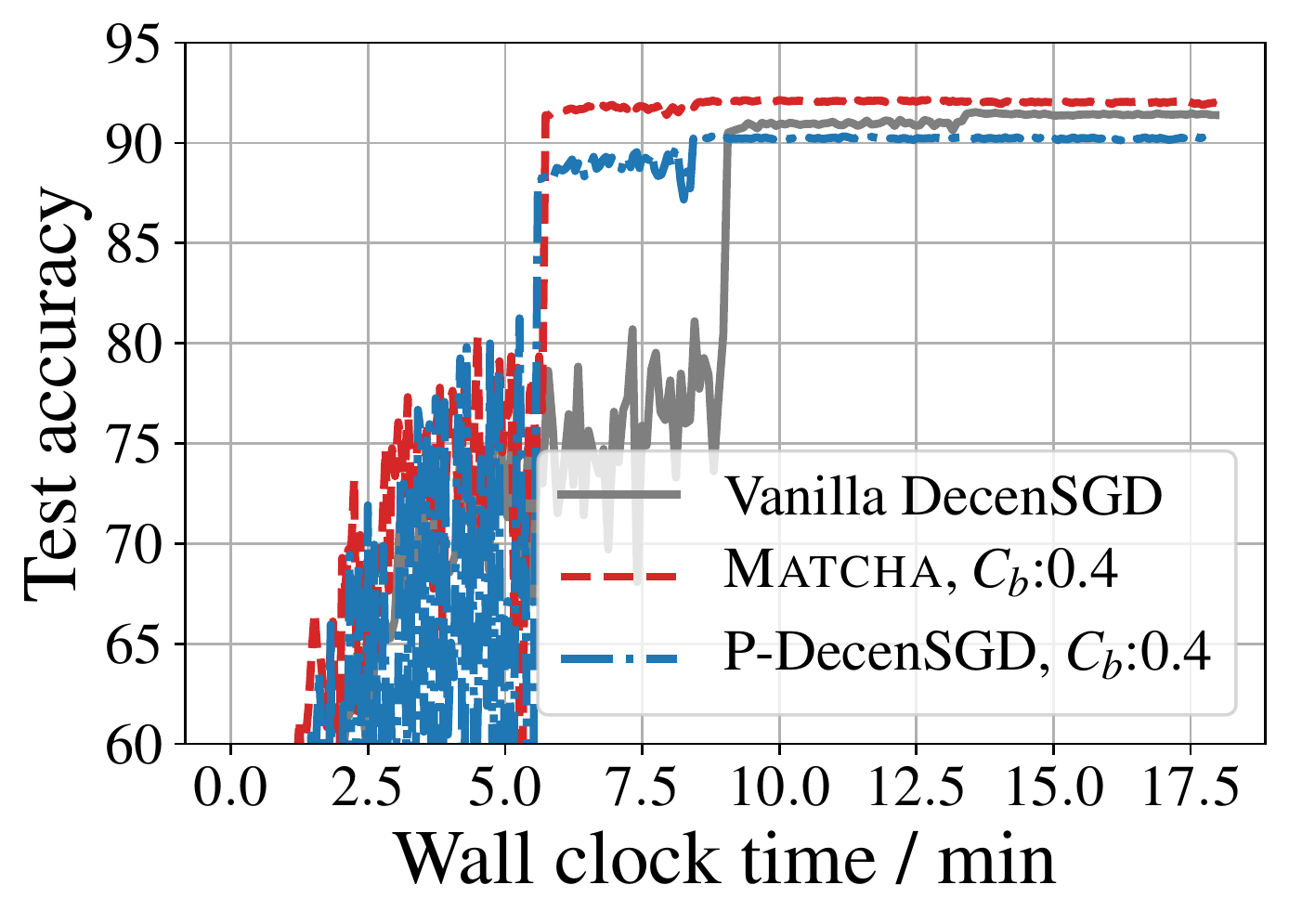}
    \caption{Maximal degree is $10$.}
    \label{fig:acc_mild}
    \end{subfigure}%
    ~
    \begin{subfigure}{.3\textwidth}
    \centering
    \includegraphics[width=\textwidth]{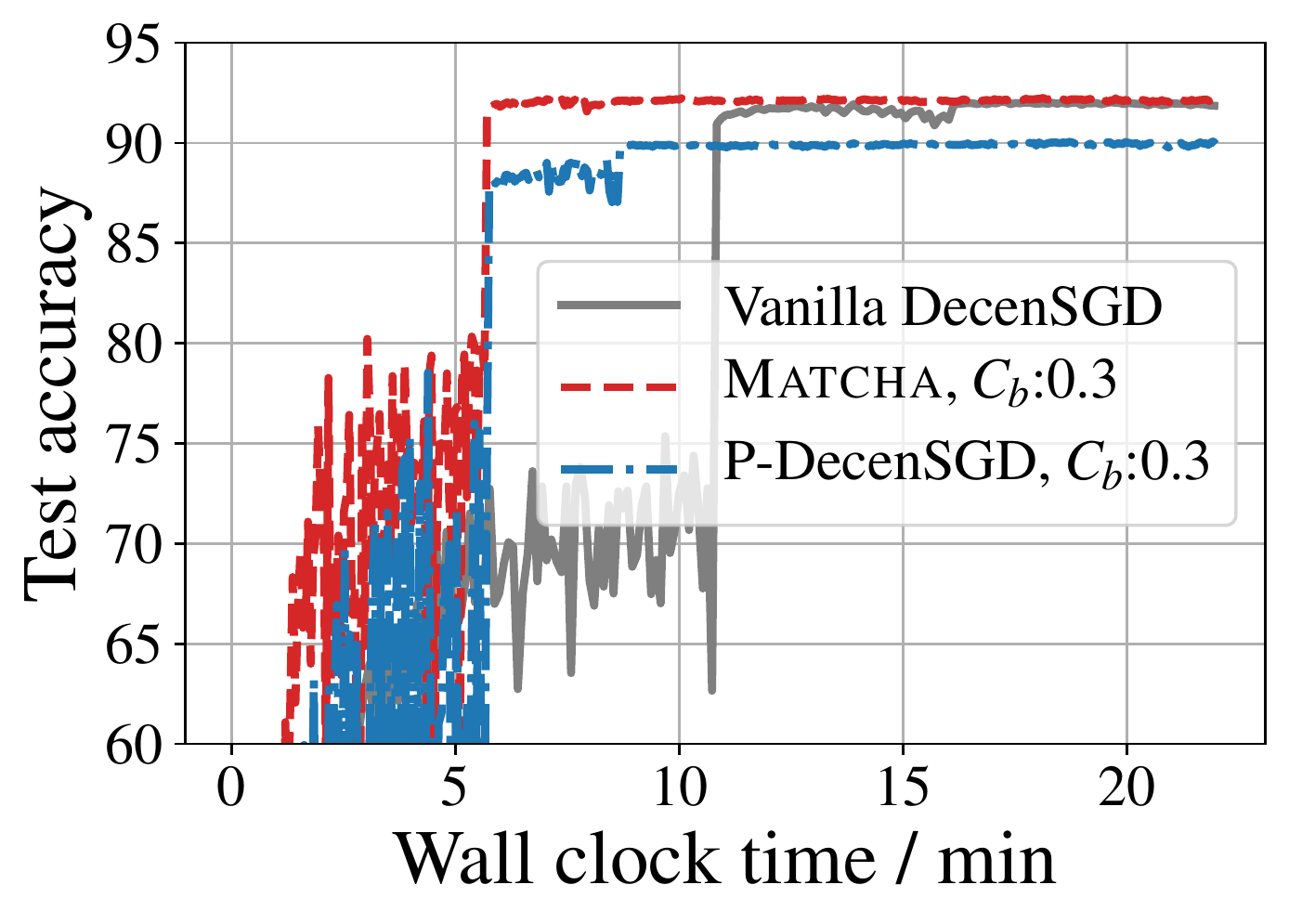}
    \caption{Maximal degree is $13$.}
    \label{fig:acc_dense}
    \end{subfigure}%
    \caption{Performance comparison on three random-generated geometric graphs with $16$ nodes and  different levels of connectivity. We train a ResNet-50 on CIFAR-10 dataset. The corresponding base typologies can be found in Appendix A. In particular, the base topology used in \Cref{fig:res_n16_mild,fig:acc_mild} is the same as \Cref{fig:g2}, which suggests that {\algo} can achieve lower spectral norm than {\dsgd} when $C_b \geq 0.3$. As a consequence, \Cref{fig:res_n16_mild,fig:acc_mild} show that {\algo} can reach a lower training loss and a higher test accuracy than {\dsgd}. }
    \label{fig:res_n16}
    \vspace{-6pt}
\end{figure*}

\textbf{Effectiveness of {\algo}.}
In \Cref{fig:cb}, we evaluate the performance of {\algo} with various communication budgets ($2\%, 10\%, 50\%$ of vanilla DecenSGD). The base communication topology is shown in \Cref{fig:illustration}. From \Cref{fig:epoch_comp_c100,fig:epoch_comp_ptb,fig:epoch_comp_c10}, one can observe that when the communication budget is set to $0.5$ (reducing expected communication time per iteration by $50$\%), {\algo} has the nearly identical training losses as {\dsgd} at every epoch. This empirical finding reinforces the claim  in Section 5 regarding the similarity of the algorithms' performance in terms of iterations. When we continue to decrease the communication budget, {\algo} attains significantly faster convergence with respect to wall-clock time in communication-intensive tasks. In particular, on CIFAR-100 (see \Cref{fig:time_comp_c100}), {\algo} with $C_b=0.02$ can take $5\times$ less wall-clock time than {\dsgd} to reach a training loss of $0.1$.

\textbf{Effects of Base Communication Topology.}
To further verify the applicability of {\algo} to arbitrary graphs, we evaluate it on different topologies with varying levels of connectivity. In \Cref{fig:res_n16}, we present experimental results on three random geometric graph topologies that have different maximal degrees. In particular, when the maximal degree is $10$ (see \Cref{fig:res_n16_mild}), {\algo} with communication budget $C_b=0.4$ not only reduces the mean communication time per iteration by $1/0.4=2.5\times$ but also has lower error than {\dsgd}. This result corroborates the corresponding spectral norm versus communication budget curve shown in \Cref{fig:g2}. When we further increase the density of the base topology (see \Cref{fig:res_n16_dense}), {\algo} can reduce communication time per iteration by $1/0.3\simeq 3.3\times$ without hurting the error-convergence.

Another interesting observation is that {\algo} gives a more drastic communication reduction for denser base graphs. As shown in \Cref{fig:res_n16}, along with the increase in the density of the base graph, the training time of {\dsgd} also increases from $13$ to $22$ minutes to finish $200$ epochs. However, in {\algo}, since the effective maximal degree in all cases is maintained to be about $4$ by controlling communication budget, the total training time of $200$ epochs remains nearly the same (about $11$ minutes). Moreover, {\algo} also takes less and less time to achieve a training loss of $0.1$, on the contrast to {\dsgd} and P-DecenSGD.

\textbf{Comparison to Periodic DecenSGD.}
As discussed in Section 3, a naive way to reduce the communication time per iteration is to use the whole base graph for synchronization after every few iterations \citep{wang2018cooperative,tsianos2012communication}. Instead, in {\algo}, we allow different matchings to have different communication frequencies. Similar to the theoretical simulations in \Cref{fig:sim}, the results in \Cref{fig:res_n16} show that given a fixed communication budget $C_b$, {\algo} consistently outperforms periodic DecenSGD. More results are presented in the Appendix.

\section{Concluding Remarks}
In this paper, we propose {\algo} to reduce the communication delay of decentralized SGD over an arbitrary node topology. The key idea in {\algo} is that workers communicate over the connectivity-critical links with higher probability, which is achieved via matching decomposition sampling. Rigorous theoretical analysis and experimental results show that {\algo} can reduce the communication delay while maintaining the same (or even improve) error-versus-iterations convergence rate. Future directions include adaptively changing the communication budget similar to \citep{Wang2018Adaptive}, extending {\algo} to directed communication graphs \citep{assran2018stochastic} and other decentralized computing algorithms.

\bibliographystyle{plain}
\bibliography{example_paper.bib}

\onecolumn
\appendix

\section{More Experimental Results}
\subsection{Detailed Experimental Setting}
\label{subsec:expt}
\textbf{Image Classification Tasks.} 
CIFAR-10 and CIFAR-100 consist of $60,000$ color images in $10$ and $100$ classes, respectively. For CIFAR-10 and CIFAR-100 training, we set the initial learning rate as $0.8$ and it decays by $10$ after $100$ and $150$ epochs. The mini-batch size per worker node is $64$. We train vanilla DecenSGD for $200$ epochs and all other algorithms for the same wall-clock time as vanilla DecenSGD.

\textbf{Language Model Task.} The PTB dataset contains $923,000$ training words. A two-layer LSTM with $1500$ hidden nodes in each layer \cite{press2016using} is adopted. We set the initial learning rate as $40$ and it decays by $4$ when the training procedure saturates. The mini-batch size per worker node is $10$. The embedding size is $1500$. All algorithms are trained for $40$ epochs.

\textbf{Machines.}
The training procedure is performed in a network of nodes, each of which is equipped with one NVIDIA TitanX Maxwell GPU and has a $40$ Gbps ($5000$ MB/s) Ethernet interface. \textsc{Matcha} is implemented with PyTorch and MPI4Py.

\subsection{More Results}
\begin{figure}[!ht]
    \centering
    \begin{subfigure}{.3\textwidth}
    \centering
    \includegraphics[width=\textwidth]{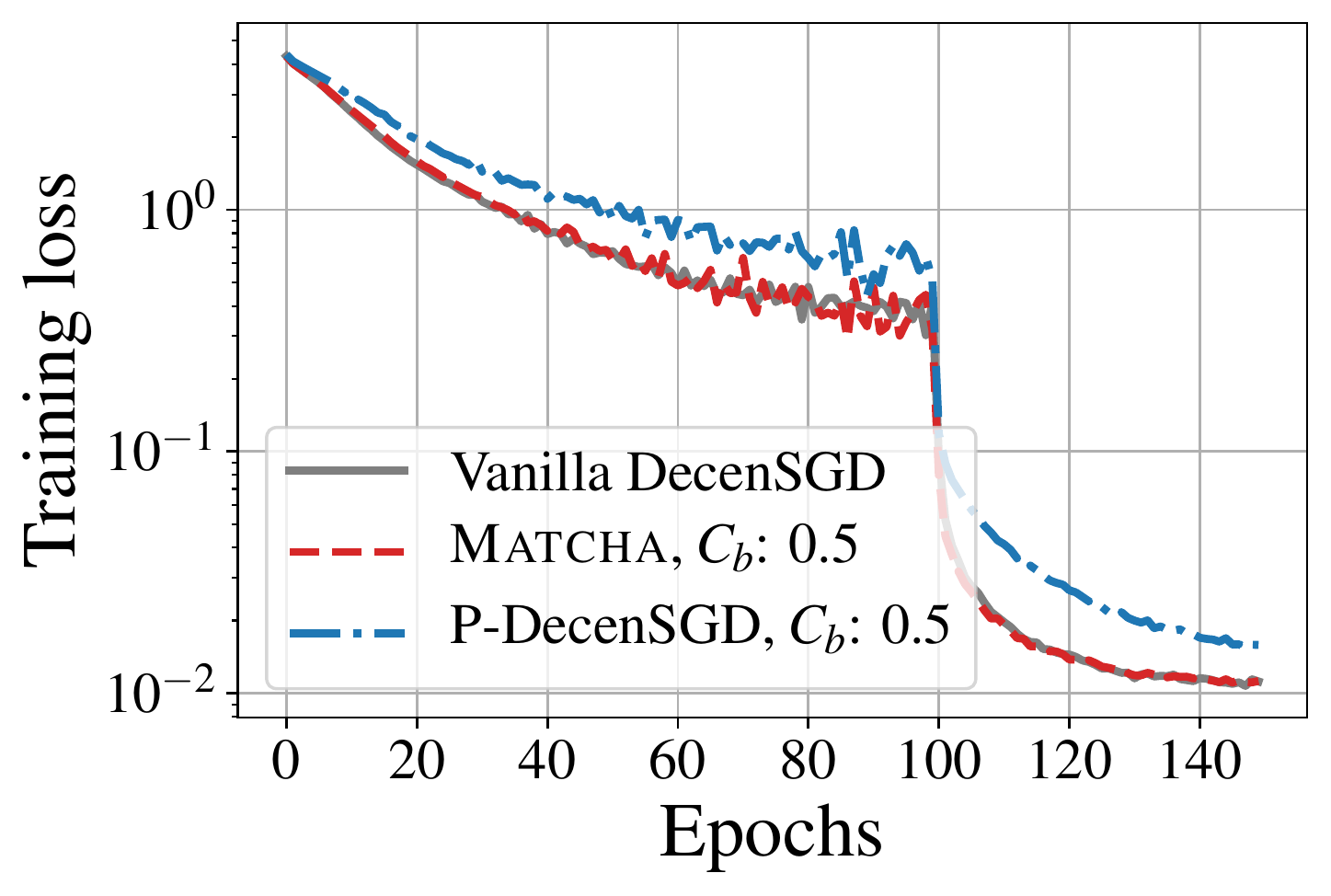}
    \caption{WideResNet on CIFAR-100.}
    \label{fig:c100_p0p1}
    \end{subfigure}%
    ~
    \begin{subfigure}{.3\textwidth}
    \centering
    \includegraphics[width=\textwidth]{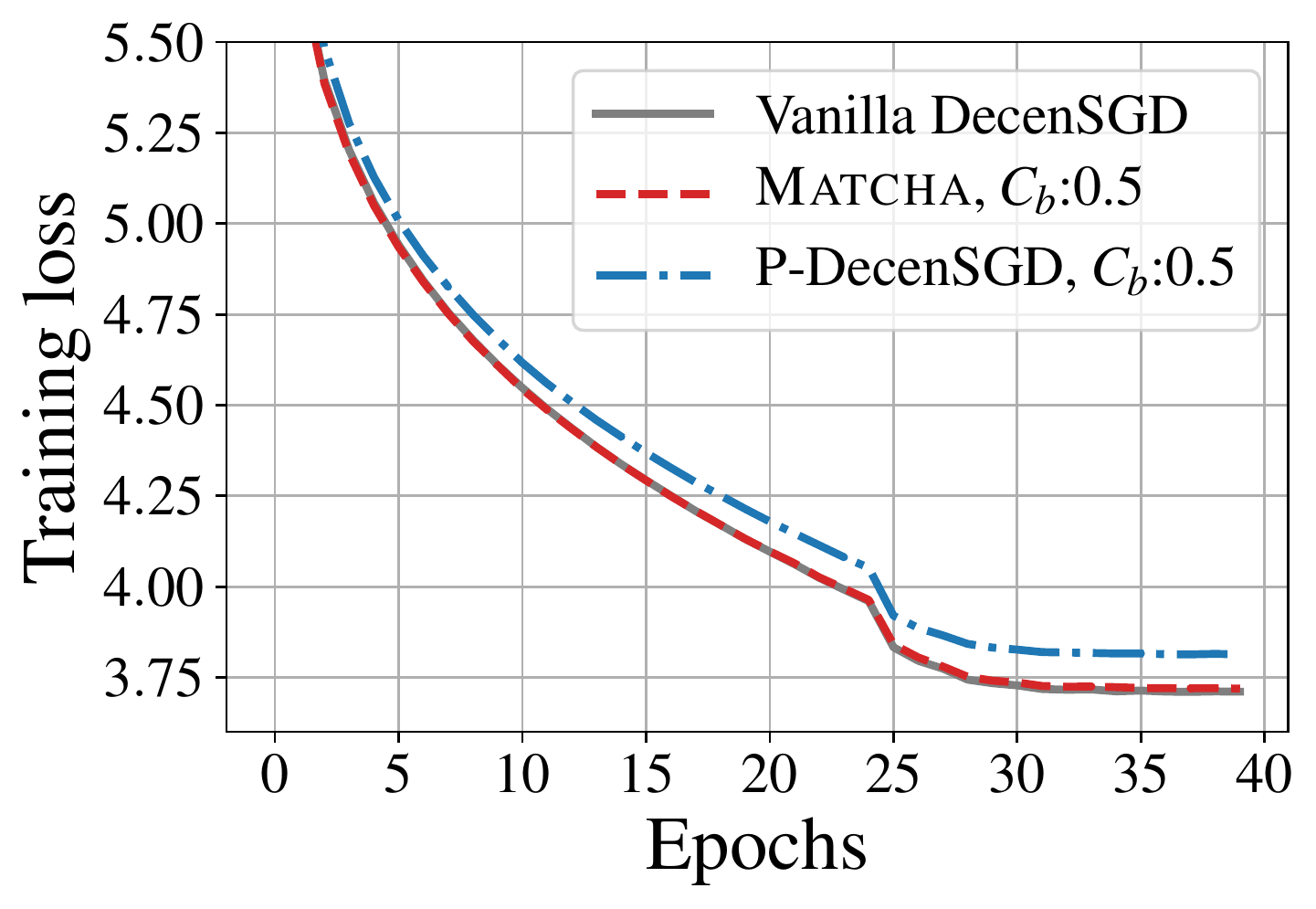}
    \caption{LSTM on Penn Treebank.}
    \label{fig:ptb_p0p1}
    \end{subfigure}%
    ~
    \begin{subfigure}{.3\textwidth}
    \centering
    \includegraphics[width=\textwidth]{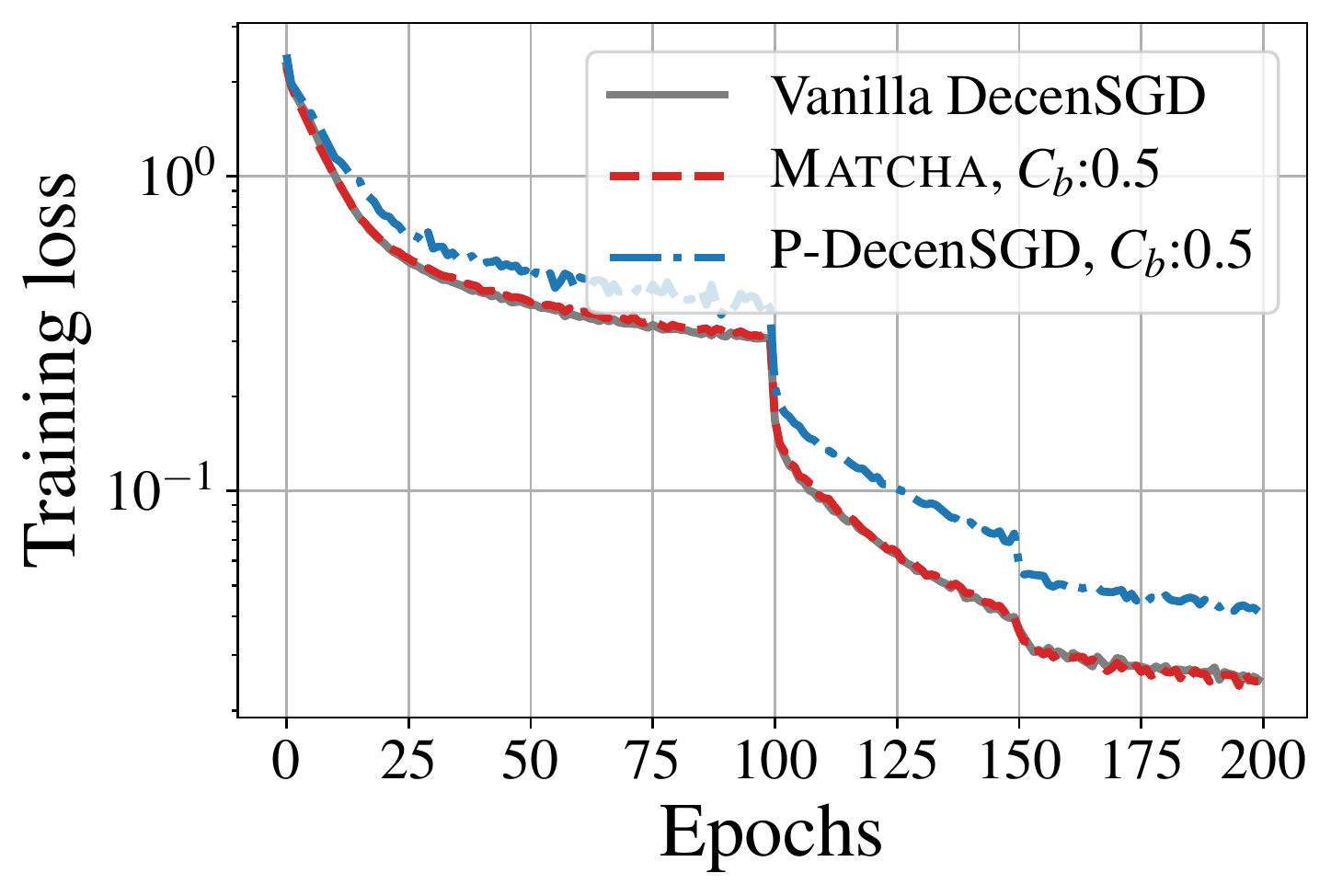}
    \caption{ResNet on CIFAR-10.}
    \label{fig:res_p0p1}
    \end{subfigure}%
    \caption{Comparision of {\algo} and P-DecenSGD. The base communication topology is given in Figure 1(a). While {\algo} has nearly identical error-convergence to vanilla DecenSGD, P-DecenSGD performs consistently worse in all tasks. Note that {\algo} and P-DecenSGD have the same average communication time per iteration.}
\end{figure}

\begin{figure}[!ht]
    \centering
    \begin{subfigure}{.3\textwidth}
    \centering
    \includegraphics[width=\textwidth]{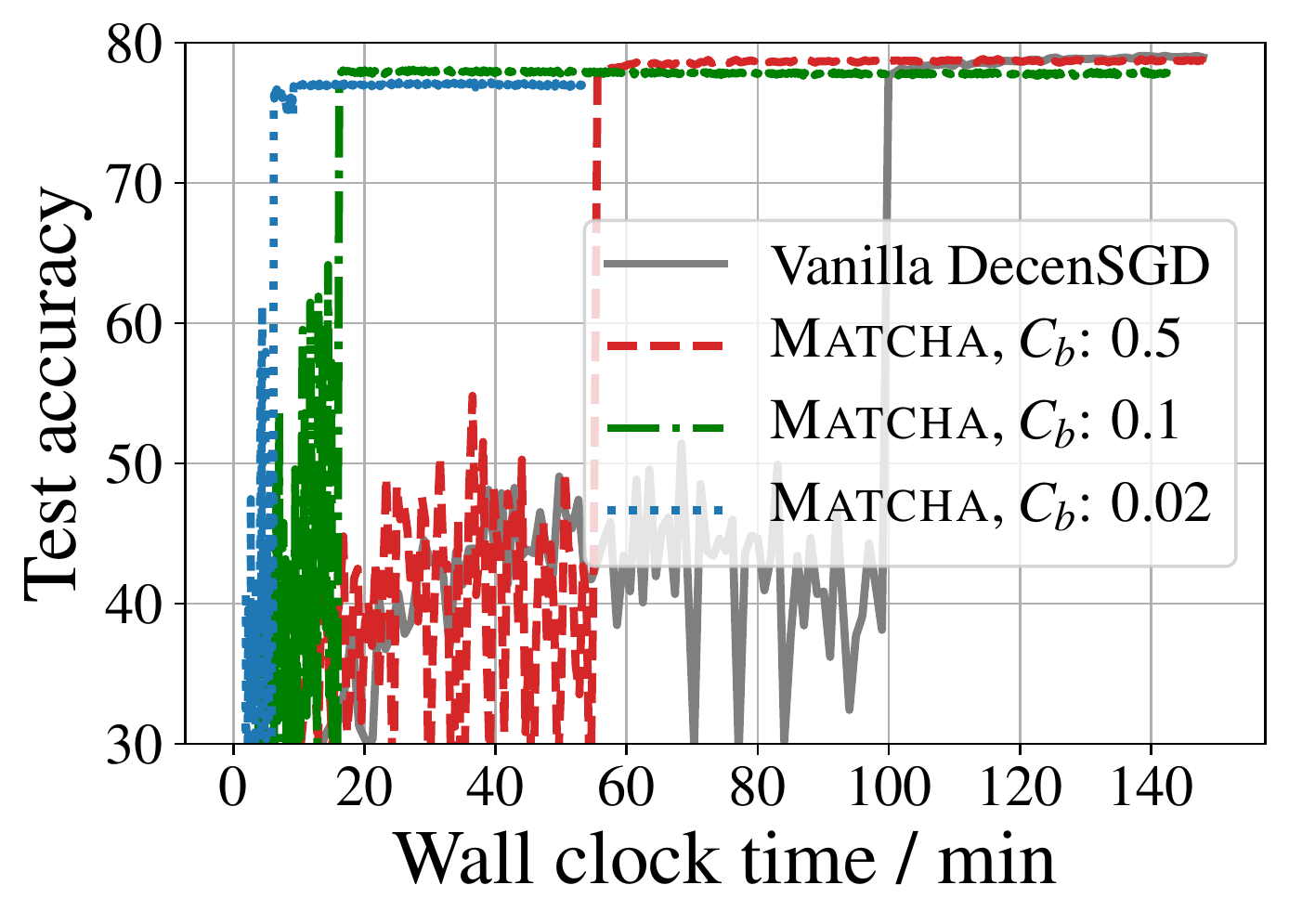}
    \caption{WideResNet on CIFAR-100.}
    \label{fig:acc_comp_c100}
    \end{subfigure}%
    ~
    \begin{subfigure}{.3\textwidth}
    \centering
    \includegraphics[width=\textwidth]{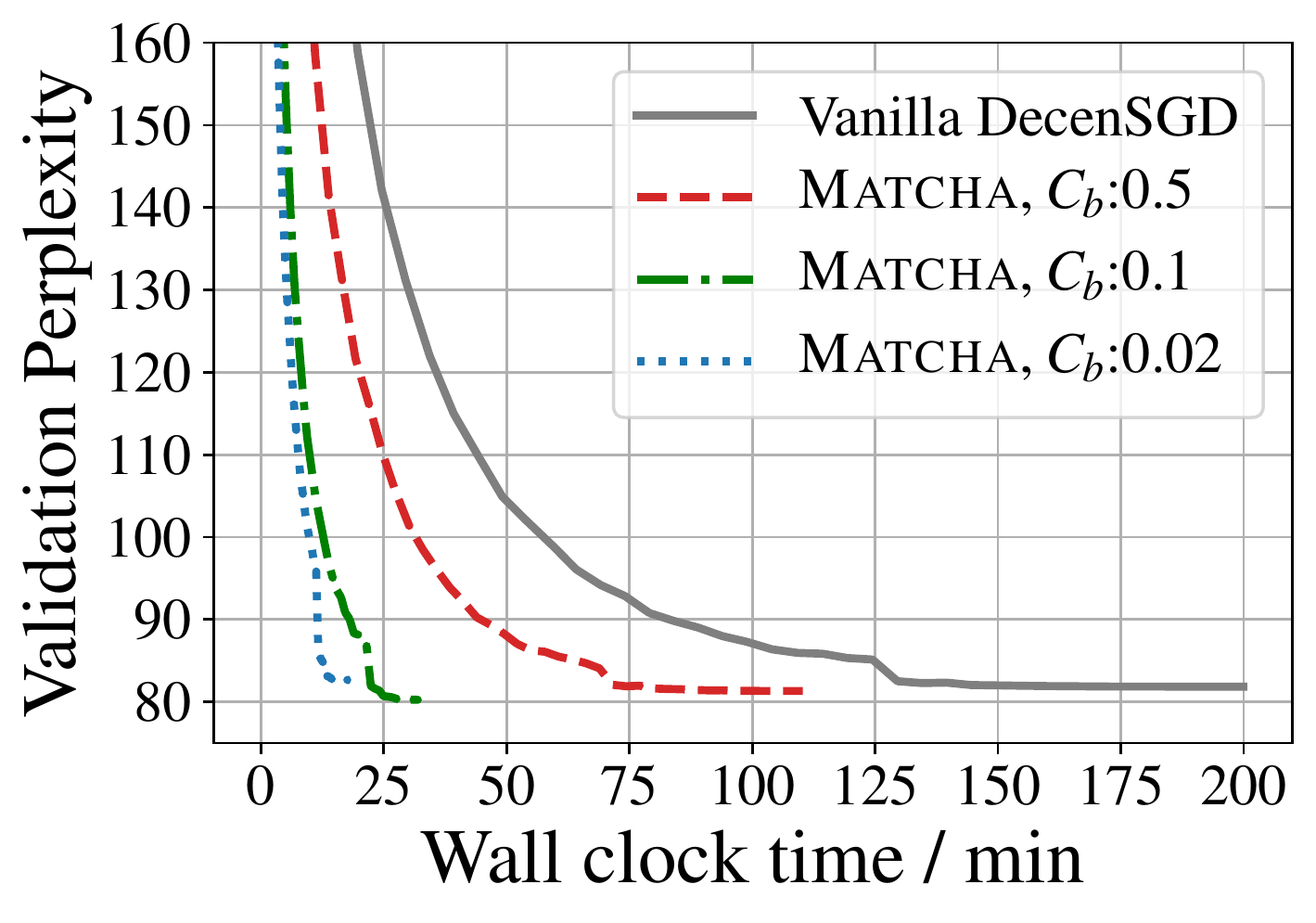}
    \caption{LSTM on Penn Treebank.}
    \label{fig:acc_comp_ptb}
    \end{subfigure}%
    ~
    \begin{subfigure}{.3\textwidth}
    \centering
    \includegraphics[width=\textwidth]{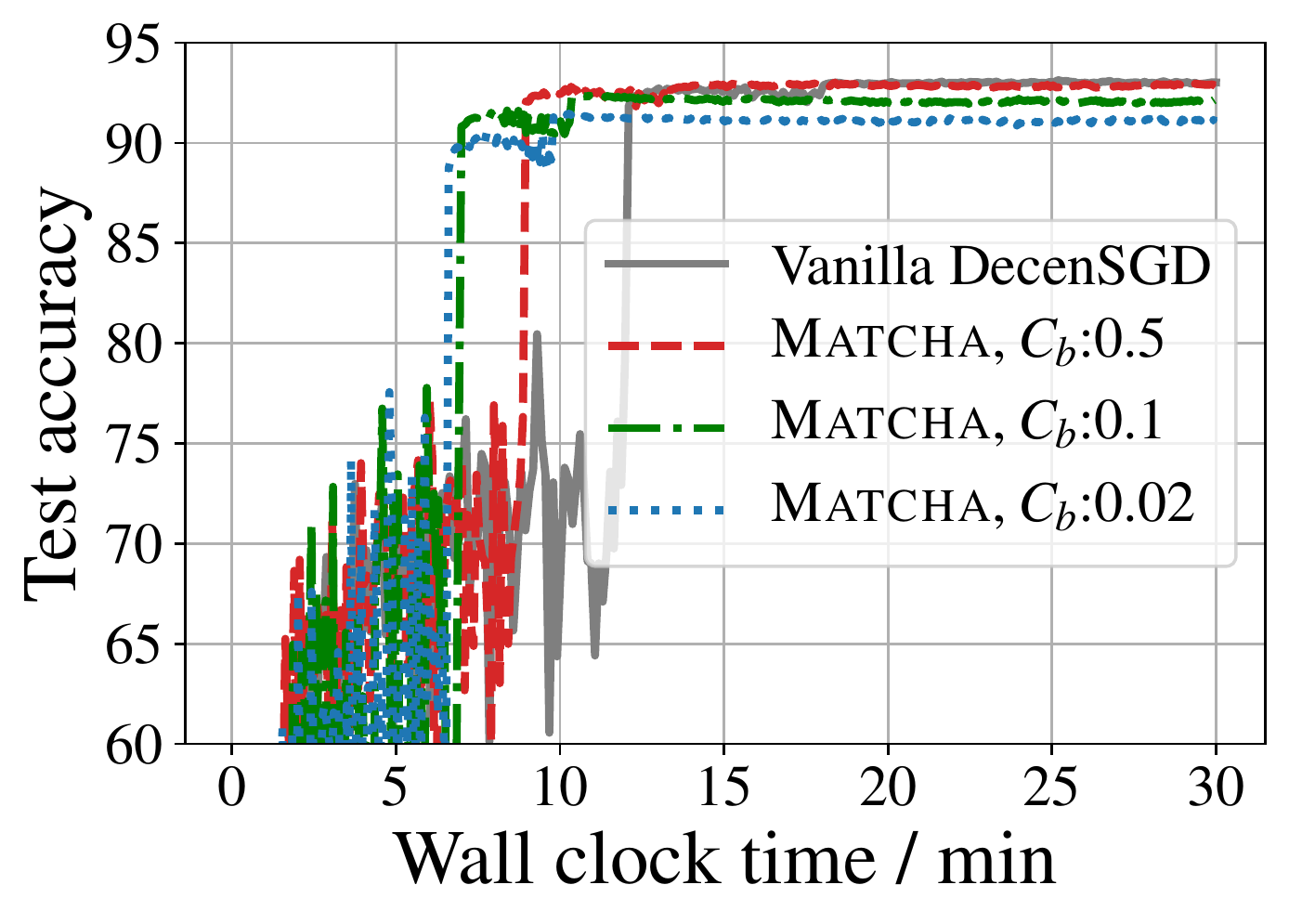}
    \caption{ResNet on CIFAR-10.}
    \label{fig:acc_comp_c10}
    \end{subfigure}%
    \caption{Test accuracy of {\algo} on different training tasks, corresponding to the training loss curves in Figure 4. The base communication topology is given in Figure 1(a). As predicted by the theoretical result Figure 1(c), {\algo} with $C_b=0.5$ consistently reaches the same test accuracy as {\dsgd}. When we further reduce the communication budget, the final test accuracy of {\algo} only slightly degrades.}
\end{figure}
\begin{figure}[ht!]
    \centering
    \begin{subfigure}{.3\textwidth}
    \centering
    \includegraphics[width=\textwidth]{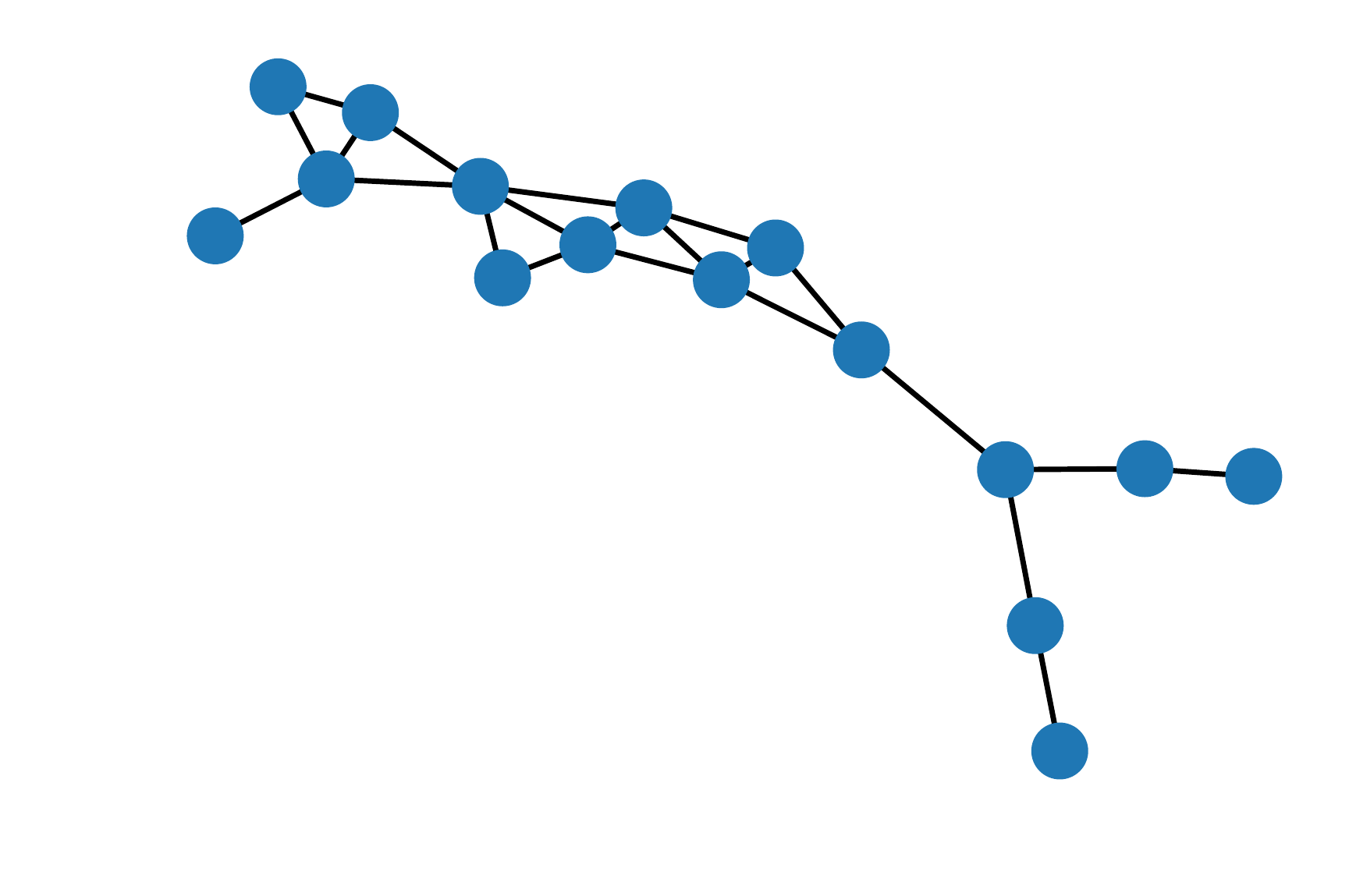}
    \caption{Maximal degree is $5$.}
    \label{fig:t1}
    \end{subfigure}%
    ~
    \begin{subfigure}{.3\textwidth}
    \centering
    \includegraphics[width=\textwidth]{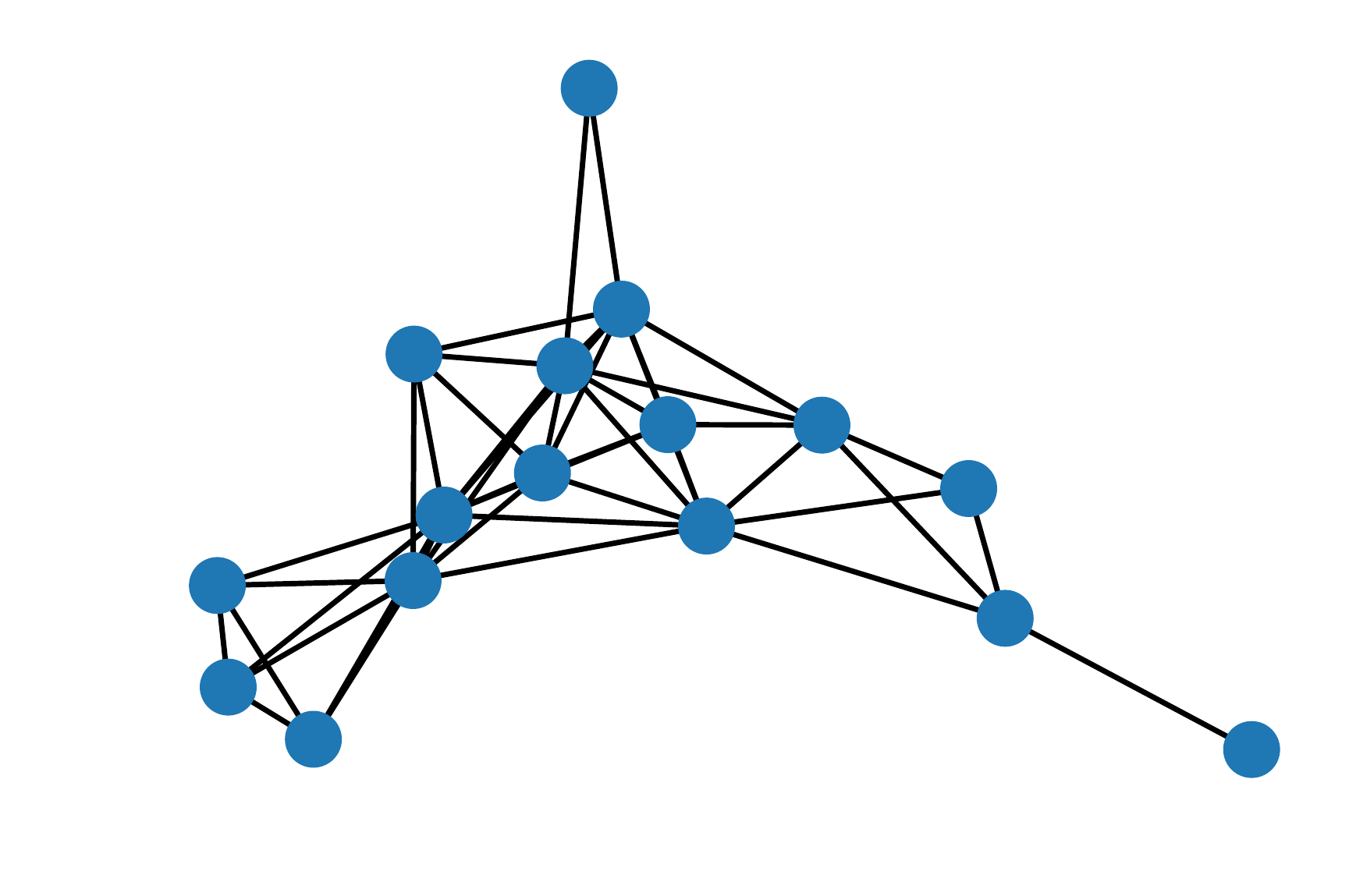}
    \caption{Maximal degree is $10$.}
    \label{fig:t2}
    \end{subfigure}%
    ~
    \begin{subfigure}{.3\textwidth}
    \centering
    \includegraphics[width=\textwidth]{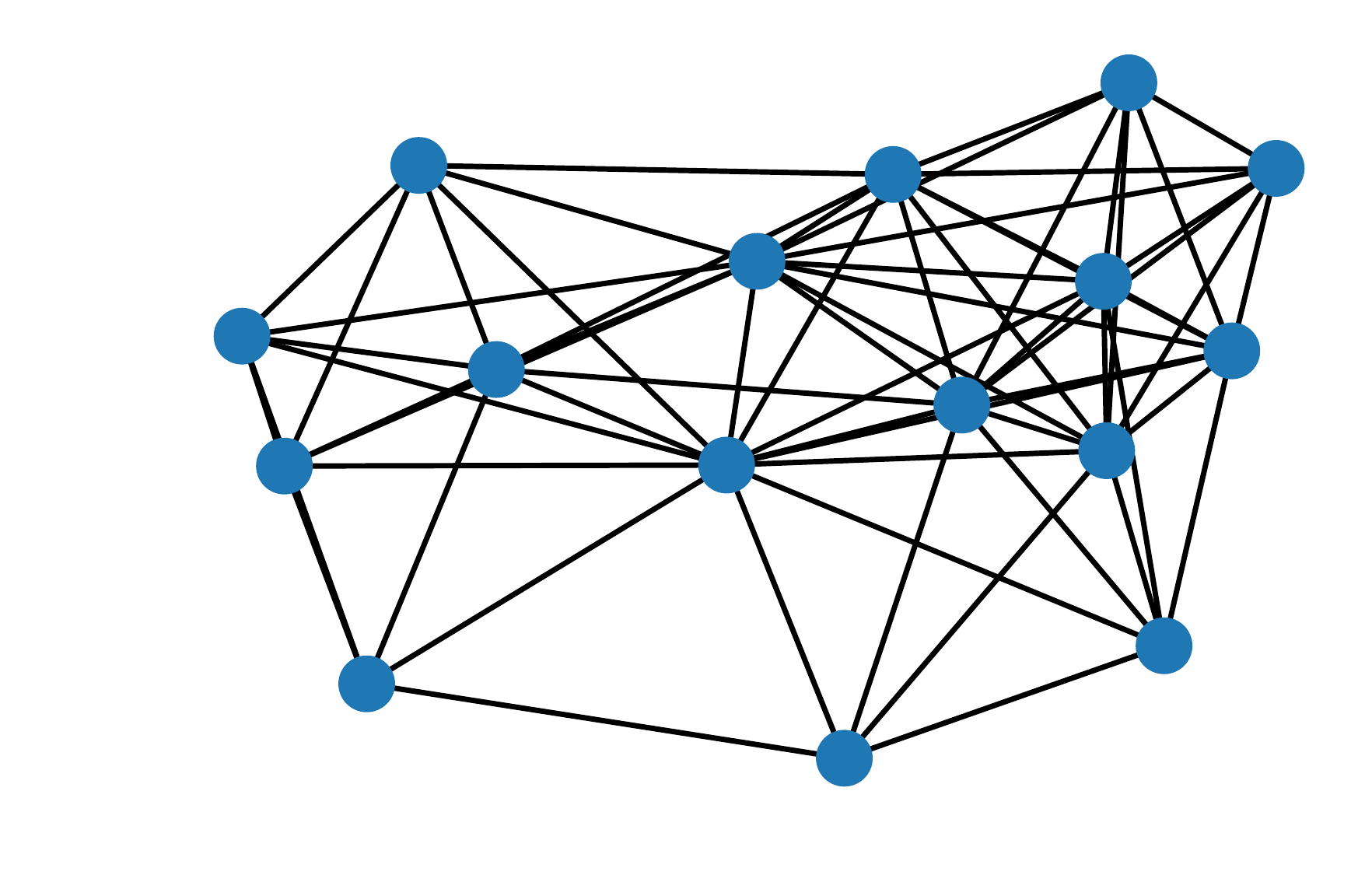}
    \caption{Maximal degree is $13$.}
    \label{fig:t3}
    \end{subfigure}%
    \caption{Different geometric topologies used in Figures 5.}
\end{figure}
\begin{figure}[ht!]
    \centering
    \begin{subfigure}{.3\textwidth}
    \centering
    \includegraphics[width=\textwidth]{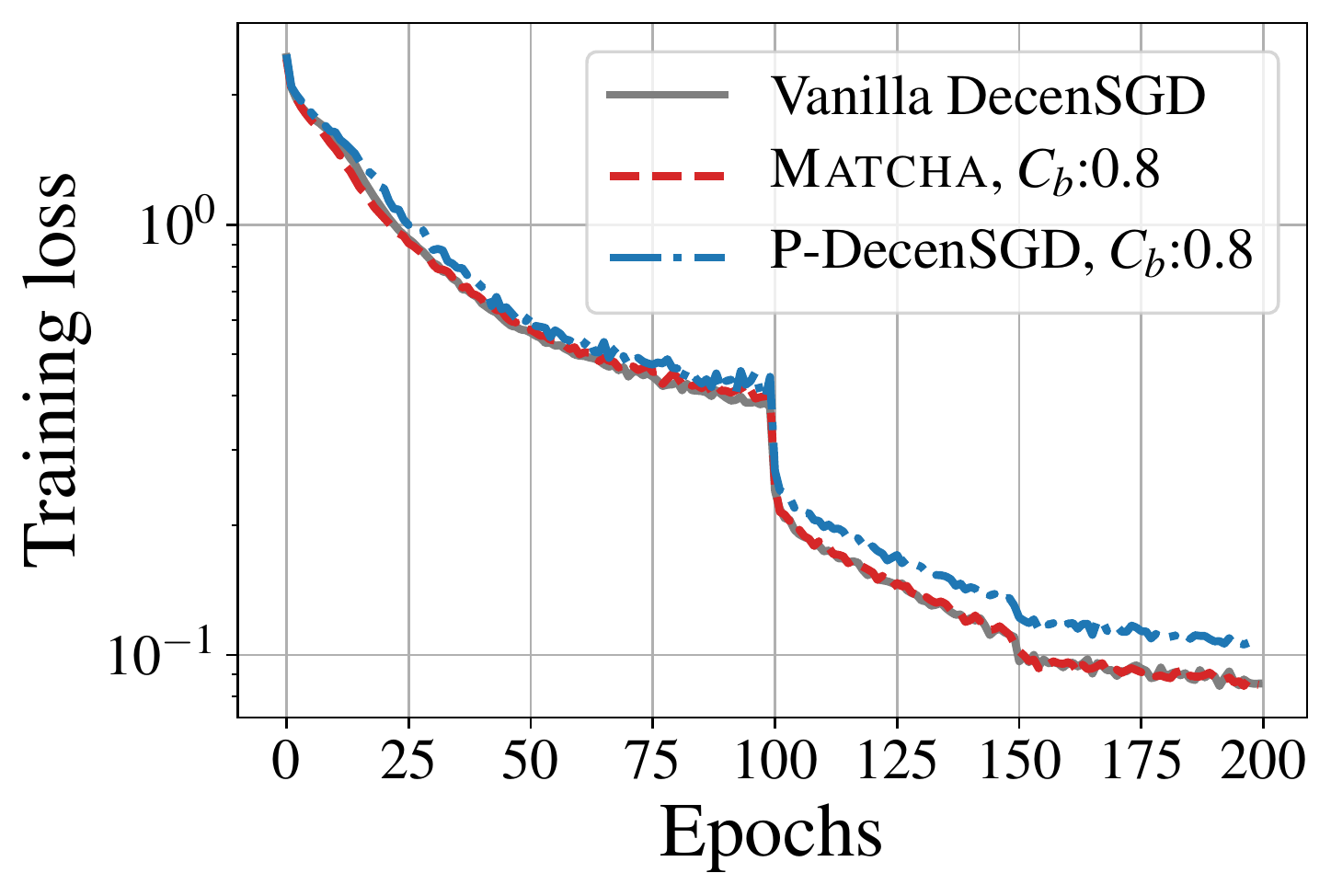}
    \caption{Maximal degree is $5$.}
    \end{subfigure}%
    ~
    \begin{subfigure}{.3\textwidth}
    \centering
    \includegraphics[width=\textwidth]{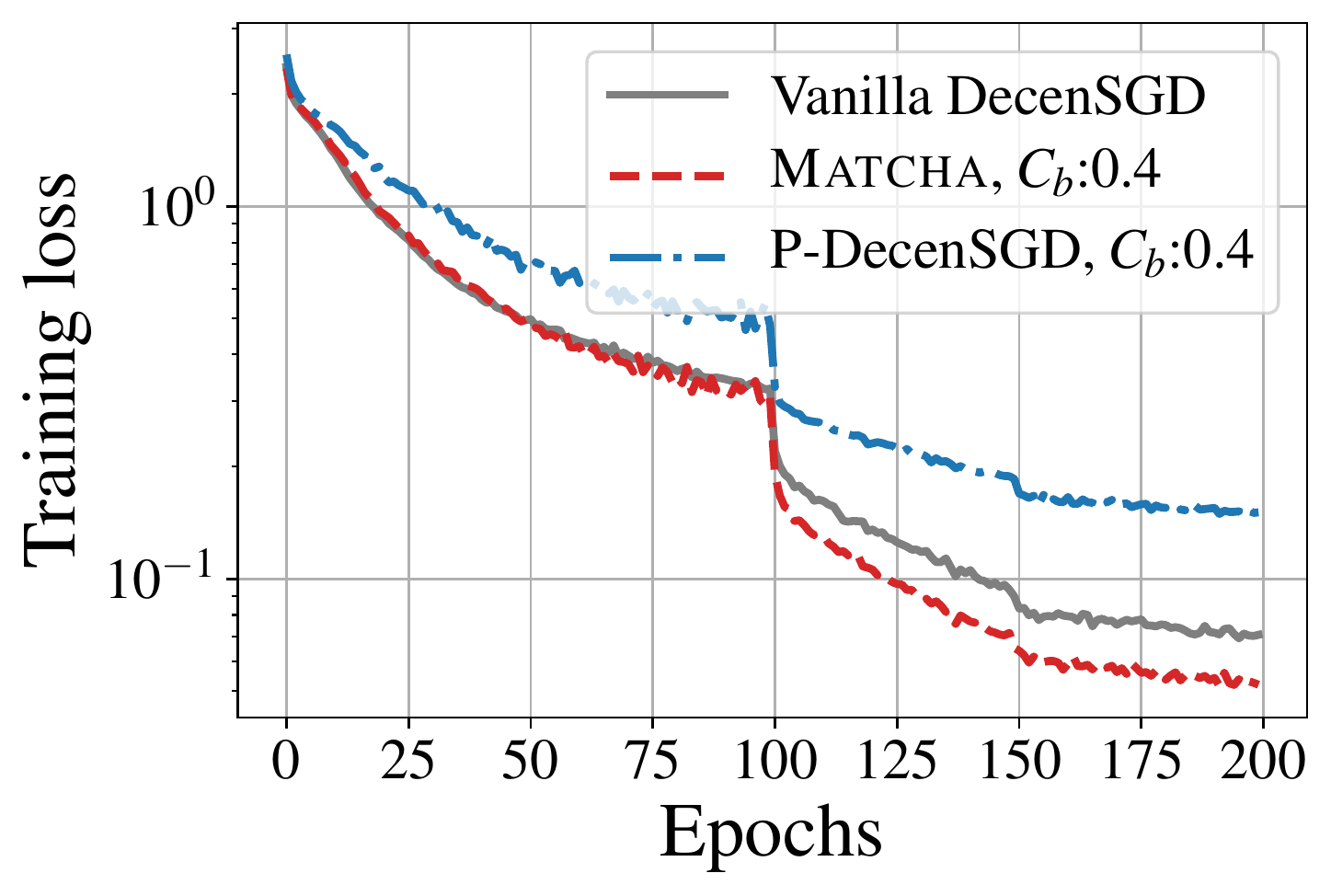}
    \caption{Maximal degree is $10$.}
    \end{subfigure}%
    ~
    \begin{subfigure}{.3\textwidth}
    \centering
    \includegraphics[width=\textwidth]{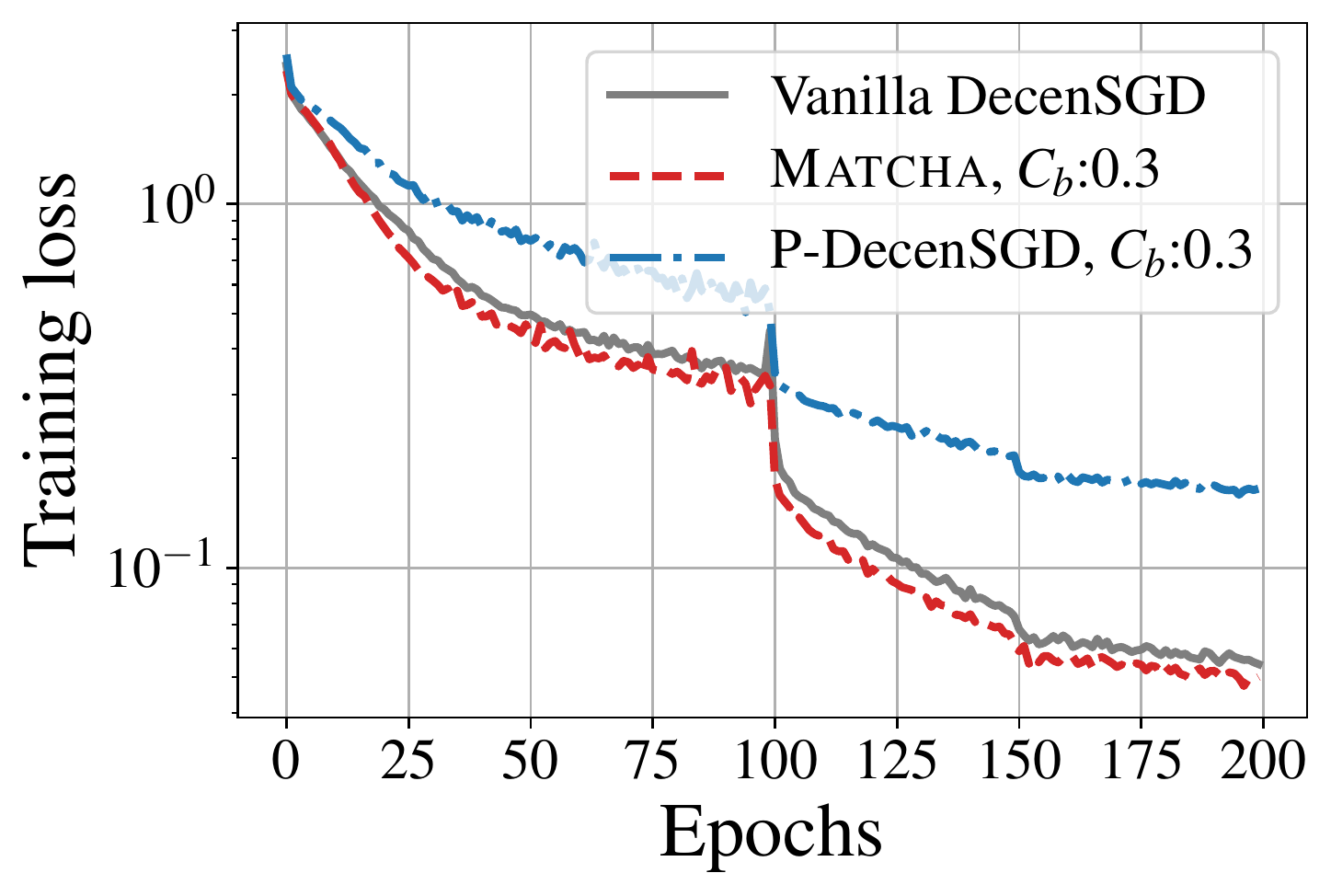}
    \caption{Maximal degree is $13$.}
    \end{subfigure}%
    \caption{Training loss versus epochs of {\algo} on different topologies with $16$ nodes. {\algo} can even have lower training loss than vanilla DecenSGD by setting a proper communication budget.}
\end{figure}

\section{Proof of Theorem 1}
The proof contains three parts: (1) we first show that the expected activated topology $\sum_{j=1}^{\nmatching} p_j \lap_j$  is connected, i.e., $\lambda_2(\sum_{j=1}^{\nmatching} p_j \lap_j) >0$; (2) then we will prove that if the expected topology is connected, then there must exist an $\alpha$ such that $\rho < 1$ for any arbitrary communication budget; (3) finally, we will show that $\alpha$ can be obtained via solving a semi-definite programming problem.

Recall that $\{p_j\}_{j=1}^{\nmatching}$ is the solution of convex optimization problem (5). Let $p_0 = C_b$, then we have
\begin{align}
    \lambda_2(\sum_{j=1}^{\nmatching} p_j \lap_j) \geq \lambda_2(p_0 \sum_{j=1}^{\nmatching} \lap_j) = p_0 \lambda_2(\sum_{j=1}^{\nmatching} \lap_j) >0.
\end{align}
The last inequality comes from the fact: the base communication topology is connected, i.e., $\lambda_2(\sum_{j=1}^{\nmatching} \lap_j) >0$. Here we complete the first part of the proof. Then, recall the definition of $\rho$ and $\mixmat^{(k)}$, we obtain
\begin{align}
    \opnorm{\Exs\brackets{\mixmat^{(k)\top} \mixmat^{(k)}} - \J}
    &= \opnorm{\Exs\brackets{\parenth{\I - \alpha \lap^{(k)}}\tp\parenth{\I - \alpha \lap^{(k)}}}-\J}\\
    &= \opnorm{\I - 2\alpha \Exs\brackets{\lap^{(k)}}+\alpha^2 \Exs\brackets{\lap^{(k)\top}\lap^{(k)}}-\J} \label{eqn:sp_bnd1}
\end{align}
where $\lap^{(k)} = \sum_{j=1}^{\nmatching} \actrv_j^{(k)} \lap_j$. Since $\actrv_j^{(k)}$'s are i.i.d. across all subgraphs and iterations,
\begin{align}
    \Exs\brackets{\lap^{(k)}}
    =& \sum_{j=1}^{\nmatching} p_j \lap_j \label{eqn:mean_lap}\\
    \Exs\brackets{\lap^{(k)\top}\lap^{(k)}}
    =& \sum_{j=1}^{\nmatching} p_j^2\lap_j^2 + \sum_{j=1}^{\nmatching}\sum_{t=1,t\neq j}^{\nmatching} p_j p_t\lap_j\tp \lap_t + \sum_{j=1}^{\nmatching} p_j(1-p_j)\lap_j^2 \\
    =& \parenth{\sum_{j=1}^{\nmatching} p_j\lap_j}^2 + \sum_{j=1}^{\nmatching} p_j(1-p_j)\lap_j^2 \\
    =& \parenth{\sum_{j=1}^{\nmatching} p_j\lap_j}^2 + 2\sum_{j=1}^{\nmatching} p_j(1-p_j)\lap_j. \label{eqn:var_lap}
\end{align}
Plugging \Cref{eqn:mean_lap,eqn:var_lap} back into \Cref{eqn:sp_bnd1}, we get
\begin{align}
    \opnorm{\Exs\brackets{\mixmat^{(k)\top} \mixmat^{(k)}} - \J}
    &= \opnorm{\parenth{\I - \alpha \sum_{j=1}^{\nmatching}p_j\lap_j}^2 + 2\alpha^2\sum_{j=1}^{\nmatching} p_j(1-p_j)\lap_j - \J} \\
    &\leq \opnorm{\parenth{\I - \alpha \sum_{j=1}^{\nmatching}p_j\lap_j}^2 - \J} + 2\alpha^2 \opnorm{\sum_{j=1}^{\nmatching} p_j(1-p_j)\lap_j} \\
    &= \max\{|1-\alpha \lambda_2|^2,|1-\alpha \lambda_m|^2\} + 2\alpha^2 \zeta
\end{align}
where $\lambda_i$ denotes the $i$-th smallest eigenvalue of matrix $\sum_{j=1}^{\nmatching} p_j\lap_j$ and $\zeta \geq 0$ denotes the spectral norm of matrix $\sum_{j=1}^{\nmatching} p_j(1-p_j)\lap_j$. Suppose $h_\lambda(\alpha) = (1-\alpha \lambda)^2 + 2\alpha^2 \zeta$. Then, we have
\begin{align}
    \frac{\partial h}{\partial \alpha} =& -2\lambda(1-\alpha\lambda) + 4\alpha\zeta, \\
    \frac{\partial^2 h}{\partial \alpha^2} =& 2\lambda^2 + 4\zeta >0.
\end{align}
Therefore, $h_\lambda(\alpha)$ is a convex funtion. By setting its derivative to zero, we can get the minimal value:
\begin{align}
    \alpha^* &= \frac{\lambda}{\lambda^2 + 2\zeta}, \\
    h_\lambda(\alpha^*) &= \frac{4\zeta^2}{(\lambda^2+2\zeta)^2} + \frac{2\lambda^2\zeta}{(\lambda^2+2\zeta)^2} = \frac{2\zeta}{\lambda^2 + 2\zeta}.
\end{align}
We already prove that $\lambda_m \geq \lambda_2 > 0$. Therefore, $\alpha^* > 0$ and $h_\lambda(\alpha^*) <1$.
Furthermore, note that $h_\lambda(0) = 1$ and $h_\lambda(\alpha)$ is a quadratic function. We can conclude that when $\alpha \in (0, 2\alpha^*)$, $h_\lambda(\alpha^*) \leq h_\lambda(\alpha) < 1$. Thus, when $\alpha \in (0, \min\{\frac{2\lambda_2}{\lambda_2^2+2\zeta},\frac{2\lambda_\nworkers}{\lambda_\nworkers^2+2\zeta}\})$, we have
\begin{align}
    \opnorm{\Exs\brackets{\mixmat^{(k)\top} \mixmat^{(k)}} - \J}
    \leq& \max\{h_{\lambda_2}(\alpha), h_{\lambda_\nworkers}(\alpha)\} < 1.
\end{align}

\subsection{Formulation of the Semi-Definite Programming Problem}
It has been shown that the spectral norm $\rho$ can be expanded as
\begin{align}
    \opnorm{\Exs\brackets{\mixmat^{(k)\top} \mixmat^{(k)}} - \J}
    &= \opnorm{\I - 2\alpha \sum_{j=1}^{\nmatching}p_j\lap_j + \alpha^2 \parenth{\sum_{j=1}^{\nmatching}p_j\lap_j}^2 + 2\alpha^2\sum_{j=1}^{\nmatching} p_j(1-p_j)\lap_j - \J} \\
    &= \opnorm{\I - 2\alpha \overline{\lap} + \alpha^2 \overline{\lap}^2 + 2\alpha^2\widetilde{\lap} - \J}
\end{align}
where $\overline{\lap} = \sum_{j=1}^\nmatching \lap_j$ and $\widetilde{\lap}=\sum_{j=1}^\nmatching p_j(1-p_j)\lap_j$. Our goal is to find a value of $\alpha$ that minize the spectral norm:
\begin{align}
    \min_\alpha \quad \opnorm{\I - 2\alpha \overline{\lap} + \alpha^2 [\overline{\lap}^2 + 2\widetilde{\lap}] - \J}
\end{align}
which is equivalent to 
\begin{align}\label{eqn:prev_opt_a}
\begin{split}
    \min_{\rho,\alpha} \quad &\rho \\
    \text{subject to} \quad & \I - 2\alpha\overline{\lap} + \alpha^2[\overline{\lap}^2+ 2\widetilde{\lap}] - \frac{1}{\nworkers}\one\one\tp \preceq \rho\I.
\end{split}
\end{align}
However, directly solving \Cref{eqn:prev_opt_a} is N-P hard as it has bilinear matrix inequality constraint. We relax the above optimization problem by introducing an auxiliary variable $\beta$ as follows:
\begin{align}\label{eqn:opt_a_1}
\begin{split}
    \min_{\rho,\alpha,\beta} \quad &\rho \\
    \text{subject to} \quad & \alpha^2 - \beta \leq 0,\ \I - 2\alpha\overline{\lap} + \beta[\overline{\lap}^2+ 2\widetilde{\lap}] - \frac{1}{\nworkers}\one\one\tp \preceq \rho\I.
\end{split}
\end{align}
Now the constraints become linear matrix inequality constraints and \Cref{eqn:opt_a_1} is the standard form of semi-definite programming. However, we need to further show that the solution of \Cref{eqn:opt_a_1} is same as \Cref{eqn:prev_opt_a}. We will prove this by contradiction. Suppose $\alpha_+, \beta_+, \rho_+$ are the solution of problem  \Cref{eqn:opt_a_1} and they satisfy $\alpha_+^2 < \beta_+$. Without loss of generality, we can simply assume $\beta_+ = \alpha_+^2 + c$, where c is a positive constant. Then, we have
\begin{align}
	\I - 2\alpha_+\overline{\lap} + (\alpha_+^2+c)[\overline{\lap}^2+ 2\widetilde{\lap}] - \frac{1}{\nworkers}\one\one\tp \preceq \rho_+\I.
\end{align}
Furthermore, according to the definitions of $\overline{\lap}$ and $\widetilde{\lap}$, both of these matrix are positive semi-definite and have positive largest eigenvalues. As a result, we can obtain
\begin{align}
	\I - 2\alpha_+\overline{\lap} + \alpha_+^2[\overline{\lap}^2+ 2\widetilde{\lap}] - \frac{1}{\nworkers}\one\one\tp \prec \I - 2\alpha_+\overline{\lap} + (\alpha_+^2+c)[\overline{\lap}^2+ 2\widetilde{\lap}] - \frac{1}{\nworkers}\one\one\tp \preceq \rho_+\I.
\end{align}
That is to say, there must exist $\rho_*$ such that 
\begin{align}
	\I - 2\alpha_+\overline{\lap} + \alpha_+^2[\overline{\lap}^2+ 2\widetilde{\lap}] - \frac{1}{\nworkers}\one\one\tp \preceq \rho_* \I \prec \rho_+\I.
\end{align}
So $\rho_+$ is not the optimal solution. Our assumptions cannot hold. The solutions of  \Cref{eqn:opt_a_1} must satisfy $\alpha^2=\beta$.

\subsection{Discussions on the Upper Bound of Spectral Norm}
Here, we are going to show that optimizing $p_i$'s via (5) is equivalent to minimizing an upper bound of the spectral norm $\rho$. Recall that 
\begin{align}
    \opnorm{\Exs\brackets{\mixmat^{(k)\top} \mixmat^{(k)}} - \J}
    =& \opnorm{\I - 2\alpha \overline{\lap} + \alpha^2 \overline{\lap}^2 + 2\alpha^2\widetilde{\lap} - \J} \\
    \leq& \opnorm{\I - 2\alpha \overline{\lap} - \J} + \alpha^2\opnorm{ \overline{\lap}^2} + 2\alpha^2\opnorm{\widetilde{\lap}} \\
    =& (1-2\alpha \lambda_2(\overline{\lap})) + \alpha^2\opnorm{ \overline{\lap}^2} + 2\alpha^2\opnorm{\widetilde{\lap}}.
\end{align}
According to the definition of $\overline{\lap}$ and $\widetilde{\lap}$ and repeatedly using Cauchy-Schwartz inequality and triangle inequality, we have
\begin{align}
    \opnorm{\overline{\lap}^2}
    &\leq \opnorm{\overline{\lap}}^2 
    \leq \opnorm{\sum_{j=1}^\nmatching p_j \lap_j}^2
    \leq (\sum_{j=1}^\nmatching p_j \opnorm{\lap_j})^2 
    \leq 4(\sum_{j=1}^\nmatching p_j)^2
    \leq 4 C_b^2, \\
    \opnorm{\widetilde{\lap}}
    &= \opnorm{\sum_{j=1}^\nmatching p_j(1-p_j) \lap_j}
    \leq \sum_{j=1}^\nmatching p_j(1-p_j) \opnorm{\lap_j} \leq 2\sum_{j=1}^\nmatching p_j(1-p_j) \leq 2\sum_{j=1}^\nmatching p_j \leq 2C_b.
\end{align}
As a result, one can get
\begin{align}
    \opnorm{\Exs\brackets{\mixmat^{(k)\top} \mixmat^{(k)}} - \J}
    \leq 1 - 2\alpha \lambda_2(\overline{\lap}) + 4\alpha^2C_b^2 + 4\alpha^2C_b. \label{eqn:rho_ub}
\end{align}
Therefore, maximizing $\lambda_2(\overline{\lap})$ is equivalent to minimizing the RHS in \eqref{eqn:rho_ub} --- an upper bound of the spectral norm $\rho$.

\section{Proofs of Theorem 2 and Corollary 1}
\subsection{Preliminaries}
In the proof, we will use the following matrix forms:
\begin{align}
    \X^{(k)} =& \brackets{\x_1^{(k)}, \x_2^{(k)},\dots,\x_m^{(k)}},\\
    \Sg^{(k)} =& \brackets{\sg_1(\x_1^{(k)}),\sg_2(\x_2^{(k)}),\dots, \sg_m(\x_m^{(k)})},\\
    \Tg^{(k)} =& \brackets{\tg_1(\x_1^{(k)}),\tg_2(\x_2^{(k)}),\dots,\tg_m(\x_m^{(k)})}.
\end{align}
Recall the assumptions we make:
\begin{align}
    \vecnorm{\tg_i(\x) - \tg_i(\mathbf{y})} \leq L \vecnorm{\x - \mathbf{y}}, \\
    \Exs\brackets{\sg_i(\x)|\x} = \tg_i(\x), \\
    \Exs\brackets{\vecnorm{\sg_i(\x) - \tg_i(\x)}^2|\x} \leq \vbnd,\\
    \frac{1}{\nworkers}\sum_{i=1}^\nworkers \vecnorm{\tg_i(\x)-\tg(\x)}^2\leq \zeta^2.
\end{align}
The matrix form update rule can be written as:
\begin{align}
    \X^{(k+1)} = \parenth{\X^{(k)} - \lr \Sg^{(k)}}\mixmat^{(k)}.
\end{align}
Multiplying a vector $\one/\nworkers$ on both sides, we have
\begin{align}
    \avgx^{(k+1)} = \avgx^{(k)} - \frac{\lr}{\nworkers}\Sg^{(k)}\one.
\end{align}

\subsection{Lemmas}
\begin{lem}\label{lem:contract}
Let $\{\mixmat^{(k)}\}_{k=1}^{\infty}$ be an i.i.d. symmetric and doubly stochastic matrices sequence. The size of each matrix is $\nworkers \times \nworkers$. Then, for any matrix $\mathbf{B} \in \mathbb{R}^{d\times \nworkers}$,
\begin{align}
    \Exs\brackets{\fronorm{\mathbf{B}\parenth{\prod_{l=1}^n \mixmat^{(l)} - \J}}^2} \leq \rho^n \fronorm{\mathbf{B}}^2
\end{align}
where $\rho := \sigma_\text{max}(\Exs[\mixmat^{(k)\top}\mixmat^{(k)}]-\J)$ = \opnorm{\Exs[\mixmat^{(k)\top}\mixmat^{(k)}]-\J}.
\end{lem}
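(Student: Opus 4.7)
The plan is to establish the contraction bound by induction on $n$, leveraging two structural facts: the projection identities $\J^2 = \J$ and $\mixmat^{(l)}\J = \J\mixmat^{(l)} = \J$ (which hold for any doubly stochastic $\mixmat^{(l)}$ since $\J = \one\one\tp/\nworkers$), together with the i.i.d.\ assumption on $\{\mixmat^{(l)}\}$, which will let me peel off the first factor and condition.

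As a preparatory step I would establish the algebraic identity
\begin{align*}
\textstyle\prod_{l=1}^n \mixmat^{(l)} - \J \;=\; \textstyle\prod_{l=1}^n \bigl(\mixmat^{(l)} - \J\bigr),
\end{align*}
by verifying the base case $(\mixmat^{(1)} - \J)(\mixmat^{(2)} - \J) = \mixmat^{(1)}\mixmat^{(2)} - \J - \J + \J = \mixmat^{(1)}\mixmat^{(2)} - \J$ and iterating. This rewriting is essential because it exposes the multiplicative structure needed by the induction.

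For the base case $n = 1$ I would expand the Frobenius norm as a trace and use the cyclic property:
\begin{align*}
\Exs\brackets{\fronorm{\mathbf{B}(\mixmat^{(1)} - \J)}^2} \;=\; \mathrm{tr}\bigl(\mathbf{B}\tp\mathbf{B}\,\Exs[(\mixmat^{(1)} - \J)(\mixmat^{(1)} - \J)\tp]\bigr).
\end{align*}
Expanding the inner expectation and using symmetry of $\mixmat^{(1)}$ together with $\mixmat^{(1)}\J = \J\mixmat^{(1)} = \J = \J^2$ reduces it to $\Exs[\mixmat^{(1)\top}\mixmat^{(1)}] - \J$, whose operator norm is exactly $\rho$ by definition. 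Then, since $\mathbf{B}\tp\mathbf{B}$ is PSD and the matrix $\Exs[\mixmat^{(1)\top}\mixmat^{(1)}] - \J$ is symmetric, I would invoke the standard trace inequality $\mathrm{tr}(AC) \leq \opnorm{C}\cdot\mathrm{tr}(A)$ (which follows by writing $A = UU\tp$ and bounding $u_i\tp C u_i \leq \opnorm{C}\|u_i\|^2$) to conclude the base case with bound $\rho\fronorm{\mathbf{B}}^2$.

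For the inductive step I would condition on $\mixmat^{(1)}$, set $\mathbf{B}' := \mathbf{B}(\mixmat^{(1)} - \J)$, and apply the inductive hypothesis to the product $\prod_{l=2}^n(\mixmat^{(l)} - \J)$, which is independent of $\mixmat^{(1)}$; this yields $\rho^{n-1}\fronorm{\mathbf{B}'}^2$ inside the conditional expectation. Taking the outer expectation and applying the base case completes the induction. The only subtle point I anticipate is the trace inequality: $\Exs[\mixmat^{(1)\top}\mixmat^{(1)}] - \J$ need not be PSD, so the argument relies on its symmetry rather than positivity. This step is standard once phrased carefully, and I do not expect further obstacles.
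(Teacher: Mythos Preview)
Your proposal is correct and follows essentially the same approach as the paper: establish the product identity $\prod_l \mixmat^{(l)} - \J = \prod_l(\mixmat^{(l)}-\J)$, then peel off one factor at a time using the Rayleigh-quotient/trace bound and the i.i.d.\ assumption. The only cosmetic differences are that the paper peels off the \emph{last} factor (conditioning on $\mixmat^{(n)}$) and writes the Frobenius norm as a sum over rows rather than a trace; also, your caution about $\Exs[\mixmat^{\top}\mixmat]-\J$ not being PSD is unnecessary, since $(\mixmat-\J)^\top(\mixmat-\J)=\mixmat^\top\mixmat-\J$ shows it is in fact PSD.
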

\begin{proof}
For the ease of writing, let us define $\mathbf{A}_{q,n} := \prod_{l=q}^n \mixmat^{(l)} - \J$ and use $\mathbf{b}_i\tp$ to denote the $i$-th row vector of $\mathbf{B}$. Since for all $k \in \mathbb{N}$, we have $\mixmat^{(k)\top} = \mixmat^{(k)}$ and $\mixmat^{(k)}\J = \J\mixmat^{(k)} = \J$. Thus, one can obtain
\begin{align}
    \mathbf{A}_{1,n} = \prod_{k=1}^n \parenth{\mixmat^{(k)}-\J} = \mathbf{A}_{1,n-1}\parenth{\mixmat^{(n)}-\J}.
\end{align}
Then, taking the expectation with respect to $\mixmat^{(n)}$,
\begin{align}
    \Exs_{\mixmat^{(n)}}\brackets{\fronorm{\mathbf{B}\mathbf{A}_{1,n}}^2}
    =& \sum_{i=1}^d \Exs_{\mixmat^{(n)}}\brackets{\vecnorm{\mathbf{b}_i\tp \mathbf{A}_{1,n}}^2} \\
    =& \sum_{i=1}^d \Exs_{\mixmat^{(n)}}\brackets{\mathbf{b}_i\tp \mathbf{A}_{1,n-1}(\mixmat^{(n)\top}\mixmat^{(n)} - \J)\mathbf{A}_{1,n-1}\tp\mathbf{b}_i} \\
    =& \sum_{i=1}^d \mathbf{b}_i\tp \mathbf{A}_{1,n-1}\Exs_{\mixmat^{(n)}}\brackets{(\mixmat^{(n)\top}\mixmat^{(n)} - \J)}\mathbf{A}_{1,n-1}\tp\mathbf{b}_i.
\end{align}
Let $\mathbf{C} = \Exs_{\mixmat^{(n)}}\brackets{(\mixmat^{(n)\top}\mixmat^{(n)}-\J)}$ and $\mathbf{v}_i = \mathbf{A}_{1,n-1}\tp\mathbf{b}_i$, then
\begin{align}
    \Exs_{\mixmat^{(n)}}\brackets{\fronorm{\mathbf{B}\mathbf{A}_{1,n}}^2}
    =& \sum_{i=1}^d \mathbf{v}_i\tp\mathbf{C}\mathbf{v}_i \\
    \leq& \sigma_\text{max}(\mathbf{C}) \sum_{i=1}^d \mathbf{v}_i\tp\mathbf{v}_i \\
    =& \rho \fronorm{\mathbf{B}\mathbf{A}_{1,n-1}}^2.
\end{align}
Repeat the following procedure, since $\mixmat^{(k)}$'s are i.i.d. matrices, we have
\begin{align}
    \Exs_{\mixmat^{(1)}}\dots \Exs_{\mixmat^{(n-1)}}\Exs_{\mixmat^{(n)}} \brackets{\fronorm{\mathbf{B}\mathbf{A}_{1,n}}^2}
    \leq \rho^n \fronorm{\mathbf{B}}^2.
\end{align}
Here, we complete the proof.
\end{proof}

\subsection{Proof of Theorem 1}
Since the objective function $F(\x)$ is Liptchitz smooth, it means that
\begin{align}
    F(\avgx^{(k+1)}) - F(\avgx^{(k)})
    \leq \inprod{\tg(\avgx^{(k)})}{\avgx^{(k+1)}-\avgx^{(k)}} + \frac{\lip}{2}\vecnorm{\avgx^{(k+1)}-\avgx^{(k)}}^2. 
\end{align}
Plugging into the update rule $\avgx^{(k+1)} = \avgx^{(k)} - \lr \Sg^{(k)}\one/\nworkers$, we have
\begin{align}
    F(\avgx^{(k+1)}) - F(\avgx^{(k)})
    \leq -\lr\inprod{\tg(\avgx^{(k)})}{\frac{\Sg^{(k)}\one}{\nworkers}} + \frac{\lr^2\lip}{2}\vecnorm{\frac{\Sg^{(k)}\one}{\nworkers}}^2.
\end{align}
Then, taking the expectation with respect to random mini-batches at $k$-th iteration,
\begin{align}
    \Exs_k \brackets{F(\avgx^{(k+1)})} - F(\avgx^{(k)})
    \leq& -\lr\inprod{\tg(\avgx^{(k)})}{\frac{\Tg^{(k)}\one}{\nworkers}} + \frac{\lr^2\lip}{2}\Exs_k\brackets{\vecnorm{\frac{\Sg^{(k)}\one}{\nworkers}}^2}.
    \label{eqn:init_bnd}
\end{align}
For the first term in \eqref{eqn:init_bnd}, since $2\inprod{a}{b} = \vecnorm{a}^2+\vecnorm{b}^2-\vecnorm{a-b}^2$, we have
\begin{align}
    \inprod{\tg(\avgx^{(k)})}{\frac{\Tg^{(k)}\one}{\nworkers}}
    =&  \inprod{\tg(\avgx^{(k)})}{\frac{1}{\nworkers}\sum_{i=1}^\nworkers\tg_i(\x_i^{(k)})} \\
    =& \frac{1}{2} \brackets{\vecnorm{\tg(\avgx^{(k)})}^2 + \vecnorm{\frac{1}{\nworkers}\sum_{i=1}^\nworkers\tg_i(\x_i^{(k)})}^2 - \vecnorm{\tg(\avgx^{(k)}) - \frac{1}{\nworkers}\sum_{i=1}^\nworkers\tg_i(\x_i^{(k)})}^2 }
    \label{eqn:init_bnd2}
\end{align}
Recall that $\tg(\avgx^{(k)}) = \frac{1}{\nworkers}\sum_{i=1}^\nworkers \tg_i(\avgx)$,
\begin{align}
    \vecnorm{\tg(\avgx^{(k)}) - \frac{1}{\nworkers}\sum_{i=1}^\nworkers\tg_i(\x_i^{(k)})}^2
    =& \vecnorm{\frac{1}{\nworkers}\sum_{i=1}^\nworkers\brackets{\tg_i(\avgx^{(k)}) - \tg_i(\x_i^{(k)})}}^2 \\
    \stackrel{\text{Jensen's Inequality}}{\leq}& \frac{1}{\nworkers}\sum_{i=1}^\nworkers \vecnorm{\tg_i(\avgx^{(k)}) - \tg_i(\x_i^{(k)})}^2 \\
    \leq& \frac{\lip^2}{\nworkers}\sum_{i=1}^\nworkers\vecnorm{\avgx^{(k)}-\x_i^{(k)}}^2
    \label{eqn:bias2}
\end{align}
where the last inequality follows the Lipschitz smooth assumption. Then, plugging \Cref{eqn:bias2} into \eqref{eqn:init_bnd2}, we obtain
\begin{align}
    \inprod{\tg(\avgx^{(k)})}{\frac{\Tg^{(k)}\one}{\nworkers}}
    \geq& \frac{1}{2} \vecnorm{\tg(\avgx^{(k)})}^2 +\frac{1}{2} \vecnorm{\frac{\Tg^{(k)}\one}{\nworkers}}^2 - \frac{\lip^2}{2\nworkers}\fronorm{\X^{(k)}(\I-\J)}^2.
    \label{eqn:init_part1}
\end{align}
Next, for the second part in \eqref{eqn:init_bnd},
\begin{align}
    \Exs_k\brackets{\vecnorm{\frac{1}{\nworkers}\sum_{i=1}^\nworkers \sg_i(\x_i^{(k)})}^2}
    =& \Exs_k\brackets{\frac{1}{\nworkers}\sum_{i=1}^\nworkers \brackets{\sg_i(\x_i^{(k)}) - \tg_i(\x_i^{(k)}) + \tg_i(\x_i^{(k)})}}^2 \\
    =& \frac{1}{\nworkers^2}\sum_{i=1}^\nworkers \Exs_k\brackets{\vecnorm{\sg_i(\x_i^{(k)}) - \tg_i(\x_i^{(k)})}^2} + \vecnorm{\frac{1}{\nworkers}\sum_{i=1}^\nworkers\tg_i(\x_i^{(k)})}^2 \\
    \leq& \frac{\vbnd}{\nworkers} + \vecnorm{\frac{\Tg^{(k)}\one}{\nworkers}}^2
    \label{eqn:v_bnd}
\end{align}
where the last inequality is according to the bounded variance assumption. Then, combining \Cref{eqn:init_part1,eqn:v_bnd} and taking the total expectation over all random variables, one can obtain:
\begin{align}
     \Exs \brackets{F(\avgx^{(k+1)}) - F(\avgx^{(k)})}
    \leq&   -\frac{\lr}{2}\Exs\brackets{\vecnorm{\tg(\avgx^{(k)})}^2} - \frac{\lr}{2}(1-\lr\lip)\Exs\brackets{\vecnorm{\frac{\Tg^{(k)}\one}{\nworkers}}^2} + \nonumber \\
        &\frac{\lr\lip^2}{2\nworkers}\Exs\brackets{\fronorm{\X^{(k)}(\I-\J)}^2} + \frac{\lr^2\lip\vbnd}{2\nworkers}.
\end{align}
Summing over all iterates and taking the average,
\begin{align}
    \frac{\Exs\brackets{F(\avgx^{K}) - F(\avgx^{(1)})}}{K}
    \leq& -\frac{\lr}{2}\frac{1}{K}\sum_{i=1}^K\Exs\brackets{\vecnorm{\tg(\avgx^{(k)})}^2} -\frac{\lr}{2}(1-\lr\lip)\frac{1}{K}\sum_{k=1}^K\Exs\brackets{\vecnorm{\frac{\Tg^{(k)}\one}{\nworkers}}^2} + \nonumber \\
        & \frac{\lr\lip^2}{2\nworkers K}\sum_{i=1}^K\Exs\brackets{\fronorm{\X^{(k)}(\I-\J)}^2} + \frac{\lr^2\lip\vbnd}{2\nworkers}.
\end{align}
By minor rearranging, we get
\begin{align}
    \frac{1}{K}\sum_{i=1}^K\Exs\brackets{\vecnorm{\tg(\avgx^{(k)})}^2}
    \leq& \frac{2\Exs\brackets{F(\avgx^{(1)})-F(\avgx^{(K)})}}{\lr K} - \frac{1-\lr \lip}{\nworkers}\frac{1}{K}\sum_{k=1}^K\Exs\brackets{\vecnorm{\frac{\Tg^{(k)}\one}{\nworkers}}^2} + \nonumber \\
        & \frac{\lip^2}{\nworkers K}\sum_{i=1}^K\Exs\brackets{\fronorm{\X^{(k)}(\I-\J)}^2} + \frac{\lr\lip\vbnd}{\nworkers} \\
    \leq& \frac{2\brackets{F(\avgx^{(1)})-F_{\text{inf}}}}{\lr K} - \frac{1-\lr \lip}{\nworkers}\frac{1}{K}\sum_{k=1}^K\Exs\brackets{\vecnorm{\frac{\Tg^{(k)}\one}{\nworkers}}^2} + \nonumber \\
        & \frac{\lip^2}{\nworkers K}\sum_{i=1}^K\Exs\brackets{\fronorm{\X^{(k)}(\I-\J)}^2} + \frac{\lr\lip\vbnd}{\nworkers}. \label{eqn:res1}
\end{align}

Now we complete the first part of the proof. Then, we're going to show that the discrepancies among local models $\Exs\brackets{\fronorm{\X^{(k)}(\I-\J)}^2}$ is upper bounded. According to the update rule of decentralized SGD and the special property of gossip matrix $\mixmat^{(k)} \J = \J \mixmat^{(k)} = \J$, we have
\begin{align}
    \X^{(k)}(\I - \J) 
    =& \parenth{\X^{(k-1)} - \lr \Sg^{(k-1)}}\mixmat^{(k-1)}(\I-\J) \\
    =& \X^{(k-1)}(\I-\J)\mixmat^{(k-1)} - \lr \Sg^{(k-1)}\mixmat^{(k-1)}(\I-\J) \\
    &\vdots \\
    =& \X^{(1)}(\I-\J)\prod_{q=1}^{k-1}\mixmat^{(q)} - \lr \sum_{q=1}^{k-1}\Sg^{(q)}\parenth{\prod_{l=q}^{k-1}\mixmat^{(l)}-\J}.
\end{align}
Since all local models are initiated at the same point, $\X^{(1)}(\I-\J) = 0$. Thus, we can obtain
\begin{align}
    \fronorm{\X^{(k)}(\I-\J)}^2
    =& \lr^2 \fronorm{\sum_{q=1}^{k-1}\Sg^{(q)}\parenth{\prod_{l=q}^{k-1}\mixmat^{(l)}-\J}}^2 \\
    =& \lr^2 \fronorm{\sum_{q=1}^{k-1}\parenth{\Sg^{(q)}-\Tg^{(q)} + \Tg^{(q)}}\parenth{\prod_{l=q}^{k-1}\mixmat^{(l)}-\J}}^2 \\
    \leq& 2 \lr^2 \underbrace{\fronorm{\sum_{q=1}^{k-1}\parenth{\Sg^{(q)}-\Tg^{(q)}}\parenth{\prod_{l=q}^{k-1}\mixmat^{(l)}-\J}}^2}_{T_1} + 2\lr^2\underbrace{  \fronorm{\sum_{q=1}^{k-1} \Tg^{(q)}\parenth{\prod_{l=q}^{k-1}\mixmat^{(l)}-\J}}^2}_{T_2}.
    \label{eqn:bnd_decomp}
\end{align}
For the first term $T_1$ in \eqref{eqn:bnd_decomp}, we have
\begin{align}
    \Exs\brackets{T_1}
    =& \sum_{q=1}^{k-1}\Exs\brackets{\fronorm{\parenth{\Sg^{(q)}-\Tg^{(q)}}\parenth{\prod_{l=q}^{k-1}\mixmat^{(l)}-\J}}^2} \\
    \leq& \sum_{q=1}^{k-1} \rho^{k-q}\Exs\brackets{\fronorm{\Sg^{(q)}-\Tg^{(q)}}^2} \label{eqn:t1_bnd1}\\
    \leq& \nworkers \vbnd \rho \parenth{1 + \rho + \rho^2 + \cdots + \rho^{k-2}} \\
    \leq& \frac{\nworkers\vbnd\rho}{1-\rho} \label{eqn:t1_res}
\end{align}
where \eqref{eqn:t1_bnd1} comes from \Cref{lem:contract}. For the second term $T_2$ in \eqref{eqn:bnd_decomp}, define $\mathbf{A}_{q,p} = \prod_{l=q}^{p}\mixmat^{(l)}-\J$. Then,
\begin{align}
    \Exs\brackets{T_2}
    =& \sum_{q=1}^{k-1}\Exs\brackets{\fronorm{\Tg^{(q)}\mathbf{A}_{q,k-1}}^2} + \sum_{q=1}^{k-1}\sum_{p=1,p\neq q}^{k-1}\Exs\brackets{\trace\{\mathbf{A}_{q,k-1}\tp \Tg^{(q)\top}\Tg^{(p)}\mathbf{A}_{p,k-1}\}} \\
    \leq& \sum_{q=1}^{k-1}\rho^{k-q}\Exs\brackets{\fronorm{\Tg^{(q)}}^2} + \sum_{q=1}^{k-1}\sum_{p=1,p\neq q}^{k-1}\Exs\brackets{\fronorm{\Tg^{(q)}\mathbf{A}_{q,k-1}}\fronorm{\Tg^{(p)}\mathbf{A}_{p,k-1}}} \\
    \leq& \sum_{q=1}^{k-1}\rho^{k-q}\Exs\brackets{\fronorm{\Tg^{(q)}}^2} + \sum_{q=1}^{k-1}\sum_{p=1,p\neq q}^{k-1}\Exs\brackets{\frac{1}{2\epsilon}\fronorm{\Tg^{(q)}\mathbf{A}_{q,k-1}}^2+\frac{\epsilon}{2}\fronorm{\Tg^{(p)}\mathbf{A}_{p,k-1}}^2} \label{eqn:t2_bnd1}\\
    \leq& \sum_{q=1}^{k-1}\rho^{k-q}\Exs\brackets{\fronorm{\Tg^{(q)}}^2} + \sum_{q=1}^{k-1}\sum_{p=1,p\neq q}^{k-1}\Exs\brackets{\frac{\rho^{k-q}}{2\epsilon}\fronorm{\Tg^{(q)}}^2+\frac{\rho^{k-p}\epsilon}{2}\fronorm{\Tg^{(p)}}^2} \label{eqn:t2_bnd2}
\end{align}
where \eqref{eqn:t2_bnd1} follows Young's Inequality: $2ab \leq a^2/\epsilon + \epsilon b^2, \forall \epsilon > 0$ and \eqref{eqn:t2_bnd2} follows \Cref{lem:contract}. Set $\epsilon = \rho^{\frac{p-q}{2}}$, then we have
\begin{align}
    \Exs\brackets{T_2}
    \leq& \sum_{q=1}^{k-1}\rho^{k-q}\Exs\brackets{\fronorm{\Tg^{(q)}}^2} +\frac{1}{2} \sum_{q=1}^{k-1}\sum_{p=1,p\neq q}^{k-1}\sqrt{\rho}^{2k-p-q}\cdot\Exs\brackets{\fronorm{\Tg^{(q)}}^2+\fronorm{\Tg^{(p)}}^2} \\
    =& \sum_{q=1}^{k-1}\rho^{k-q}\Exs\brackets{\fronorm{\Tg^{(q)}}^2} + \sum_{q=1}^{k-1}\brackets{\sqrt{\rho}^{k-q}\Exs\brackets{\fronorm{\Tg^{(q)}}^2} \cdot \sum_{p=1,p\neq q}^{k-1}\sqrt{\rho}^{k-p}} \\
    =& \sum_{q=1}^{k-1}\rho^{k-q}\Exs\brackets{\fronorm{\Tg^{(q)}}^2} + \sum_{q=1}^{k-1}\brackets{\sqrt{\rho}^{k-q}\Exs\brackets{\fronorm{\Tg^{(q)}}^2} \cdot\parenth{ \sum_{p=1}^{k-1}\sqrt{\rho}^{k-p}-\sqrt{\rho}^{k-q}}} \\
    \leq& \frac{\sqrt{\rho}}{1-\sqrt{\rho}} \sum_{q=1}^{k-1}\sqrt{\rho}^{k-q}\Exs\brackets{\fronorm{\Tg^{(q)}}^2}. \label{eqn:t2_res}
\end{align}
Combining \Cref{eqn:t1_res,eqn:t2_res} together,
\begin{align}
    \frac{1}{\nworkers K}\sum_{i=1}^K\Exs\brackets{\fronorm{\X^{(k)}(\I-\J)}^2}
    \leq& \frac{2\lr^2\vbnd \rho}{1-\rho} + \frac{2\lr^2}{\nworkers}\frac{\sqrt{\rho}}{1-\sqrt{\rho}}\frac{1}{K}\sum_{k=1}^K \sum_{q=1}^{k-1}\sqrt{\rho}^{k-q}\Exs\brackets{\fronorm{\Tg^{(q)}}^2} \\
    =& \frac{2\lr^2\vbnd \rho}{1-\rho} + \frac{2\lr^2}{\nworkers}\frac{\sqrt{\rho}}{1-\sqrt{\rho}}\frac{1}{K}\sum_{k=1}^K\brackets{ \Exs\brackets{\fronorm{\Tg^{(k)}}^2}\sum_{q=1}^{K-k}\sqrt{\rho}^{q}} \\
    \leq& \frac{2\lr^2\vbnd \rho}{1-\rho} + \frac{2\lr^2}{\nworkers}\frac{\sqrt{\rho}}{1-\sqrt{\rho}}\frac{1}{K}\sum_{k=1}^K\brackets{ \Exs\brackets{\fronorm{\Tg^{(k)}}^2}\frac{\sqrt{\rho}}{1-\sqrt{\rho}}} \\
    =& \frac{2\lr^2\vbnd \rho}{1-\rho} + \frac{2\lr^2}{\nworkers}\frac{\rho}{(1-\sqrt{\rho})^2}\frac{1}{K}\sum_{k=1}^K\Exs\brackets{\fronorm{\Tg^{(k)}}^2} \label{eqn:discrep}
\end{align}

Note that 
\begin{align}
    \fronorm{\Tg^{(k)}}^2
    =& \sum_{i=1}^\nworkers \vecnorm{\tg_i(\x_i^{(k)})}^2 \\
    =& \sum_{i=1}^\nworkers \vecnorm{\tg_i(\x_i^{(k)}) - \tg(\x_i^{(k)})+ \tg(\x_i^{(k)}) - \tg(\avgx^{(k)}) + \tg(\avgx^{(k)})}^2 \\
    \leq& 3\sum_{i=1}^\nworkers\brackets{\vecnorm{\tg_i(\x_i^{(k)}) - \tg(\x_i^{(k)})}^2 + \vecnorm{\tg(\x_i^{(k)}) - \tg(\avgx^{(k)})}^2 + \vecnorm{\tg(\avgx^{(k)})}^2} \\
    \leq& 3\nworkers\zeta^2 + 3\lip^2\sum_{i=1}^\nworkers\vecnorm{\x_i^{(k)} - \avgx^{(k)}}^2 + 3\nworkers\vecnorm{\tg(\avgx^{(k)})}^2 \\
    =& 3\nworkers\zeta^2 + 3\lip^2\fronorm{\X^{(k)}(\I-\J)}^2 + 3\nworkers\vecnorm{\tg(\avgx^{(k)})}^2. \label{eqn:cor2_bnd2}
\end{align}
Plugging \Cref{eqn:cor2_bnd2} back into \Cref{eqn:discrep}, we have
\begin{align}
    \frac{1}{\nworkers K}\sum_{i=1}^K\Exs\brackets{\fronorm{\X^{(k)}(\I-\J)}^2}
    \leq& \frac{2\lr^2\vbnd \rho}{1-\rho} + \frac{6\lr^2\zeta^2\rho}{(1-\sqrt{\rho})^2} + \frac{6\lr^2\lip^2\rho}{(1-\sqrt{\rho})^2}\frac{1}{\nworkers K}\sum_{i=1}^K\Exs\brackets{\fronorm{\X^{(k)}(\I-\J)}^2} +  \nonumber \\
        & \frac{6\lr^2\rho}{(1-\sqrt{\rho})^2}\frac{1}{K}\sum_{i=1}^K\Exs\brackets{\vecnorm{\tg(\avgx^{(k)})}^2}.
\end{align}
After minor rearranging, we get
\begin{align}
    \frac{1}{\nworkers K}\sum_{i=1}^K\Exs\brackets{\fronorm{\X^{(k)}(\I-\J)}^2}
    \leq& \frac{1}{1-D}\brackets{\frac{2\lr^2\vbnd \rho}{1-\rho} + \frac{6\lr^2\zeta^2\rho}{(1-\sqrt{\rho})^2} +\frac{6\lr^2\rho}{(1-\sqrt{\rho})^2}\frac{1}{K}\sum_{i=1}^K\Exs\brackets{\vecnorm{\tg(\avgx^{(k)})}^2}} \label{eqn:cor2_bnd3}
\end{align}
where $D = \frac{6\lr^2\lip^2\rho}{(1-\sqrt{\rho})^2}$. Then, plugging \Cref{eqn:cor2_bnd3} back into \Cref{eqn:res1}, we have
\begin{align}
    \frac{1}{K}\sum_{i=1}^K\Exs\brackets{\vecnorm{\tg(\avgx^{(k)})}^2}
    \leq& \frac{2[F(\avgx^{(1)})-F_{\text{inf}}]}{\lr K} + \frac{\lr\lip\vbnd}{\nworkers} + \frac{1}{1-D}\frac{2\lr^2\lip^2\vbnd\rho}{1-\rho} + \frac{D\zeta^2}{1-D} + \nonumber \\
        &\frac{D}{1-D} \frac{1}{K}\sum_{i=1}^K\Exs\brackets{\vecnorm{\tg(\avgx^{(k)})}^2}.
\end{align}
It follows that
\begin{align}
    \frac{1}{K}\sum_{i=1}^K\Exs\brackets{\vecnorm{\tg(\avgx^{(k)})}^2}
    \leq& \parenth{\frac{2[F(\avgx^{(1)})-F_{\text{inf}}]}{\lr K} + \frac{\lr\lip\vbnd}{\nworkers}}\frac{1-D}{1-2D} + \parenth{\frac{2\lr^2\lip^2\vbnd\rho}{1-\rho} + \frac{6\lr^2\lip^2\zeta^2\rho}{(1-\sqrt{\rho})^2}}\frac{1}{1-2D} \\
    =& \parenth{\frac{2[F(\avgx^{(1)})-F_{\text{inf}}]}{\lr K} + \frac{\lr\lip\vbnd}{\nworkers}}\frac{1-D}{1-2D} + \frac{2\lr^2\lip^2\rho}{1-\sqrt{\rho}}\parenth{\frac{\vbnd}{1+\sqrt{\rho}} + \frac{3\zeta^2}{1-\sqrt{\rho}}}\frac{1}{1-2D} \\
    \leq& \parenth{\frac{2[F(\avgx^{(1)})-F_{\text{inf}}]}{\lr K} + \frac{\lr\lip\vbnd}{\nworkers}}\frac{1}{1-2D} + \frac{2\lr^2\lip^2\rho}{1-\sqrt{\rho}}\parenth{\frac{\vbnd}{1+\sqrt{\rho}} + \frac{3\zeta^2}{1-\sqrt{\rho}}}\frac{1}{1-2D} \label{eqn:final}
\end{align}
Recall that we require that $\lr\lip \leq (1-\sqrt{\rho})/4\sqrt{\rho}$. Therefore,
\begin{align}
    D = \frac{6\lr^2\lip^2\rho}{(1-\sqrt{\rho})^2}
    \leq \frac{3}{8} < \frac{1}{2} \ \Rightarrow \frac{1}{1-2D} \leq 4.
\end{align}
Plugging the upper bound of $D$ into \eqref{eqn:final} and set $\lr = \sqrt{\frac{m}{K}}$, we have
\begin{align}
    \frac{1}{K}\sum_{i=1}^K\Exs\brackets{\vecnorm{\tg(\avgx^{(k)})}^2}
    \leq& \frac{8[F(\avgx^{(1)})-F_{\text{inf}}] + 4\lip\vbnd}{\sqrt{\nworkers K}} +  \frac{8\nworkers}{K}\frac{\lip^2\rho}{1-\sqrt{\rho}}\parenth{\frac{\vbnd}{1+\sqrt{\rho}} + \frac{3\zeta^2}{1-\sqrt{\rho}}} \\
    =& \mathcal{O}\parenth{\frac{1}{\sqrt{\nworkers K}}} + \mathcal{O}\parenth{\frac{\nworkers}{K}}.
\end{align}
Here we complete the proof.


\section{Spectral Graph Theory}
\noindent The inter-agent communication network is a simple\footnote{A graph is said to be simple if it is devoid of self loops and multiple edges.} undirected graph $\mathcal{G}=(\mathcal{V}, \mathcal{E})$, where $V$ denotes the set of agents or vertices with cardinality $|\mathcal{V}|=m$, and $\mathcal{E}$ the set of edges. If there exists an edge between agents $i$ and $j$, then $(i,j)\in E$. A path between agents $i$ and $j$ of length $n$ is a sequence ($i=p_{0},p_{1},\cdots,p_{n}=j)$ of vertices, such that $(p_{t}, p_{t+1})\in \mathcal{E}$, $0\le t \le n-1$. A graph is connected if there exists a path between all possible agent pairs.
The neighborhood of an agent $n$ is given by $\Omega_{n}=\{j \in \mathcal{V}|(n,j) \in \mathcal{E}\}$.
\noindent The degree of agent $n$ is given by $d_{n}=|\Omega_{n}|$. The structure of the graph is represented by the symmetric $m\times m$ adjacency matrix $\mathbf{A}=[A_{ij}]$, where $A_{ij}=1$ if $(i,j) \in E$, and $0$ otherwise. The degree matrix is given by the diagonal matrix $\mathbf{D}=diag(d_{1}\cdots d_{m})$. The graph Laplacian matrix is defined as $\mathbf{L}=\mathbf{D}-\mathbf{A}$.
\noindent The Laplacian is a positive semidefinite matrix, hence its eigenvalues can be ordered and represented as $0=\lambda_{1}(\mathbf{L})\le\lambda_{2}(\mathbf{L})\le \cdots \lambda_{m}(\mathbf{L})$.
\noindent Furthermore, a graph is connected if and only if $\lambda_{2}(\mathbf{L})>0$

\end{document}